\def\eqref#1{equation~\ref{#1}}
\def\1{\bm{1}}
\def\rvb{{\mathbf{b}}}
\def\rvs{{\mathbf{s}}}
\def\rmI{{\mathbf{I}}}
\def\vmu{{\bm{\mu}}}
\def\vtheta{{\bm{\theta}}}
\def\vphi{{\bm{\phi}}}
\def\vsigma{{\bm{\sigma}}}
\def\vSigma{{\bm{\Sigma}}}
\def\va{{\bm{a}}}
\def\vx{{\mathbf{x}}}
\def\vtheta{{\bm{\theta}}}
\DeclareMathAlphabet{\mathsfit}{\encodingdefault}{\sfdefault}{m}{sl}
\SetMathAlphabet{\mathsfit}{bold}{\encodingdefault}{\sfdefault}{bx}{n}
\def\gM{{\mathcal{M}}}
\def\gN{{\mathcal{N}}}
\newcommand{\diffvar}{\mathbf{z}}
\newcommand{\vxi}{\bm{\xi}}
\newcommand{\x}{\mathbf{x}}
\newcommand{\z}{\mathbf{z}}
\newcommand{\TV}{\mathrm{TV}}
\newcommand{\ratio}{r}
\newcommand{\vxit}{\bm{\xi}^\star}
\newcommand{\pt}{p_\text{train}}
\renewcommand{\mid}{\,|\,} %
\newcommand{\diff}{\,\mathrm{d}}
\newcommand{\tableposterior}{
\begin{table}
\caption{Posterior inference ($\vtheta$). Mean \textcolor{gray}{(standard dev.)} over 5 independent training runs (10 random target priors $\times$ 10 simulated datasets). Significantly best results (Wilcoxon signed-rank test) in bold.} \label{tab:uniform}
\centering
\scalebox{0.67}{
\begin{tabular}{ccccccccccc}
\toprule
\multicolumn{2}{c}{} & \multicolumn{3}{c}{$q_\text{mild}(\vtheta)$} & \multicolumn{3}{c}{$q_\text{strong}(\vtheta)$} & \multicolumn{3}{c}{$q_\text{mixture}(\vtheta)$}\\
\cmidrule(lr){3-11}
    &             & RMSE & C2ST & MMTV & RMSE & C2ST & MMTV & RMSE & C2ST & MMTV \\ \cmidrule(lr){3-11}

\multirow{3}{*}{\shortstack{Two \\ Moons}} 
    & $\text{Simformer}$ 
    & 0.39\textcolor{gray}{(0.19)} & 0.56\textcolor{gray}{(0.05)} & 0.21\textcolor{gray}{(0.10)}
    
    & 0.39\textcolor{gray}{(0.20)} & 0.75\textcolor{gray}{(0.06)} & 0.54\textcolor{gray}{(0.13)}
    
    & 0.33\textcolor{gray}{(0.22)} & 0.68\textcolor{gray}{(0.08)} & 0.42\textcolor{gray}{(0.17)} \\  
    
    & $\text{ACE}$ 
    & \textbf{0.34}\textcolor{gray}{(0.16)} & 0.83\textcolor{gray}{(0.04)} & 0.23\textcolor{gray}{(0.07)} 
    & \textbf{0.09}\textcolor{gray}{(0.03)} & 0.79\textcolor{gray}{(0.05)} & 0.35\textcolor{gray}{(0.12)} 
    & \textbf{0.16}\textcolor{gray}{(0.15)} & 0.80\textcolor{gray}{(0.07)} & 0.34\textcolor{gray}{(0.14)} \\

    & $\text{PriorGuide}$ 
    & 0.37\textcolor{gray}{(0.16)} & \textbf{0.52}\textcolor{gray}{(0.02)} & \textbf{0.11}\textcolor{gray}{(0.04)}
    & \textbf{0.09}\textcolor{gray}{(0.04)} & \textbf{0.52}\textcolor{gray}{(0.02)} & \textbf{0.08}\textcolor{gray}{(0.02)}
    & 0.20\textcolor{gray}{(0.17)} & \textbf{0.55}\textcolor{gray}{(0.05)} & \textbf{0.16}\textcolor{gray}{(0.11)} \\

\midrule

\multirow{3}{*}{OUP} 
    & $\text{Simformer}$ 
    & 0.18\textcolor{gray}{(0.07)} & 0.58\textcolor{gray}{(0.08)} & 0.18\textcolor{gray}{(0.11)}
    
    & 0.24\textcolor{gray}{(0.08)} & 0.73\textcolor{gray}{(0.08)} & 0.37\textcolor{gray}{(0.10)}
    
    & 0.23\textcolor{gray}{(0.09)} & 0.70\textcolor{gray}{(0.08)} & 0.33\textcolor{gray}{(0.11)} \\  
    
    & $\text{ACE}$ 
    & \textbf{0.17}\textcolor{gray}{(0.07)} & 0.58\textcolor{gray}{(0.03)} & 0.11\textcolor{gray}{(0.05)}
    
    & 0.14\textcolor{gray}{(0.04)} & 0.56\textcolor{gray}{(0.03)} & 0.12\textcolor{gray}{(0.05)}
    
    & 0.17\textcolor{gray}{(0.07)} & 0.62\textcolor{gray}{(0.08)} & 0.20\textcolor{gray}{(0.11)} \\

    & $\text{PriorGuide}$ 
    & \textbf{0.17}\textcolor{gray}{(0.06)} & \textbf{0.52}\textcolor{gray}{(0.02)} & \textbf{0.08}\textcolor{gray}{(0.04)}
    
    & \textbf{0.13}\textcolor{gray}{(0.04)} & \textbf{0.51}\textcolor{gray}{(0.02)} & \textbf{0.06}\textcolor{gray}{(0.02)}
    
    & \textbf{0.15}\textcolor{gray}{(0.06)} & \textbf{0.51}\textcolor{gray}{(0.02)} & \textbf{0.07}\textcolor{gray}{(0.04)} \\

\midrule
\multirow{3}{*}{Turin} 
    & $\text{Simformer}$ 
    & 0.22\textcolor{gray}{(0.03)} & 0.80\textcolor{gray}{(0.03)} & 0.31\textcolor{gray}{(0.04)}
    
    & 0.23\textcolor{gray}{(0.03)} & 0.95\textcolor{gray}{(0.02)} & 0.56\textcolor{gray}{(0.07)}
    
    & 0.23\textcolor{gray}{(0.03)} & 0.94\textcolor{gray}{(0.02)} & 0.50\textcolor{gray}{(0.06)} \\  
    
    & $\text{ACE}$ 
    & 0.21\textcolor{gray}{(0.02)} & 0.78\textcolor{gray}{(0.03)} & 0.27\textcolor{gray}{(0.04)}
    
    & 0.17\textcolor{gray}{(0.02)} & 0.92\textcolor{gray}{(0.04)} & 0.47\textcolor{gray}{(0.07)}
    
    & 0.19\textcolor{gray}{(0.03)} & 0.91\textcolor{gray}{(0.03)} & 0.44\textcolor{gray}{(0.06)} \\

    & $\text{PriorGuide}$ 
    & \textbf{0.14}\textcolor{gray}{(0.02)} & \textbf{0.64}\textcolor{gray}{(0.06)} & \textbf{0.13}\textcolor{gray}{(0.03)}
    
    & \textbf{0.06}\textcolor{gray}{(0.01)} & \textbf{0.55}\textcolor{gray}{(0.04)} & \textbf{0.08}\textcolor{gray}{(0.03)}
    
    & \textbf{0.13}\textcolor{gray}{(0.06)} & \textbf{0.62}\textcolor{gray}{(0.08)} & \textbf{0.19}\textcolor{gray}{(0.12)} \\

\midrule
\multirow{3}{*}{\shortstack{Gaussian\\Linear 10D}} 
    & $\text{Simformer}$ 
    & 0.29\textcolor{gray}{(0.02)} & 0.89\textcolor{gray}{(0.02)} & 0.30\textcolor{gray}{(0.03)}
    
    & 0.31\textcolor{gray}{(0.03)} & 1.00\textcolor{gray}{(0.00)} & 0.65\textcolor{gray}{(0.03)}
    
    & 0.30\textcolor{gray}{(0.03)} & 0.99\textcolor{gray}{(0.00)} & 0.61\textcolor{gray}{(0.05)} \\  
    
    & $\text{ACE}$ 
    & \textbf{0.22}\textcolor{gray}{(0.02)} & 0.67\textcolor{gray}{(0.04)} & 0.12\textcolor{gray}{(0.02)} 
    & \textbf{0.10}\textcolor{gray}{(0.01)} & 0.78\textcolor{gray}{(0.06)} & 0.19\textcolor{gray}{(0.04)} 
    & \textbf{0.19}\textcolor{gray}{(0.05)} & 0.96\textcolor{gray}{(0.03)} & 0.40\textcolor{gray}{(0.11)} \\

    & $\text{PriorGuide}$ 
    & 0.31\textcolor{gray}{(0.08)} & \textbf{0.53}\textcolor{gray}{(0.03)} & \textbf{0.05}\textcolor{gray}{(0.01)}
    & 0.23\textcolor{gray}{(0.09)} & \textbf{0.54}\textcolor{gray}{(0.05)} & \textbf{0.06}\textcolor{gray}{(0.02)}
    & 0.26\textcolor{gray}{(0.09)} & \textbf{0.57}\textcolor{gray}{(0.06)} & \textbf{0.12}\textcolor{gray}{(0.06)} \\

\midrule
\multirow{3}{*}{\shortstack{Gaussian\\Linear 20D}} 
    & $\text{Simformer}$ 
    & 0.29\textcolor{gray}{(0.02)} & 0.95\textcolor{gray}{(0.01)} & 0.29\textcolor{gray}{(0.02)}
    & 0.30\textcolor{gray}{(0.02)} & 1.00\textcolor{gray}{(0.00)} & 0.64\textcolor{gray}{(0.02)}
    & 0.30\textcolor{gray}{(0.02)} & 1.00\textcolor{gray}{(0.00)} & 0.63\textcolor{gray}{(0.03)} \\  
    
    & $\text{ACE}$ 
    & \textbf{0.22}\textcolor{gray}{(0.02)} & 0.72\textcolor{gray}{(0.06)} & 0.11\textcolor{gray}{(0.02)}
    & \textbf{0.10}\textcolor{gray}{(0.01)} & 0.82\textcolor{gray}{(0.05)} & 0.16\textcolor{gray}{(0.03)} 
    & \textbf{0.20}\textcolor{gray}{(0.04)} & 0.99\textcolor{gray}{(0.01)} & 0.44\textcolor{gray}{(0.09)} \\  
    
    & $\text{PriorGuide}$ 
    & 0.27\textcolor{gray}{(0.07)} & \textbf{0.54}\textcolor{gray}{(0.03)} & \textbf{0.05}\textcolor{gray}{(0.01)}
    & 0.28\textcolor{gray}{(0.12)} & \textbf{0.58}\textcolor{gray}{(0.03)} & \textbf{0.11}\textcolor{gray}{(0.03)}
    & 0.28\textcolor{gray}{(0.10)} & \textbf{0.59}\textcolor{gray}{(0.05)} & \textbf{0.14}\textcolor{gray}{(0.06)} \\

\midrule
\multirow{3}{*}{BCI} 
    & $\text{Simformer}$ 
    & 0.79\textcolor{gray}{(0.13)} & 0.86\textcolor{gray}{(0.05)} & 0.35\textcolor{gray}{(0.09)}
    
    & 1.03\textcolor{gray}{(0.16)} & 0.98\textcolor{gray}{(0.01)} & 0.61\textcolor{gray}{(0.09)}
    
    & 0.99\textcolor{gray}{(0.24)} & 0.98\textcolor{gray}{(0.02)} & 0.63\textcolor{gray}{(0.09)} \\  
    
    & $\text{ACE}$ 
    & \textbf{0.55}\textcolor{gray}{(0.12)} & \textbf{0.84}\textcolor{gray}{(0.08)} & \textbf{0.28}\textcolor{gray}{(0.10)}
    
    & 0.31\textcolor{gray}{(0.13)}  & 0.82\textcolor{gray}{(0.09)} & 0.29\textcolor{gray}{(0.12)}
    
    & 1.02\textcolor{gray}{(0.37)} & 0.97\textcolor{gray}{(0.02)} & 0.56\textcolor{gray}{(0.11)} \\

    & $\text{PriorGuide}$ 
    
    & 0.56\textcolor{gray}{(0.10)} & 0.88\textcolor{gray}{(0.06)} & 0.41\textcolor{gray}{(0.08)}
    
    & \textbf{0.25}\textcolor{gray}{(0.04)} & \textbf{0.72}\textcolor{gray}{(0.10)} & \textbf{0.21}\textcolor{gray}{(0.12)}
    
    & \textbf{0.87}\textcolor{gray}{(0.68)} & \textbf{0.78}\textcolor{gray}{(0.13)} & \textbf{0.37}\textcolor{gray}{(0.27)} \\
\bottomrule
\end{tabular}}
\end{table}
}
\newcommand{\tablepostpred}{
\begin{table}[hbt]
\caption{Data prediction ($\vx$). Mean \textcolor{gray}{(standard dev.)} over 5 independent training runs (10 random target priors $\times$ 10 simulated datasets). Significantly best results (Wilcoxon signed-rank test) in bold. } \label{tab:postpred}
\centering
\setlength{\tabcolsep}{10pt}
\scalebox{0.8}{
\begin{tabular}{cccccccc}
\toprule
\multicolumn{2}{c}{} & \multicolumn{2}{c}{$q_\text{mild}(\vtheta)$} & \multicolumn{2}{c}{$q_\text{strong}(\vtheta)$} & \multicolumn{2}{c}{$q_\text{mixture}(\vtheta)$}\\
\cmidrule(lr){3-8}
    &             & RMSE & $\text{MMD}_x$  & RMSE & $\text{MMD}_x$ & RMSE & $\text{MMD}_x$ \\ \cmidrule(lr){3-8}
\multirow{3}{*}{OUP} 
    & $\text{Simformer}$ 
    & 0.32\textcolor{gray}{(0.10)} & 0.44\textcolor{gray}{(0.24)}
    & 0.39\textcolor{gray}{(0.11)} & 0.54\textcolor{gray}{(0.24)}
    & 0.34\textcolor{gray}{(0.11)} & 0.47\textcolor{gray}{(0.25)} \\  
    & $\text{ACE}$ 
    & \textbf{0.26}\textcolor{gray}{(0.08)} & \textbf{0.44}\textcolor{gray}{(0.31)}
    & 0.22\textcolor{gray}{(0.06)} & 0.30\textcolor{gray}{(0.20)}
    & \textbf{0.24}\textcolor{gray}{(0.10)} & 0.38\textcolor{gray}{(0.31)} \\  
    & $\text{PriorGuide}$ 
    & 0.28\textcolor{gray}{(0.09)} & 0.45\textcolor{gray}{(0.28)}
    & \textbf{0.21}\textcolor{gray}{(0.05)} & \textbf{0.29}\textcolor{gray}{(0.17)}
    & 0.25\textcolor{gray}{(0.11)} & \textbf{0.34}\textcolor{gray}{(0.22)} \\

\midrule
\multirow{3}{*}{Turin} 
    & $\text{Simformer}$ 
    & 0.14\textcolor{gray}{(0.01)} & 0.49\textcolor{gray}{(0.09)} 
    & 0.14\textcolor{gray}{(0.01)} & 0.49\textcolor{gray}{(0.09)}
    & 0.14\textcolor{gray}{(0.01)} & 0.48\textcolor{gray}{(0.09)}\\  
    & $\text{ACE}$ 
    & 0.16\textcolor{gray}{(0.03)} & 0.62\textcolor{gray}{(0.19)}
    & 0.16\textcolor{gray}{(0.03)} & 0.61\textcolor{gray}{(0.20)} 
    & 0.16\textcolor{gray}{(0.03)} & 0.61\textcolor{gray}{(0.19)} \\
    & $\text{PriorGuide}$ 
    & \textbf{0.13}\textcolor{gray}{(0.01)} & \textbf{0.47}\textcolor{gray}{(0.08)} 
    & \textbf{0.13}\textcolor{gray}{(0.01)} & \textbf{0.46}\textcolor{gray}{(0.07)} 
    & \textbf{0.13}\textcolor{gray}{(0.01)} & \textbf{0.46}\textcolor{gray}{(0.09)}\\
\bottomrule
\end{tabular}}
\end{table}
}
\definecolor{mydarkblue}{rgb}{0,0.08,0.45} 
\pgfplotsset{compat=newest}
\newtheorem{prop}{Proposition}
\newcommand{\ie}{\textit{i.e.}\@\xspace}
\newcommand{\eg}{\textit{e.g.}\@\xspace}
\newlength\figureheight
\newlength\figurewidth
\renewcommand{\paragraph}[1]{\textbf{#1}~~}
\title{PriorGuide: Test-Time Prior Adaptation for \\ Simulation-Based Inference}
\author{\textbf{Yang Yang$^{1}$, Severi Rissanen$^{2,3}$, Paul E.\ Chang$^{1,4}$, Nasrulloh Loka$^{1}$, Daolang Huang$^{2,3}$, }\\
\textbf{Arno Solin$^{2,3}$, Markus Heinonen$^{3}$, Luigi Acerbi$^{1}$}\\[0.3em]
$^{1}$Department of Computer Science, University of Helsinki, Finland \quad $^{2}$ELLIS Institute Finland \\
$^{3}$Department of Computer Science, Aalto University, Finland \quad $^{4}$Verda \\
{\small \texttt{\{yang.yang,paul.chang,nasrulloh.satrio,luigi.acerbi\}@helsinki.fi}} \\
{\small \texttt{\{severi.rissanen,daolang.huang,arno.solin,markus.o.heinonen\}@aalto.fi}}
}
\begin{document}

\maketitle

\begin{abstract}
Amortized simulator-based inference offers a powerful framework for tackling Bayesian inference in computational fields such as engineering or neuroscience, increasingly leveraging modern generative methods like diffusion models to map observed data to model parameters or future predictions. These approaches yield posterior or posterior-predictive samples for new datasets without requiring further simulator calls after training on simulated parameter-data pairs. 
However, their applicability is often limited by the prior distribution(s) used to generate model parameters during this training phase. To overcome this constraint, we introduce \emph{PriorGuide}, a technique specifically designed for diffusion-based amortized inference methods. PriorGuide leverages a novel guidance approximation that enables flexible adaptation of the trained diffusion model to new priors at test time, crucially without costly retraining. This allows users to readily incorporate updated information or expert knowledge post-training, enhancing the versatility of pre-trained inference models. \looseness-2
\end{abstract}

\section{Introduction}
Simulation-based inference (SBI) has become a key tool across scientific disciplines, enabling Bayesian inference for complex systems where the likelihood function $p(\vx \mid \vtheta)$ for data $\vx$ and model parameters $\vtheta$ is intractable, but one can easily simulate from the forward model $\vx \sim p(\vx \mid\vtheta)$~\citep{cranmer2020frontier}. Within the Bayesian paradigm, prior beliefs about parameters $\vtheta$ are updated with observed data $\vx$ to form a posterior distribution $p(\vtheta\mid\vx)$ \citep{gelman2013bayesian}. While traditional methods like Markov Chain Monte Carlo (MCMC;~\citealp{robert2007bayesian}) are effective when likelihoods are available, recent \emph{amortized inference} methods can directly learn the inverse mapping $\vx \mapsto p(\vtheta\mid \vx)$ from observations to posteriors using neural networks \citep{lueckmann2017flexible, radev2020bayesflow}. These amortized approaches, once trained, can rapidly infer posterior (parameters) or posterior-predictive (data) distributions given new observations, significantly speeding up the inference process.

Modern generative models, including diffusion models \citep{sohl2015deep, ho2020denoising, song2021score}, transformers \citep{vaswani2017attention}, and flow-matching techniques \citep{lipman2023flow}, have demonstrated state-of-the-art performance in tackling these inverse modeling tasks for amortized SBI \citep{muller2022transformers,wildberger2024flow,schmitt_consistency_2024, gloeckler2024all, chang2025amortized,whittle2025distribution,mittal2025amortized,hollmann2025accurate}. These models are trained on vast numbers of simulated parameter-data pairs $(\vtheta, \vx)$, often drawing parameters $\vtheta$ from a broad, uniform prior distribution to ensure comprehensive coverage of the parameter space.

However, this reliance on a fixed \emph{training prior} introduces significant limitations. Practitioners often possess domain knowledge that, if incorporated as a more specific prior, could substantially improve inference accuracy. Moreover, \emph{prior sensitivity analysis}---a crucial step for assessing the robustness of scientific conclusions to modeling assumptions---becomes cumbersome.
This practice is vital across many disciplines, \eg,~economists must validate the policy implications of macroeconomic models against different theoretical priors \citep{del2008forming}, climate scientists need to validate the climate sensitivity estimates over multiple sets of assumptions \citep{sherwood2020assessment}, and epidemiologists need to assess the sensitivity of pandemic forecasts to assumptions about disease transmission \citep{flaxman2020estimating}.
Changing priors is computationally challenging: non-amortized methods require costly re-simulation for each new prior, while amortized methods require retraining~\citep{elsemuller2024sensitivity}. Though practitioners often resort to approximations like importance sampling to avoid these costs, such methods fail when priors differ substantially. This limitation becomes increasingly problematic as the field trends towards \emph{foundation models} for SBI \citep{hollmann2025accurate,vetter2025effortless}.
While some recent amortized methods offer inference-time prior adaptation, they are often restricted to specific families of priors (\eg, factorized histograms or Gaussian mixtures pre-defined at training) or simple constraints \citep{elsemuller2024sensitivity,chang2025amortized, whittle2025distribution, gloeckler2024all}. 
The broader issue is the impracticality of pre-training over all potential tasks, such as all prior distributions a user might wish to employ. Recent successes of the \emph{test-time compute} paradigm~\citep{snell2025scaling} suggest that rather than attempting exhaustive amortization for all scenarios, models could be designed to flexibly incorporate specific requirements, such as a user-defined prior, through dedicated computations at inference time---an ability which was unattained so far.
For a comprehensive discussion of the related work, we refer the reader to \cref{app:related_work}.

\begin{figure}[t]
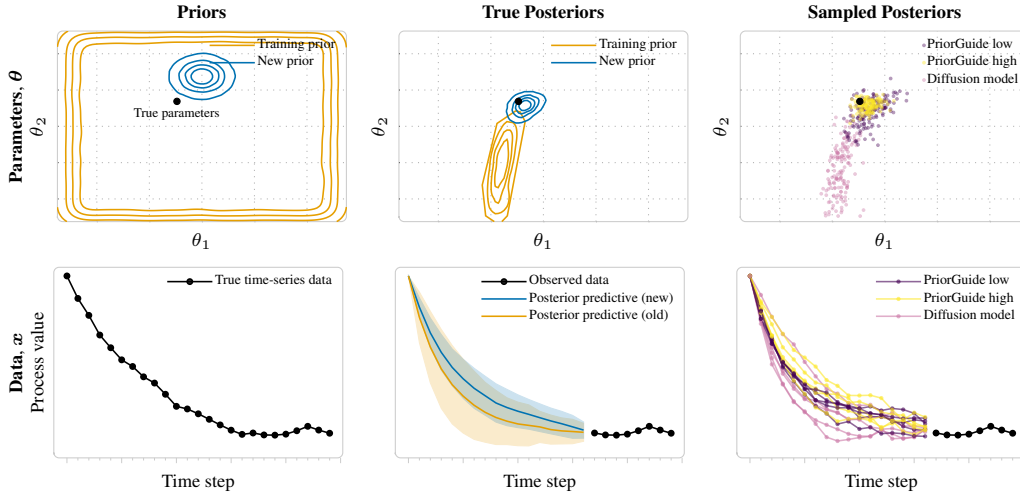

  \centering\scriptsize
  \setlength{\figurewidth}{.275\textwidth}
  \setlength{\figureheight}{.66\figurewidth}
  \pgfplotsset{
      width=\figurewidth,
      height=\figureheight,
      scale only axis,
      tick align=inside,
      ylabel near ticks,
      xlabel near ticks,
      x grid style={gray,dotted},
      xtick style={color=black!20},
      y grid style={gray,dotted},
      ytick style={color=black!20},
      ylabel={\strut},
      xlabel={\strut},
      ylabel style={align=center},
      yticklabels={},
      xticklabels={},
      ytick={\empty},
      xtick={\empty},
      title style={font=\bf,yshift=-6pt},
      major tick length=0pt,
      axis line style={rounded corners=2pt,draw=black!20},
	  legend cell align={left},every axis/.append style={legend style={draw=none,inner xsep=1pt, inner ysep=0.5pt, nodes={inner sep=1pt, text depth=2pt, scale=0.75,transform shape},fill=white,fill opacity=0.8}}
  }
  \begingroup
  \pgfplotsset{xlabel={$\theta_1$},ylabel={$\theta_2$},
    ytick={-1.6,-.8,0,.8,1.6},xtick={.2,.6,1,1.4,1.8},
    xmajorgrids,ymajorgrids,
    xlabel style={yshift=3pt},ylabel style={yshift=-3pt}}
  \begin{subfigure}{0.35\textwidth}
    \raggedleft
    \pgfplotsset{title={Priors\strut},ylabel={\textbf{Parameters}, $\bm{\theta}$ \\ $\theta_2$}}
    \input{fig/fig1/1.theta_prior_topleft.tex}
  \end{subfigure}
  \hfill
  \begin{subfigure}{0.32\textwidth}
    \raggedleft
    \pgfplotsset{title={True Posteriors\strut}}
    \input{fig/fig1/2.theta_posterior_topmid.tex}
  \end{subfigure}
  \hfill
  \begin{subfigure}{0.32\textwidth}
    \raggedleft
    \pgfplotsset{title={Sampled Posteriors\strut}}
    \input{fig/fig1/3.theta_amortized_topright.tex}
  \end{subfigure}\\[.5em]
  \endgroup
  \begingroup
  \pgfplotsset{xlabel={Time step}, ytick={-10},ylabel={\strut},
    ylabel style={yshift=-3pt},xlabel style={yshift=3pt},
    xtick={0,5,10,15,20,25},minor tick num=4,
    major tick length=2pt,minor tick length=1pt}
  \begin{subfigure}{0.35\textwidth}
    \raggedleft
    \pgfplotsset{ylabel={\textbf{Data}, $\bm{x}$\\ Process value\strut}}
    \begin{tikzpicture}

\definecolor{darkgrey176}{RGB}{176,176,176}

\begin{axis}[
legend cell align={left},
legend style={fill opacity=0.8, draw opacity=1, text opacity=1, draw=none, fill=none},
tick align=outside,
tick pos=left,
x grid style={darkgrey176},
xmin=-1.2, xmax=25.2,
y grid style={darkgrey176},
ymin=-0.5, ymax=10.5,
ytick style={color=black}
]
\addplot [semithick, black, mark=*, mark size=1, mark options={solid}]
table {%
0 10.000000000814
1 8.70195407609928
2 7.72544618482757
3 6.59009560712606
4 5.85602201646523
5 5.15789665354099
6 4.77424155566681
7 4.17717895004503
8 3.81800112550548
9 3.17061982838972
10 2.4742293881755
11 2.32297551394974
12 2.04275280244002
13 1.72574435214115
14 1.40142418709285
15 1.09775208062415
16 0.856239060461355
17 0.935155318117905
18 0.823273791396189
19 0.811630942834807
20 0.869041877377272
21 1.06557292697806
22 1.321057328834
23 1.10454947323341
24 0.921979379876972
};
\addlegendentry{True time-series data}
\end{axis}

\end{tikzpicture}
  \end{subfigure}
  \hfill
  \begin{subfigure}{0.32\textwidth}
    \raggedleft
    \begin{tikzpicture}

\definecolor{darkcyan0114178}{RGB}{0,114,178}
\definecolor{darkgrey176}{RGB}{176,176,176}
\definecolor{orange2301590}{RGB}{230,159,0}

\begin{axis}[
legend cell align={left},
legend style={fill opacity=0.8, draw opacity=1, text opacity=1, draw=none, fill=none},
tick align=outside,
tick pos=left,
x grid style={darkgrey176},
xmin=-1.2, xmax=25.2,
y grid style={darkgrey176},
ymin=-0.5, ymax=10.5,
ytick style={color=black}
]
\addplot [semithick, black, mark=*, mark size=1, mark options={solid}]
table {%
17 0.935155318117905
18 0.823273791396189
19 0.811630942834807
20 0.869041877377272
21 1.06557292697806
22 1.321057328834
23 1.10454947323341
24 0.921979379876972
};
\addlegendentry{Observed data}
\path [draw=orange2301590, fill=orange2301590, opacity=0.2]
(axis cs:0,10.005220413208)
--(axis cs:0,9.99330139160156)
--(axis cs:1,6.14357089996338)
--(axis cs:2,3.82861709594727)
--(axis cs:3,2.35704183578491)
--(axis cs:4,1.46036005020142)
--(axis cs:5,0.93113899230957)
--(axis cs:6,0.565439701080322)
--(axis cs:7,0.402314305305481)
--(axis cs:8,0.268092155456543)
--(axis cs:9,0.264916062355042)
--(axis cs:10,0.200142025947571)
--(axis cs:11,0.18969988822937)
--(axis cs:12,0.386345565319061)
--(axis cs:13,0.291952192783356)
--(axis cs:14,0.303759515285492)
--(axis cs:15,0.31588351726532)
--(axis cs:16,0.456247091293335)
--(axis cs:16,1.49363744258881)
--(axis cs:16,1.49363744258881)
--(axis cs:15,1.72390425205231)
--(axis cs:14,1.81045079231262)
--(axis cs:13,1.97209882736206)
--(axis cs:12,2.18869400024414)
--(axis cs:11,2.55925130844116)
--(axis cs:10,2.71763277053833)
--(axis cs:9,2.96934127807617)
--(axis cs:8,3.39044904708862)
--(axis cs:7,3.86452913284302)
--(axis cs:6,4.50765705108643)
--(axis cs:5,5.19298505783081)
--(axis cs:4,5.98453187942505)
--(axis cs:3,6.91362333297729)
--(axis cs:2,7.9634313583374)
--(axis cs:1,9.06949329376221)
--(axis cs:0,10.005220413208)
--cycle;

\path [draw=darkcyan0114178, fill=darkcyan0114178, opacity=0.2]
(axis cs:0,10.013204574585)
--(axis cs:0,9.98423004150391)
--(axis cs:1,7.65460538864136)
--(axis cs:2,5.8609356880188)
--(axis cs:3,4.55982303619385)
--(axis cs:4,3.65835762023926)
--(axis cs:5,2.93386840820312)
--(axis cs:6,2.35938763618469)
--(axis cs:7,1.96196460723877)
--(axis cs:8,1.56119585037231)
--(axis cs:9,1.356325507164)
--(axis cs:10,1.24266850948334)
--(axis cs:11,1.15192341804504)
--(axis cs:12,1.01819658279419)
--(axis cs:13,0.886841416358948)
--(axis cs:14,0.757204592227936)
--(axis cs:15,0.628236055374146)
--(axis cs:16,0.635490000247955)
--(axis cs:16,1.57320952415466)
--(axis cs:16,1.57320952415466)
--(axis cs:15,1.95139288902283)
--(axis cs:14,2.18461990356445)
--(axis cs:13,2.37880754470825)
--(axis cs:12,2.62072134017944)
--(axis cs:11,2.86955285072327)
--(axis cs:10,3.1403636932373)
--(axis cs:9,3.45359230041504)
--(axis cs:8,3.77428102493286)
--(axis cs:7,4.21749210357666)
--(axis cs:6,4.67317771911621)
--(axis cs:5,5.19360828399658)
--(axis cs:4,5.78271961212158)
--(axis cs:3,6.6192626953125)
--(axis cs:2,7.56906175613403)
--(axis cs:1,8.73849201202393)
--(axis cs:0,10.013204574585)
--cycle;

\addplot [semithick, darkcyan0114178]
table {%
0 9.99871730804443
1 8.19654846191406
2 6.71499872207642
3 5.58954286575317
4 4.72053861618042
5 4.06373834609985
6 3.51628255844116
7 3.08972835540771
8 2.66773843765259
9 2.40495896339417
10 2.19151616096497
11 2.01073813438416
12 1.81945896148682
13 1.63282442092896
14 1.47091221809387
15 1.28981447219849
16 1.10434973239899
};
\addlegendentry{Posterior predictive (new)}
\addplot [semithick, orange2301590]
table {%
0 9.99926090240479
1 7.60653209686279
2 5.89602422714233
3 4.6353325843811
4 3.72244596481323
5 3.06206202507019
6 2.53654837608337
7 2.1334216594696
8 1.82927060127258
9 1.61712872982025
10 1.4588874578476
11 1.37447559833527
12 1.28751981258392
13 1.13202548027039
14 1.05710518360138
15 1.01989388465881
16 0.974942266941071
};
\addlegendentry{Posterior predictive (old)}

\end{axis}

\end{tikzpicture}
  \end{subfigure}
  \hfill
  \begin{subfigure}{0.32\textwidth}
    \raggedleft
    \begin{tikzpicture}

\definecolor{darkgrey176}{RGB}{176,176,176}
\definecolor{gold25323137}{RGB}{253,231,37}
\definecolor{indigo68184}{RGB}{68,1,84}
\definecolor{palevioletred204121167}{RGB}{204,121,167}

\begin{axis}[
legend cell align={left},
legend style={fill opacity=0.8, draw opacity=1, text opacity=1, draw=none, fill=none},
tick align=outside,
tick pos=left,
x grid style={darkgrey176},
xlabel={Time step},
xmin=-1.2, xmax=25.2,
y grid style={darkgrey176},
ymin=-0.5, ymax=10.5,
ytick style={color=black}
]
\addplot [semithick, palevioletred204121167, opacity=0.6, mark=*, mark size=0.5, mark options={solid},forget plot]
table {%
0 9.9976692199707
1 7.46467447280884
2 5.85842227935791
3 4.36041975021362
4 3.48922824859619
5 2.88770198822021
6 2.53678512573242
7 2.26569366455078
8 2.0921950340271
9 1.7558319568634
10 1.70303058624268
11 1.26667332649231
12 1.19427061080933
13 0.60378098487854
14 0.638120651245117
15 0.711701393127441
16 1.15127491950989
};
\addplot [semithick, palevioletred204121167, opacity=0.6, mark=*, mark size=0.5, mark options={solid}, forget plot]
table {%
0 10.0011558532715
1 8.90381622314453
2 7.64387083053589
3 6.63996410369873
4 6.04995489120483
5 5.03352308273315
6 4.20617771148682
7 3.47500205039978
8 3.20916223526001
9 2.65028810501099
10 2.51760244369507
11 2.56220865249634
12 2.17250633239746
13 1.64024782180786
14 1.33391690254211
15 1.33437538146973
16 1.24204516410828
};
\addplot [semithick, palevioletred204121167, opacity=0.6, mark=*, mark size=0.5, mark options={solid}, forget plot]
table {%
0 9.99577140808105
1 7.46449661254883
2 5.57063484191895
3 4.52751493453979
4 3.65019130706787
5 2.92704486846924
6 2.81730461120605
7 2.3259072303772
8 2.12935137748718
9 1.68420910835266
10 1.74624180793762
11 1.85851049423218
12 1.85691428184509
13 2.11420178413391
14 1.66781234741211
15 1.14083552360535
16 1.002516746521
};
\addplot [semithick, palevioletred204121167, opacity=0.6, mark=*, mark size=0.5, mark options={solid}, forget plot]
table {%
0 10.0005187988281
1 7.44685077667236
2 5.25803756713867
3 3.77902603149414
4 2.77070665359497
5 2.13316702842712
6 1.67271041870117
7 1.40568590164185
8 1.15327501296997
9 0.884216070175171
10 0.761843681335449
11 0.743271827697754
12 1.07503724098206
13 0.887565851211548
14 0.60185170173645
15 0.691388845443726
16 1.12053108215332
};
\addplot [semithick, palevioletred204121167, opacity=0.6, mark=*, mark size=0.5, mark options={solid}, forget plot]
table {%
0 9.99916076660156
1 7.242506980896
2 4.89753866195679
3 3.73411703109741
4 2.71920919418335
5 2.18460011482239
6 1.46365427970886
7 0.695533752441406
8 0.474246025085449
9 0.563302516937256
10 0.674704790115356
11 0.740236759185791
12 1.1515166759491
13 1.19712162017822
14 1.047039270401
15 0.938114881515503
16 1.18769598007202
};
\addplot [semithick, indigo68184, opacity=0.6, mark=*, mark size=0.5, mark options={solid}]
table {%
0 10.0013771057129
1 8.05552291870117
2 6.31785297393799
3 4.78858757019043
4 4.17738246917725
5 3.76898002624512
6 3.26209211349487
7 2.82662010192871
8 2.59036731719971
9 2.46377015113831
10 1.81461715698242
11 1.88574385643005
12 1.86359691619873
13 1.71390461921692
14 1.31236839294434
15 1.12339091300964
16 1.13762974739075
};
\addlegendentry{PriorGuide low }
\addplot [semithick, gold25323137, opacity=0.6, mark=*, mark size=0.5, mark options={solid}]
table {%
0 10.0009098052979
1 8.14807224273682
2 6.55539321899414
3 5.30996179580688
4 4.59676218032837
5 3.8699688911438
6 3.51088070869446
7 3.13498687744141
8 2.38621807098389
9 2.20856213569641
10 2.01218318939209
11 1.70228052139282
12 1.87865614891052
13 1.65437936782837
14 1.02415895462036
15 0.847754001617432
16 1.13252520561218
};
\addlegendentry{PriorGuide high}
\addplot [semithick, indigo68184, opacity=0.6, mark=*, mark size=0.5, mark options={solid}, forget plot]
table {%
0 9.99675273895264
1 7.54010009765625
2 6.43063306808472
3 4.72170972824097
4 3.90650224685669
5 3.15824842453003
6 2.48434400558472
7 2.05713891983032
8 1.45052695274353
9 1.59794139862061
10 1.16829800605774
11 1.26186299324036
12 0.896781444549561
13 0.941824674606323
14 0.896895408630371
15 0.762180089950562
16 0.743858814239502
};
\addplot [semithick, gold25323137, opacity=0.6, mark=*, mark size=0.5, mark options={solid}, forget plot]
table {%
0 9.99961090087891
1 8.08651065826416
2 6.56615924835205
3 5.02896547317505
4 3.95868134498596
5 3.18705749511719
6 2.40894746780396
7 2.07681584358215
8 1.60773777961731
9 1.74450302124023
10 1.79670834541321
11 1.64686632156372
12 1.27806234359741
13 1.48082494735718
14 1.65974378585815
15 1.36904191970825
16 1.32543158531189
};
\addplot [semithick, indigo68184, opacity=0.6, mark=*, mark size=0.5, mark options={solid}, forget plot]
table {%
0 9.99930763244629
1 7.99609565734863
2 6.22888422012329
3 5.06410694122314
4 4.29272556304932
5 3.46104121208191
6 2.82838106155396
7 2.70933818817139
8 2.63748550415039
9 2.51850271224976
10 2.34969854354858
11 2.06491112709045
12 1.99869799613953
13 1.59165501594543
14 1.52601981163025
15 1.695969581604
16 1.62591958045959
};
\addplot [semithick, gold25323137, opacity=0.6, mark=*, mark size=0.5, mark options={solid}, forget plot]
table {%
0 10.0026893615723
1 7.86498069763184
2 6.49926137924194
3 5.12459468841553
4 4.44417953491211
5 3.95753121376038
6 3.2477695941925
7 3.18909645080566
8 3.16700196266174
9 2.95136952400208
10 2.28857803344727
11 1.91701316833496
12 1.18592214584351
13 1.63158845901489
14 1.67895460128784
15 1.25350666046143
16 1.09328651428223
};
\addplot [semithick, indigo68184, opacity=0.6, mark=*, mark size=0.5, mark options={solid}, forget plot]
table {%
0 10.0015239715576
1 7.95498752593994
2 6.0868444442749
3 5.02568101882935
4 4.38510465621948
5 3.49426341056824
6 3.44836163520813
7 3.26849889755249
8 2.72303938865662
9 2.40355777740479
10 2.34832525253296
11 2.46666312217712
12 2.23531675338745
13 2.35313606262207
14 2.2877082824707
15 2.07071018218994
16 1.47414088249207
};
\addplot [semithick, gold25323137, opacity=0.6, mark=*, mark size=0.5, mark options={solid}, forget plot]
table {%
0 10.0061683654785
1 8.40995597839355
2 7.20882415771484
3 5.88095140457153
4 5.18009471893311
5 4.42397928237915
6 3.84443712234497
7 3.6857385635376
8 3.01422500610352
9 2.6878879070282
10 2.74497294425964
11 2.51650238037109
12 2.13789391517639
13 2.02391767501831
14 1.9574556350708
15 1.51248288154602
16 1.29686331748962
};
\addplot [semithick, indigo68184, opacity=0.6, mark=*, mark size=0.5, mark options={solid}, forget plot]
table {%
0 9.99911308288574
1 7.52173805236816
2 6.08572483062744
3 5.00184535980225
4 4.34190320968628
5 3.73165488243103
6 2.82307720184326
7 2.51927328109741
8 2.3911919593811
9 2.39091968536377
10 2.11238980293274
11 2.07540607452393
12 1.97313976287842
13 1.86403322219849
14 1.77225160598755
15 1.80021381378174
16 1.79664635658264
};
\addplot [semithick, gold25323137, opacity=0.6, mark=*, mark size=0.5, mark options={solid}, forget plot]
table {%
0 9.99923133850098
1 8.52393436431885
2 7.63214015960693
3 6.66232681274414
4 5.72579336166382
5 5.28878688812256
6 4.76930665969849
7 3.98912191390991
8 3.65707397460938
9 3.10312271118164
10 3.09823441505432
11 3.11516737937927
12 2.94994044303894
13 2.26704549789429
14 1.68763518333435
15 2.1097354888916
16 1.85646796226501
};
\addplot [semithick, black, mark=*, mark size=1, mark options={solid}, forget plot]
table {%
17 0.935155318117905
18 0.823273791396189
19 0.811630942834807
20 0.869041877377272
21 1.06557292697806
22 1.321057328834
23 1.10454947323341
24 0.921979379876972
};
\addplot [semithick, palevioletred204121167, opacity=0.6, mark=*, mark size=0.5, mark options={solid}]
table {%
0 9.9976692199707
};
\addlegendentry{Diffusion model}
\end{axis}

\end{tikzpicture}
  \end{subfigure}
  \endgroup\\[-.5em]
  \definecolor{darkcyan0114178}{RGB}{0,114,178}
  \definecolor{darkgrey176}{RGB}{176,176,176}
  \definecolor{orange2301590}{RGB}{230,159,0}
  \caption{PriorGuide adapts a diffusion model to new prior information at test time for simulator-based inference---here for a time-series model.
\emph{Left:} Original broad \textcolor{orange2301590}{\bf training prior} and a new, more specific \textcolor{darkcyan0114178}{\bf target prior}. \emph{Middle:} Corresponding true posterior distributions over parameters ($\uparrow$) and predictive data ($\downarrow$) for each prior, given observed data. \emph{Right:} Posterior ($\uparrow$) and posterior-predictive ($\downarrow$) samples from the diffusion model trained on the old prior vs.\ those from PriorGuide at \emph{low} or \emph{high} test-time cost, illustrating PriorGuide's ability to match the \emph{new} posterior from the middle column.
}
  \label{fig:overall}
\vspace{-2.5em}
\end{figure}

In this paper, we introduce PriorGuide, a novel method that empowers diffusion-based amortized inference models with the ability to adapt to new prior distributions $q(\vtheta)$ at inference time, without the need for retraining the original score model trained under prior $\pt(\vtheta)$.\footnote{The intuitive requirement, quantified later, is that $q(\vtheta)$ should reside in regions of non-negligible mass under $\pt(\vtheta)$, to avoid out-of-distribution regions where the original score model would be poorly trained.}
PriorGuide leverages the guidance mechanisms inherent in diffusion models to steer the generative process according to the new target prior, 
seamlessly integrating new user-provided information during the sampling process. Crucially, this approach allows for a trade-off: users can invest more computational resources at inference time---such as more diffusion steps or adding refinement techniques like Langevin dynamics---to achieve higher inference fidelity. 
See \cref{fig:overall} for a conceptual overview.

\paragraph{Contributions}
Our main contribution is a principled framework for flexibly incorporating new prior distributions at inference time into pre-trained diffusion models for SBI.
We leverage a novel Gaussian mixture model approximation for effectively turning the target prior $q(\vtheta)$ into a tractable guidance term for the diffusion sampling process. We demonstrate empirically the effectiveness of PriorGuide on a range of SBI problems, showing its ability to accurately recover posterior and posterior-predictive distributions under various inference-time prior specifications. We show how sampling can be refined with additional Langevin dynamics steps, and study the tradeoff between test-time compute and inference accuracy within our framework.

\vspace{-0.5em}
\section{Background}
\label{sec:background}
\vspace{-0.8em}

The primary goal in many scientific applications is to infer model parameters $\vtheta$ given observed data $\vx$, or to predict future data $\vx^\star$. Bayesian inference provides a framework for computing the \emph{posterior} distribution over parameters or \emph{posterior predictive} distribution for new data:
\begin{equation} \label{eq:inference}
\begin{alignedat}{2}
    & p(\vtheta \mid \vx) \propto p(\vx \mid \vtheta) \, p(\vtheta), &\quad& \text{(posterior)} \\
    & p(\vx^\star \mid \vx) = \int p(\vx^\star \mid \vtheta, \vx) \, p(\vtheta \mid \vx) \diff\vtheta, &\quad& \text{(posterior predictive)}
\end{alignedat}
\end{equation}
where $p(\vtheta)$ is the prior and $p(\vx \mid \vtheta)$ is the likelihood.

The \emph{prior} $p(\vtheta)$ encodes the practitioner's beliefs about plausible parameter values \emph{before} observing data---beliefs informed by physical constraints, previous experiments, or theoretical considerations~\citep{gelman2013bayesian}. In practice, priors are typically simple---Gaussians, uniforms, smooth distributions---reflecting broad domain knowledge rather than precise specifications. Yet even among simple priors, the specific choice varies by application: different analyses, constraints, or domain knowledge call for different specifications, and modern statistical practice recommends validation of results under multiple prior choices~\citep{gelman2020bayesian}.

The \emph{likelihood} $p(\vx \mid \vtheta)$ encodes the statistical or mechanistic description of how data are generated from the model. In many real-world scenarios from finance to physics, \emph{evaluating} the likelihood is intractable, but \emph{generating} samples $\vx \sim p(\vx \mid \vtheta)$ from a simulator is feasible, leading to the field of Simulation-Based Inference (SBI; \citealp{cranmer2020frontier}).

A powerful paradigm within SBI is \emph{amortized inference}. Instead of performing inference from scratch for each $\vx$, amortized methods train a neural network $q_\phi$ once on a large dataset of simulated parameter-data pairs, $\mathcal{D}_{\text{sim}} = \{(\vtheta_i, \vx_i)\}_{i=1}^N$. Parameters $\vtheta_i$ are drawn from a \emph{training prior}, $\pt(\vtheta)$, and data $\vx_i \sim p(\vx \mid \vtheta_i)$. Once trained, $q_\phi$ provides rapid inference for new observations, amortizing the upfront computational cost.
The network $q_\phi$ is usually trained to approximate the posterior $p(\vtheta \mid \vx)$~\citep{lueckmann2017flexible,greenberg2019automatic,radev2020bayesflow}, the likelihood $p(\vx \mid \vtheta)$~\citep{papamakarios2019sequential}, or the posterior predictive distribution $p(\vx^\star \mid \vx)$~\citep{garnelo2018conditional,muller2022transformers}. Recently proposed architectures can flexibly perform \emph{all} of these tasks, using transformers~\citep{chang2025amortized} or diffusion models~\citep{gloeckler2024all}.

\subsection{Diffusion models}

Diffusion models are a powerful framework for generative modeling that transforms samples from arbitrary to simple distributions and vice versa through a gradual noising and denoising process \citep{sohl2015deep}. In the forward diffusion process, starting from a distribution $p(\diffvar_0)$ we can draw samples from (\eg, joint samples from the training prior and simulator), Gaussian noise is progressively added to the samples until, at the end of the process ($t = 1$), the distribution converges to a simple terminal distribution, typically Gaussian. In the Variance Exploding (VE) formulation~\citep{song2021score, karras2022elucidating},  the forward diffusion process can be described as:
\begin{equation}
\label{eq:forward_process_theta}
p(\diffvar_t) = \int p(\diffvar_t \mid \diffvar_0) \, p(\diffvar_0) \diff \diffvar_0 = \int \mathcal{N}(\diffvar_t \mid \diffvar_0, \sigma(t)^2 \mathbf{I}) \, p(\diffvar_0) \diff \diffvar_0, 
\end{equation}
where $p(\diffvar_t \mid \diffvar_0)$ is the transition kernel (here Gaussian), $\sigma(t)^2$ defines the noise variance schedule as a function of time (typically increasing with $t$), and $\diffvar_t$ represents the noisy samples at time $t$. The corresponding reverse process reconstructs the original sample distribution from noise, and can be formulated as either a stochastic differential equation (SDE) or an ordinary differential equation (ODE). 
The reverse SDE process takes the form~\citep{song2021score, karras2022elucidating}:
\begin{equation}
\diff\diffvar_t = -2 \dot{\sigma}(t) \sigma(t) \nabla_{\diffvar} \log p(\diffvar_t) \diff t + \sqrt{2 \dot{\sigma}(t) \sigma(t)} \diff\omega_t, \label{eq:reverse_sde_theta}
\end{equation}
where $\nabla_{\diffvar} \log p(\diffvar_t)$ is the \emph{score function} (gradient of the log-density), $\!\diff\omega_t$ is a Wiener process representing Brownian motion (noise), and $\dot{\sigma}(t)$ is the time derivative of the noise schedule.

\paragraph{Learning the score function}
The score function $\nabla_{\diffvar} \log p(\diffvar_t)$ can be approximated using a neural network $s(\diffvar_t, t)$, trained to minimize the denoising score matching loss \citep{hyvarinen2005, vincent2011connection, song2021score}:
\begin{equation}
\mathcal{L}_{\text{DSM}} = \mathbb{E}_{t \sim p(t)} \mathbb{E}_{\diffvar_0 \sim p(\diffvar_0)} \mathbb{E}_{\diffvar_t \sim \mathcal{N}(\diffvar_t \mid \diffvar_0, \sigma(t)^2 \mathbf{I})} \left[ \omega(t) \left\| s(\diffvar_t, t) - \nabla_{\diffvar_t} \log p(\diffvar_t \mid \diffvar_0) \right\|_2^2 \right].
\end{equation}
Here $p(t)$ is the distribution of noise levels sampled during training and $\omega(t)$ weights different noise levels in the loss. Once trained, the network $s(\diffvar_t, t)$ approximates the gradient of the log-probability density of noised distributions and affords sampling through the reverse SDE; \cref{eq:reverse_sde_theta}. Starting from a sample $\diffvar_t \sim \mathcal{N}(\diffvar_t \mid \diffvar_0, \sigma_{\text{max}}^2 \mathbf{I})$ for $t=1$ with sufficiently large $\sigma_{\text{max}}$, integrating the reverse process backward in time approximately reconstructs the original distribution $p(\diffvar_0)$.

The diffusion framework's flexibility stems largely from its ability to incorporate \emph{guidance mechanisms}, which afford steering the sampling process toward desired outcomes by including additional information or constraints. Notable examples include classifier guidance~\citep{dhariwal2021diffusion} and classifier-free guidance~\citep{ho2022classifier}, which enable variable-strength conditioned generation. For inverse problems, guidance methods exist to incorporate information about observations~\citep{chung2023diffusion, song2023pseudoinverse}.

\vspace{-0.5em}
\subsection{Diffusion-based amortized SBI}

Modern amortized SBI methods leverage highly expressive generative models for multiple inference tasks. Early applications of diffusion models to SBI include \cite{geffner2023compositional}, who demonstrated that learning conditional score functions via denoising score matching offers superior stability and sample quality compared to traditional flow-based baselines. Furthermore, Simformer \citep{gloeckler2024all} trains a diffusion model on samples from the joint distribution $p(\vtheta, \vx) = \pt(\vtheta) \, p(\x \mid \vtheta)$. Simformer employs a transformer architecture to model the score function $s_\phi(\vxi_t, t, \text{mask})$ of the noised joint variable $\vxi_t = (\vtheta_t, \vx_t)$ at diffusion time $t$. The mask specifies which components of $\vxi$ are conditioned upon and which are to be generated. By setting the mask appropriately (\eg, conditioning on $\vx$ to generate $\vtheta$), Simformer can sample from various conditionals, including the posterior $p(\vtheta \mid \vx)$ and posterior predictive $p(\vx^\star \mid \vx)$.
Crucially, these learned conditionals are implicitly tied to the training prior $\pt(\vtheta)$ used to generate the training data---applying a different prior $q(\vtheta)$ would require retraining the diffusion model with the new prior.

\subsection{Prior adaptation in amortized SBI} \label{sec:prior_adaptation}
Standard amortized SBI methods tie the learned posterior $q_\phi(\vtheta \mid \vx)$ to the fixed prior $p_\text{train}(\vtheta)$ used during training. Changing this prior traditionally requires retraining the model from scratch, which is computationally prohibitive for expensive simulators. A few recent amortized methods achieve prior flexibility by training over a \emph{meta-prior}---a distribution or predefined set of possible prior specifications---to learn how to incorporate different prior information at runtime. For instance, the Amortized Conditioning Engine (ACE;~\citealp{chang2025amortized}) allows users to specify factorized priors at runtime by encoding each one-dimensional prior density as a normalized histogram over a predefined grid; its transformer architecture is trained to process these specific histogram-based prior encodings alongside observed data. Similarly, the Distribution Transformer (DT;~\citealp{whittle2025distribution}) learns a direct mapping from a prior, itself represented as a Gaussian mixture model (GMM), to a GMM posterior, using attention mechanisms to transform the prior components based on the data. Sensitivity-aware SBI~\citep{elsemuller2024sensitivity} focuses on providing efficient sensitivity analysis to various modeling choices, including different priors, represented by a discrete set of possible alternative prior specifications. All of these methods enable prior adaptation by relying on their pre-training across a chosen meta-prior. Our approach, described next, sidesteps this requirement by preforming purely inference-time adaptation to flexibly handle new target priors.

\section{PriorGuide} \label{sec:methods}

PriorGuide offers a solution to take a diffusion-based amortized SBI model such as Simformer~\citep{gloeckler2024all}, trained on a training prior $\pt(\vtheta)$, and adapt it to a new target prior $q(\vtheta)$ at inference time, \emph{without retraining}. The objective is to perform standard SBI tasks such as sampling from the posterior or posterior predictive distribution \emph{under the new prior}, that is $q(\vtheta \mid \vx) \propto p(\vx \mid \vtheta)q(\vtheta)$ or $q(\vx^* \mid \vx)$, respectively.
The method achieves this by adjusting the score guidance during the diffusion sampling process. The key relationship for this adaptation is as follows:
\begin{prop} 
\label{eq:prop1}
Let the posterior under the original prior be $p(\vtheta \mid \x) \propto \pt(\vtheta) p(\x \mid \vtheta)$, and let the target posterior---the posterior under the new prior---be $q(\vtheta \mid \x) \propto q(\vtheta) p(\x \mid \vtheta)$. Then, sampling from $q(\vtheta \mid \x)$ is equivalent to sampling from $\ratio(\vtheta) p(\vtheta \mid \x)$ with $\ratio(\vtheta) \equiv \frac{q(\vtheta)}{\pt(\vtheta)}$ the prior ratio.
\vspace{-1.2em}
\end{prop}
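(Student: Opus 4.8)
The plan is to prove Proposition~\ref{eq:prop1} by a direct manipulation of Bayes' rule, multiplying and dividing the target posterior by the training prior. First I would write out the definition of the target posterior, $q(\vtheta \mid \x) \propto q(\vtheta)\, p(\x \mid \vtheta)$, where the normalizing constant is $q(\x) = \int q(\vtheta')\, p(\x \mid \vtheta')\diff\vtheta'$. The key step is to insert the identity $q(\vtheta) = \frac{q(\vtheta)}{\pt(\vtheta)}\,\pt(\vtheta) = \ratio(\vtheta)\,\pt(\vtheta)$, valid wherever $\pt(\vtheta) > 0$ (this is exactly the regime flagged in the footnote: $q$ must be supported where $\pt$ has mass, otherwise $\ratio$ is ill-defined). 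This yields $q(\vtheta \mid \x) \propto \ratio(\vtheta)\,\pt(\vtheta)\, p(\x \mid \vtheta)$, and then recognizing $\pt(\vtheta)\, p(\x \mid \vtheta) \propto p(\vtheta \mid \x)$ (with proportionality constant the original evidence $\pt(\x)$, independent of $\vtheta$) gives $q(\vtheta \mid \x) \propto \ratio(\vtheta)\, p(\vtheta \mid \x)$, as claimed.

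Second, I would make precise what "equivalent to sampling from" means, since $\ratio(\vtheta)\, p(\vtheta \mid \x)$ is only an unnormalized density. I would note that the two normalized densities coincide: the normalizer of $\ratio(\vtheta)\, p(\vtheta \mid \x)$ is $\int \ratio(\vtheta')\, p(\vtheta' \mid \x)\diff\vtheta' = \frac{q(\x)}{\pt(\x)}$, so dividing through recovers exactly $q(\vtheta \mid \x)$. Hence any sampler targeting the unnormalized density $\ratio(\vtheta)\, p(\vtheta \mid \x)$ (e.g.\ one built from the score $\nabla_{\vtheta}\log[\ratio(\vtheta)\, p(\vtheta \mid \x)] = \nabla_{\vtheta}\log \ratio(\vtheta) + \nabla_{\vtheta}\log p(\vtheta \mid \x)$, which is what motivates the guidance term used later) produces samples from $q(\vtheta \mid \x)$.

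There is essentially no hard technical obstacle here; the proposition is an elementary consequence of Bayes' rule. The only point requiring care is the well-definedness of $\ratio(\vtheta)$ and the finiteness of the implied normalizing constant $q(\x)/\pt(\x)$, i.e.\ that $q$ be absolutely continuous with respect to $\pt$ and that $\E_{p(\vtheta \mid \x)}[\ratio(\vtheta)] < \infty$; I would state this as the standing assumption (consistent with the footnote) and remark that it is what makes the reweighting a valid change of measure rather than attempt to relax it.
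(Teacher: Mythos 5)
Your proof is correct and takes the same route as the paper's: insert $q(\vtheta) = \ratio(\vtheta)\,\pt(\vtheta)$, absorb the $\vtheta$-independent evidences into the proportionality, and conclude $q(\vtheta\mid\x) \propto \ratio(\vtheta)\, p(\vtheta\mid\x)$. Your added remarks on the normalizing constant $q(\x)/\pt(\x)$ and on absolute continuity of $q$ with respect to $\pt$ make explicit what the paper defers to a footnote and to its appendix restatement, but the core argument is identical.
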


\begin{proof}
We can rewrite the target posterior \(q(\vtheta\mid\x)\) as
\[
q(\vtheta \mid \x) \propto q(\vtheta)p(\x \mid \vtheta) = \frac{q(\vtheta)}{\pt(\vtheta)}\, \pt(\vtheta)p(\x \mid \vtheta) \propto \frac{q(\vtheta)}{\pt(\vtheta)}\, p(\vtheta \mid \x) = \ratio(\vtheta) p(\vtheta\mid\x),
\]
where the prior ratio $\ratio(\vtheta) \equiv \frac{q(\vtheta)}{\pt(\vtheta)}$ takes the role of an importance weighing function, analogous to the correction applied in multi-round neural posterior estimation \citep{lueckmann2017flexible}.
\vspace{-1em}
\end{proof}
Next, we focus on the task of sampling from the target posterior. First, we show in \cref{sec:new_prior_guidance} how introducing the new prior amounts to adding a guidance term to the diffusion process for posterior sampling. In \cref{sec:guidance_approx}, we develop analytical approximations to make this tractable. \cref{sec:prioguide_langevin} shows how we can provide guarantees using corrective Langevin steps. Finally, in \cref{sec:priorguide_post_pred} we show how our results for posterior sampling readily extend to the posterior predictive case.

\subsection{Target prior as guidance}
\label{sec:new_prior_guidance}

Assume we have a diffusion model trained under $\pt(\vtheta)$ to sample from $p(\vtheta\mid\x)$ with learnt score model $s(\vtheta_t,t,\x)$. PriorGuide leverages the fact that we can relate the score of the target posterior $q(\vtheta\mid \x)$ to the original score.
The marginal pdf at time $t$ of the diffusion process for $q(\vtheta\mid \x)$ is:
\begin{align}
\label{eq:modified_posterior}
q(\vtheta_t\mid\x) &\propto \int \ratio(\vtheta_0) p(\vtheta_0\mid\x)p(\vtheta_t\mid\vtheta_0) \diff\vtheta_0,
\end{align}
which is written as an integral over \(\vtheta_0\) by noting that \(q(\vtheta_0\mid\x) \propto \ratio(\vtheta_0) p(\vtheta_0\mid\x)\) and then we propagated this to time \(t\) by convolution with the transition kernel \(p(\vtheta_t\mid\vtheta_0)\). 
Thus, the score is:
\begin{align}
\label{eq:modified_score1}
\nabla_{\vtheta_t} \log q(\vtheta_t\mid\x) &= \nabla_{\vtheta_t} \log \int \ratio(\vtheta_0) p(\vtheta_0\mid\x)p(\vtheta_t\mid\vtheta_0,\x) \diff\vtheta_0 \\
\label{eq:modified_score2}
&= \nabla_{\vtheta_t} \log \int \ratio(\vtheta_0) p(\vtheta_0\mid\vtheta_t, \x)p(\vtheta_t\mid\x) \diff\vtheta_0 \\
\label{eq:modified_score3}
&= \nabla_{\vtheta_t} \log p(\vtheta_t\mid\x) + \nabla_{\vtheta_t} \log \int \ratio(\vtheta_0) p(\vtheta_0\mid\vtheta_t, \x) \diff\vtheta_0 \\
&= \underbrace{s(\vtheta_t, t, \x)}_{\text{original score}} + \nabla_{\vtheta_t} \log \underbrace{\mathbb{E}_{p(\vtheta_0\mid\vtheta_t, \x)}}_{\text{reverse kernel}}\Big[ \underbrace{\ratio(\vtheta_0)}_{\text{prior ratio}}\Big], \vphantom{\int} \label{eq:posterior_guidance_decomposition}
\end{align}
where in \cref{eq:modified_score2} we re-express the joint probability $p(\vtheta_0\mid\x)p(\vtheta_t\mid\vtheta_0, \x) = p(\vtheta_0, \vtheta_t\mid\x)$ as $p(\vtheta_0\mid\vtheta_t,\x)p(\vtheta_t\mid\x)$, which allows us to separate the contribution of the new prior guidance from the original score model \(s(\vtheta_t,t,\x)\). In multiple steps we exploit the fact that multiplicative constants inside the integral disappear under the score. In conclusion, \cref{eq:posterior_guidance_decomposition} expresses the score of the target (new) posterior as the old score, which we have, plus a guidance term which we can estimate.

\paragraph{Guided diffusion}
We can draw samples from the posterior distribution via the reverse diffusion process using the modified score in \cref{eq:posterior_guidance_decomposition}. The first term is the trained score model and the second term estimates how the new prior's influence propagates to time $t$ (guidance term).
This is a common way to implement a guidance function \citep{chung2023diffusion,song2023pseudoinverse,song2023loss,rissanen2024hunch}, which now depends on the prior ratio. 
The core challenge lies in evaluating the expectation over $\vtheta_0$, which is intractable and requires simulating the reverse SDE. To make this tractable, we develop analytical approximations in the following section. 

\paragraph{Ensuring prior coverage}
A crucial consideration for stable guidance is ensuring the new prior $q(\vtheta)$ remains within regions adequately covered by the training prior $\pt(\vtheta)$.\footnote{Within this coverage, $q(\vtheta)$ can differ substantially from $\pt(\vtheta)$---for instance, being more concentrated, multimodal, or shifted---enabling meaningful prior adaptation.}
If $q(\vtheta)$ assigns significant mass to regions where $\pt(\vtheta)$ has negligible support (\ie, is \emph{out-of-distribution} or OOD;~\citealp{lee2018simple,nalisnick2019deep}), two related issues can arise: (a) in these regions, the learned score $s(\vtheta_t, t, \x)$ is likely a poor representation of the true score, as the training set contained few examples in these regions; and (b) the prior ratio $\ratio(\vtheta) = q(\vtheta)/\pt(\vtheta)$ can become arbitrarily large or ill-defined, destabilizing the guidance mechanism. Lack of coverage can be quantified using OOD metrics~\citep{lee2018simple,nalisnick2019deep,schmitt2023detecting,huang2024learning}.
Notably, the requirement that $q(\vtheta)$ should be covered by $\pt(\vtheta)$ is typically not restrictive since the common practice is to train amortized models on broad training priors.
This diagnostic check is detailed in \cref{app:prior_diagnostic}.

\subsection{Approximating the guidance function}\label{sec:guidance_approx}

To approximate the guidance term in \cref{eq:posterior_guidance_decomposition} efficiently while maintaining flexible test-time priors, we introduce two approximations. Following recent work \citep{song2023pseudoinverse,peng2024improving,rissanen2024hunch}, we model the reverse transition kernel as a Gaussian. We then introduce a novel approach that represents $\ratio(\vtheta)$ as a Gaussian mixture model. 
This yields an analytical solution for the guidance, circumventing the issue of estimating the score of an expectation via Monte Carlo, which would suffer from both bias and variance.

\paragraph{Reverse transition kernel approximation}
We first approximate the reverse transition kernel \(p(\vtheta_0\mid\vtheta_t, \x)\) as a multivariate Gaussian distribution: %
\begin{equation} \label{eq:reverse_approx}
p(\vtheta_0\mid\vtheta_t, \x) \approx \mathcal{N}\left(\vtheta_0 \mid  \vmu_{0|t}(\vtheta_t, \x), \vSigma_{0|t} \right)
\end{equation}
whose mean is obtained from the score function via Tweedie's formula~\citep{song2019generative}:
\begin{equation}
\label{eq:tweedie}
\vmu_{0|t}(\vtheta_t, \vx) = \vtheta_t + \sigma(t)^2 \nabla_{\vtheta_t} \log p(\vtheta_t \mid \vx).
\end{equation}
This approximation is common in the guidance literature \citep{chung2023diffusion,song2023pseudoinverse, boystweedie, peng2024improving, rissanen2024hunch, finzi2023user, bao2022analytic}. 
For the covariance matrix $\vSigma_{0|t}$, we adopt a simple yet effective approximation inspired by \citep{song2023pseudoinverse, ho2022video}:
\begin{equation} \label{eq:simple_cov}
\vSigma_{0|t} = \frac{\sigma(t)^2}{1 + \sigma(t)^2} \mathbf{I}.
\end{equation}
This approximation acts as a time-dependent scaling factor that naturally aligns with the diffusion process---starting at the identity matrix when $t=1$ and approaching zero as $t \to 0$, effectively increasing the precision of our prior guidance at smaller timesteps. This approximation becomes exact for all $t$ if the posterior under the original target distribution is $p(\vtheta_0\mid \x)=\mathcal{N}(\vtheta_0\mid \bm{0}, \mathbf{I})$.\footnote{More advanced covariance approximations are explored in the literature \citep{boystweedie, baoanalytic, peng2024improving, finzi2023user, manorposterior, rozet2024learning, rissanen2024hunch}, but introduce added computational costs or  implementation complexity. For this work, we focus on the simple and effective \cref{eq:simple_cov}, which already yields strong results  especially when combined with the Langevin refinement in \cref{sec:langevin}. 
}

\paragraph{Prior ratio approximation}
With the goal of obtaining a closed-form solution for the guidance, we then approximate the prior ratio function $\ratio(\vtheta) = \frac{q(\vtheta)}{p(\vtheta)}$ as a generalized mixture of Gaussians:
\begin{equation} \label{eq:gmm_ratio}
\ratio(\vtheta) \approx \sum_{i=1}^K w_i \mathcal{N} (\vtheta \mid \vmu_i, \vSigma_i), \qquad \ratio(\vtheta) \ge 0,
\end{equation}
where \(\{w_i, \vmu_i, \vSigma_i\}_{i=1}^{K}\) represent the weights, means and covariance matrices of the mixture, with $K$ a hyperparameter denoting the number of mixture components.\footnote{We empirically analyze the sensitivity to $K$ in \cref{app:ablation_gmm_components}, finding that performance is robust once $K$ provides sufficient expressivity (\eg, $K = 20$ in this paper), and increasing $K$ has negligible computational cost.} 
Since this represents a ratio rather than a distribution, the mixture weights need not be positive nor sum to one, as long as the ratio remains non-negative, potentially enabling more expressive approximations such as subtractive mixtures \citep{loconte2024subtractive}.
Notably, when \(p(\vtheta)\) is uniform, $r(\vtheta) \propto q(\vtheta)$. Since the guidance term is a gradient of a log-expectation, the constant factor vanishes, allowing us to directly specify $q(\vtheta)$ as a Gaussian mixture.
For the more general case of non-uniform training distributions, obtaining the Gaussian mixture approximation for the ratio is a standard function approximation task~\citep{SORENSON}. Crucially, since the densities of both $p_{\text{train}}$ and $q$ are analytically known, fitting the ratio avoids the instability and high variance inherent to statistical density-ratio estimation from finite samples. We provide a straightforward gradient-based fitting procedure in~\cref{app:inference_algorithm}.

\paragraph{Guidance term}
Plugging in \cref{eq:reverse_approx} and \cref{eq:gmm_ratio}, the guidance from \cref{eq:posterior_guidance_decomposition} becomes:
\begin{equation}
\label{eq:prior_approx}
\nabla_{\vtheta_t}\log\mathbb{E}_{p(\vtheta_0\mid\vtheta_t, \x)}\left[\ratio(\vtheta_0)\right] \approx \nabla_{\vtheta_t} \log \int \sum_{i=1}^K w_i \mathcal{N} (\vtheta_0 \mid \vmu_i, \vSigma_i) \mathcal{N} (\vtheta_0 \mid \vmu_{0|t}(\vtheta_t, \x), \vSigma_{0|t}) \diff\vtheta_0.
\end{equation}
This integral can be solved analytically (full derivation in \cref{app:proofs}), yielding:
\begin{align} %
\nabla_{\vtheta_t} \log \mathbb{E}_{p(\vtheta_0\mid\vtheta_t,\x)}[\ratio(\vtheta_0)]  &\approx  %
\sum_{i=1}^{K} \tilde w_i (\vmu_i - \vmu_{0|t}(\vtheta_t))^\top \widetilde{\vSigma}_i^{-1} \nabla_{\vtheta_t} \vmu_{0|t}(\vtheta_t), \label{eq:priorw}
\end{align}
where $\widetilde{\vSigma}_i = \vSigma_i + \vSigma_{0|t}$ and $\tilde w_i = w_i \mathcal{N}(\vmu_i \mid \vmu_{0|t}(\vtheta_t, \x), \widetilde{\vSigma}_i)/{\sum_{j=1}^K w_j \mathcal{N}(\vmu_j \mid \vmu_{0|t}(\vtheta_t, \x), \widetilde{\vSigma}_j)}$.

Finally, the PriorGuide update to the mean of the reverse kernel can be expressed concisely using Tweedie's formula, \cref{eq:tweedie}, and our derived guidance term, \cref{eq:priorw}:
\begin{equation}
\mu_{0|t}^\text{new}(\vtheta_t, \x)  = \mu_{0|t}(\vtheta_t, \x) + \sigma(t)^2 \sum_i^{K} \tilde w_i (\vmu_i - \vmu_{0|t}(\vtheta_t, \x))^\top \widetilde{\vSigma}_i^{-1} \nabla_{\vtheta_t} \vmu_{0|t}(\vtheta_t, \x).
\end{equation}
This update intuitively combines the original prediction $\mu_{0|t}(\vtheta_t)$ based on the training prior with a weighted sum of correction terms from our new prior. The correction magnitude is controlled by both the noise schedule $\sigma(t)^2$ and the distance between the mixture components and current prediction.

\subsection{Asymptotically correct sampling with Langevin dynamics}
\label{sec:prioguide_langevin}

The diffusion sampling process detailed in \cref{sec:guidance_approx} is approximate due to the Gaussian approximation of the reverse transition kernel $p(\vtheta_0\mid\vtheta_t, \x)$. However, as stated in the following proposition, this Gaussian approximation becomes correct on low noise levels, rendering our guidance term accurate.\looseness-1
\begin{prop}
    \label{eq:prop2}
    As $t,\sigma(t) \to 0$, the approximation $\mathcal{N}(\vtheta_0\mid \vtheta_t + \sigma(t)^2\nabla_{\vtheta_t} \log p(\vtheta_t), \frac{\sigma(t)^2}{1 + \sigma(t)^2} \mathbf{I})$ converges to the true $p(\vtheta_0\mid \vtheta_t)$, under mild regularity conditions on $p(\vtheta_0)$.\looseness-3%
\end{prop}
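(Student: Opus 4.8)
The plan is to reduce the statement to a local Laplace-type analysis of the reverse kernel $p(\vtheta_0\mid\vtheta_t)\propto p(\vtheta_0)\,\mathcal{N}(\vtheta_t\mid\vtheta_0,\sigma(t)^2\mathbf{I})$, combined with two exact identities. The first is Tweedie's formula (\cref{eq:tweedie}): the reverse-kernel mean equals $\vmu_{0|t}=\vtheta_t+\sigma(t)^2\nabla_{\vtheta_t}\log p(\vtheta_t)$ \emph{exactly}, so the proposed Gaussian already has the correct mean. The second is the second-order analogue $\mathrm{Cov}(\vtheta_0\mid\vtheta_t)=\sigma(t)^2\mathbf{I}+\sigma(t)^4\nabla^2_{\vtheta_t}\log p(\vtheta_t)$; since $\tfrac{\sigma(t)^2}{1+\sigma(t)^2}\mathbf{I}=\sigma(t)^2\mathbf{I}+O(\sigma(t)^4)$, the proposed covariance agrees with the true covariance to leading order. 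Hence the only nontrivial point is that the reverse kernel is itself asymptotically Gaussian, i.e.\ that its non-Gaussian excess mass vanishes as $\sigma(t)\to0$ (the same argument applies verbatim with $\x$ conditioned on, so I suppress it).

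To make this precise, I would fix $\vtheta_t$ in the support of the diffused marginal and pass to the rescaled variable $\vu=(\vtheta_0-\vtheta_t)/\sigma(t)$, under which total variation distance is invariant since the map is a bijection. In these coordinates the reverse kernel has density proportional to $p(\vtheta_t+\sigma(t)\vu)\,e^{-\|\vu\|^2/2}$, and the proposed Gaussian becomes $\mathcal{N}\!\bigl(\vu\mid\sigma(t)\nabla_{\vtheta_t}\log p(\vtheta_t),\,\tfrac{1}{1+\sigma(t)^2}\mathbf{I}\bigr)$. Under the mild regularity conditions --- $p$ continuous and strictly positive at $\vtheta_t$, plus a domination/tail condition (e.g.\ $p$ bounded, or $p$ sub-Gaussian) licensing dominated convergence for the normalizing constant --- the first density converges pointwise to $\mathcal{N}(\vu\mid\mathbf{0},\mathbf{I})$, hence in $L^1$ by Scheff\'e's lemma, hence in total variation. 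For the proposed Gaussian, $\tfrac{1}{1+\sigma(t)^2}\to1$ and, again by Tweedie, $\sigma(t)\nabla_{\vtheta_t}\log p(\vtheta_t)=\mathbb{E}[\vu\mid\vtheta_t]\to\mathbf{0}$ (using uniform integrability of $\vu$, supplied by the same tail condition), so it too converges to $\mathcal{N}(\vu\mid\mathbf{0},\mathbf{I})$ in total variation, because Gaussians converge in total variation whenever their parameters converge.

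The proof then finishes with the triangle inequality in total variation, $\mathrm{TV}\bigl(p(\cdot\mid\vtheta_t),\widehat p(\cdot\mid\vtheta_t)\bigr)\le\mathrm{TV}\bigl(p(\cdot\mid\vtheta_t),\mathcal{N}(\mathbf{0},\mathbf{I})\bigr)+\mathrm{TV}\bigl(\widehat p(\cdot\mid\vtheta_t),\mathcal{N}(\mathbf{0},\mathbf{I})\bigr)\to0$, followed by transferring back to the original $\vtheta_0$ coordinates via invariance of total variation under rescaling; the conclusion then holds for every admissible $\vtheta_t$, in particular for almost every $\vtheta_t$ under the diffused marginal. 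I expect the main obstacle to be the bookkeeping around the normalizing constant and the mean: pinning down precisely which "mild regularity conditions" on $p(\vtheta_0)$ (continuity, positivity, and a tail or growth bound) are required for the dominated-convergence step and for uniform integrability of $\vu$, and emphasizing that the $\sigma(t)$-rescaling is essential --- without it, both kernels collapse to a point mass at $\vtheta_t$ and the claim is vacuous. A secondary subtlety is that $\vtheta_t$ is itself a sample from the diffused marginal, so the convergence is naturally stated pointwise (a.e.) in $\vtheta_t$ rather than uniformly.
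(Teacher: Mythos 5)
Your proposal is correct, but it reaches the conclusion by a genuinely different route than the paper. The paper carries out a quantitative Lagrange-remainder Taylor expansion of $p_0(\vtheta_t+\sigma\rvs)$ to second order, expands both the normalizing constant and its reciprocal, does the same for the proposed Gaussian, and matches terms to bound the total variation distance explicitly as $O(\sigma^2)$; this requires $p_0\in C^2$ with a globally bounded Hessian. Your argument is softer and more elementary: after the same rescaling $\vu=(\vtheta_0-\vtheta_t)/\sigma$, you observe that the rescaled reverse-kernel density converges pointwise to $\mathcal{N}(\vzero,\mathbf{I})$ (dominated convergence for the normalizing constant), so Scheff\'e's lemma upgrades this to $L^1$ and hence TV convergence, while the rescaled proposed Gaussian converges to the same limit because its covariance $\tfrac{1}{1+\sigma^2}\mathbf{I}\to\mathbf{I}$ and its mean $\sigma\nabla\log p_t(\vtheta_t)=\mathbb{E}[\vu\mid\vtheta_t]\to\vzero$ (Tweedie plus uniform integrability); the triangle inequality in TV finishes. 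What your route buys is \emph{weaker hypotheses} (continuity, local positivity, and a domination/tail bound, rather than $C^2$ with bounded Hessian) and a shorter proof; what it gives up is the \emph{rate}: the paper's Taylor bookkeeping yields an explicit $O(\sigma^2)$ TV bound, whereas Scheff\'e gives qualitative convergence only. Two small remarks: your opening observation that the second-order Tweedie covariance identity shows the proposed covariance is correct to $O(\sigma^4)$ is good motivation but is not actually used in the Scheff\'e argument (both densities are simply shown to converge to the same standard normal); and the aside that the unrescaled claim would be ``vacuous'' overstates the point---TV does not trivialize when both families converge weakly to the same point mass, as the paper's explicit $O(\sigma^2)$ bound shows---though the rescaling is indeed the natural normalization to prove it, precisely because TV is invariant under the bijection $\vtheta_0\mapsto\vu$.
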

We provide the exact statement and a proof in \cref{app:proofs}. Close-by statements are well-known in the diffusion literature \citep{finzi2023user, ho2022video}. Thus, the guidance approximation is correct at low noise levels (up to the GMM prior ratio fit accuracy), and we can run accurate Langevin dynamics MCMC sampling ~\citep{sarkka2019applied}. To incorporate this with the regular diffusion process, we run $N_L$ Langevin steps after each regular diffusion step, effectively transforming the sampling into an annealed MCMC process that resembles methods used in compositional generation \citep{geffner2023compositional, du2023reduce} and early unconditional diffusion models \citep{song2021score}. This refinement increases the total inference cost by a factor of $N_L + 1$.

Thus, PriorGuide has two main hyperparameters: the number of diffusion steps $N > 0$, as per any diffusion model, and the number of interleaved Langevin steps $N_L \ge 0$. For test-time sampling, the total number of function evaluations (NFE), or forward passes of the trained network, is $N \times (N_L + 1)$.

\subsection{PriorGuide posterior predictive sampling}
\label{sec:priorguide_post_pred}

PriorGuide is readily applied to compute posterior predictive distributions as well under a new prior.
Starting from a diffusion model trained to generate samples from the joint posterior predictive distribution, $p(\x^\star,\vtheta\mid \x)$, we can marginalize over $\vtheta$ to get the posterior predictive $p(\x^\star \mid \x)$. The joint posterior predictive under the new prior becomes:
\[
q(\x^\star,\vtheta\mid\x) = q(\x^\star\mid\vtheta,\x)q(\vtheta\mid\x) \propto q(\x^\star\mid\vtheta,\x)\ratio(\vtheta) p(\vtheta\mid\x) = \ratio(\vtheta) p(\x^\star,\vtheta\mid \x),
\]
which results in a posterior predictive version of \cref{eq:posterior_guidance_decomposition}:
\begin{align}
    \nabla_{\x^\star_t, \vtheta_t} \log q(\x^\star_t, \vtheta_t\mid\x) %
    &= s(\x^\star_t, \vtheta_t, t, \x) + \nabla_{\x^\star_t,\vtheta_t} \log \mathbb{E}_{p(\vtheta_0\mid\x^\star_t,\vtheta_t, \x)}\left[ \ratio(\vtheta_0)\right].\label{eq:posterior_predictive_guidance_decomposition}
\end{align}
The posterior predictive and posterior guidance terms differ only in the conditioning information for the score and reverse transition kernel. Thus, everything presented earlier in this section applies to this scenario, and the posterior predictive is obtained from the previous formulas with substitutions $p(\vtheta_0\mid\vtheta_t, \x)\rightarrow p(\vtheta_0\mid\vxit_t, \x)$, $\vmu_{0|t}(\vtheta_t, \x)\rightarrow \vmu_{0|t}(\vxit_t, \x)$ and $\nabla_{\vtheta}\rightarrow\nabla_{\vxit}$, where $\vxit_t \equiv (\vx_t^\star, \vtheta_t)$.

\section{Experiments}
\label{sec:experiments}

We empirically evaluate PriorGuide across a range of SBI problems, focusing on its ability to adapt to new priors at test time for posterior and posterior predictive inference.
First, \cref{sec:2dexamples} provides an intuitive demonstration on a 2D problem. In \cref{sec:posterior}, we evaluate posterior inference on several SBI problems, comparing PriorGuide to existing methods that support test-time prior adaptation.
\cref{sec:post_pred} examines PriorGuide's performance on challenging posterior predictive tasks.
Finally, \cref{sec:langevin} studies the trade-off between computational cost and inference accuracy. Full experimental details can be found in \cref{app:experimental_details}, with \cref{app:additional_baselines} reporting additional baseline results. We also conduct two ablation studies, including an analysis of the sensitivity of PriorGuide to the distance between training and test-time priors (\cref{app:sensitity_prior_distance}) and a study of the impact of the number of GMM components used to approximate the prior ratio (\cref{app:ablation_gmm_components}).
The code is available at \url{https://github.com/acerbilab/prior-guide}.

\subsection{Illustrative example of test-time prior adaptation} \label{sec:2dexamples}

We illustrate PriorGuide's capabilities on Two Moons, a two-dimensional SBI model with a bimodal posterior~\citep{greenberg2019automatic}.
We train the diffusion model under a uniform prior $\pt(\vtheta)$ over $[-1,1]^2$, and 
test how PriorGuide handles a new prior $q(\vtheta)$ at test time.
\cref{fig:exp1} shows that PriorGuide incorporates the new prior, matching well the true Bayesian posterior under the new prior.

\subsection{Test-time prior adaptation for posterior inference} \label{sec:posterior}

We evaluate PriorGuide's posterior inference capabilities on six SBI problems (see \cref{tab:benchmarks_overview_posterior}), ranging from established SBI benchmarks to real models from engineering and neuroscience: Two Moons~\citep{lueckmann2021benchmarking}; the Ornstein-Uhlenbeck Process (OUP;~\citealp{uhlenbeck1930theory}); the Turin model of radio propagation~\citep{turin1972statistical}; the Gaussian Linear model~\citep{lueckmann2021benchmarking} and its high-dimensional variant; and the Bayesian Causal Inference model of multisensory perception (BCI;~\citealp{kording2007causal}). Training priors $\pt(\vtheta)$ for the base diffusion model (Simformer;~\citealp{gloeckler2024all}) were uniform or Gaussian; details in \cref{app:experimental_details}.

For baselines, we consider the base Simformer (no prior adaptation) and the Amortized Conditioning Engine (ACE;~\citealp{chang2025amortized}), one of several approaches~\citep{elsemuller2024sensitivity,whittle2025distribution} that amortizes test-time prior adaptation for posterior inference by pre-training on a variety of possible (factorized) priors. PriorGuide is more flexible than ACE by not needing pretraining on specific priors---instead, it modifies a diffusion-based amortized inference model at runtime---, and can represent correlated and non-factorized priors.
Detailed comparisons against additional non-amortized methods---including classic algorithms (rejection sampling and sampling-importance-resampling) and neural likelihood estimation (NLE;~\citealp{papamakarios2019sequential}) with MCMC---are provided in \cref{app:additional_baselines}. While often computationally expensive, these methods serve as fundamental benchmarks for posterior sampling.

We consider three different families of target priors: \emph{mild}, \emph{strong} and \emph{mixture}. 
Mild and strong priors are defined as multivariate Gaussian distributions with means drawn from a uniform box and diagonal covariance matrices, where the strong priors have smaller standard deviations; these represent scenarios with varying degrees of available information. Mixture priors are defined as a mixture distribution with two multivariate Gaussian components with the same setup as the strong priors; this can represent situations with distinct, competing hypotheses about the parameter values.
For each prior family, we randomly generate ten possible prior parameterizations $q^{(i)}(\vtheta)$, sample ten parameter vectors $\vtheta_{i,j} \sim q^{(i)}(\vtheta)$ from that prior, and simulate one observed dataset per $\vtheta_{i,j}$, $\vx_{i,j} \sim p(\x\mid\vtheta_{i,j})$ to evaluate the methods.
See \cref{app:test-time-prior-generations} for the full procedure.

We measure each method's performance using: 1) the root mean squared error (RMSE) between the true parameter and the samples from the estimated posterior; 2) the classifier two-sample test (C2ST) 
between the estimated posterior samples and \emph{ground-truth} posterior samples; 3) the mean marginal total variation distance (MMTV) between the estimated vs. \emph{ground-truth} posterior samples. For RMSE and MMTV, lower is better, while for C2ST, closer to 0.5 is better.

Results in \cref{tab:uniform} show that PriorGuide largely improves inference accuracy over the base Simformer model in all scenarios, making use of the prior information provided at test time, and achieves leading performance in most cases, especially when stronger prior beliefs (\emph{strong} and \emph{mixture}) are presented.

\tableposterior

\vspace{-0.5em}
\subsection{Test-time prior adaptation for data prediction} \label{sec:post_pred}
\vspace{-0.5em}

We next evaluate PriorGuide's ability to perform posterior predictive inference under new target priors, focusing on forecasting or retrocasting scenarios, as shown in \cref{fig:overall}.
We use the OUP and Turin models, both of which generate time series trajectories (\cref{fig:exp_post_pred}).
We employ the same procedure and test-time prior setup (mild, strong, and mixture) from \cref{sec:posterior}.
For each target prior, we condition the model on partial trajectories: in half of the cases, the first $30\%$ of the trajectory, and for the other half, the last $30\%$. The task is always to predict the unobserved $70\%$ of the trajectory.\footnote{This task formulation prevents simple data interpolation as induced by sampling time indices randomly.}

We evaluate the performance of all methods using RMSE and the maximum mean discrepancy (MMD) with an exponentiated quadratic kernel between the ground-truth trajectory $\vx_o$ and generated posterior predictive samples.
\cref{fig:exp_post_pred} shows how example posterior predictive distributions from PriorGuide closely match the true data. Results in
\cref{tab:postpred} show that PriorGuide can generate reliable posterior predictive samples and achieve performance on par with or better than other methods.

\begin{figure}
  \centering
\includegraphics[width=\textwidth]{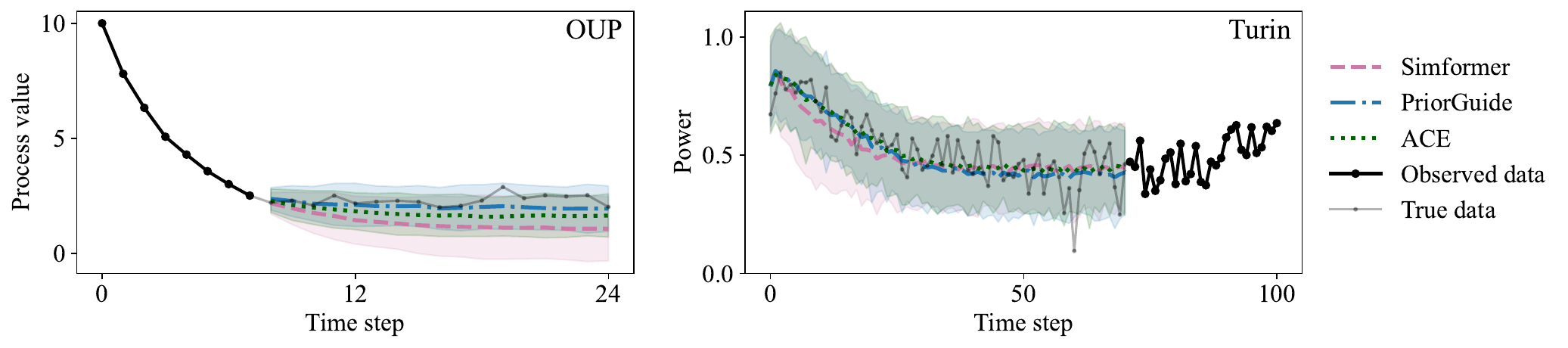}
\vspace{-2em}
  \caption{Example posterior predictive distributions for OUP and Turin models (strong priors).}
  \label{fig:exp_post_pred}
  \vspace{-1.5em}
\end{figure}

\tablepostpred

\subsection{Test-time refinement via corrective Langevin dynamics} \label{sec:langevin}
PriorGuide supports improving the sampling quality by adding Langevin dynamic steps to the diffusion process, at the cost of additional test-time compute.
We examine posterior inference accuracy---measured by MMTV, but similar results hold for other metrics---on the OUP and Turin models as a function of the number of diffusion steps $N$ and Langevin steps $N_L$. 
These two can be combined into a single computational cost metric, the \emph{number of function  evaluations} (NFEs), \ie calls to the score model.
In \cref{fig:mainfig}, we visualize the relationship between MMTV, $N$ and $N_L$. The Pareto front shows that the best posterior inference is achieved by combining moderate diffusion steps ($N \sim$ 25--50) with increasing Langevin corrections if the NFE budget allows it. The sample quality in general improves with more compute, implying that a simple way to calibrate the sampling parameters is to increase NFE until the output distribution does not change.

\begin{figure}[t]
    \centering
    \begin{subfigure}[b]{0.473\textwidth}
        \centering
        \includegraphics[width=\textwidth]{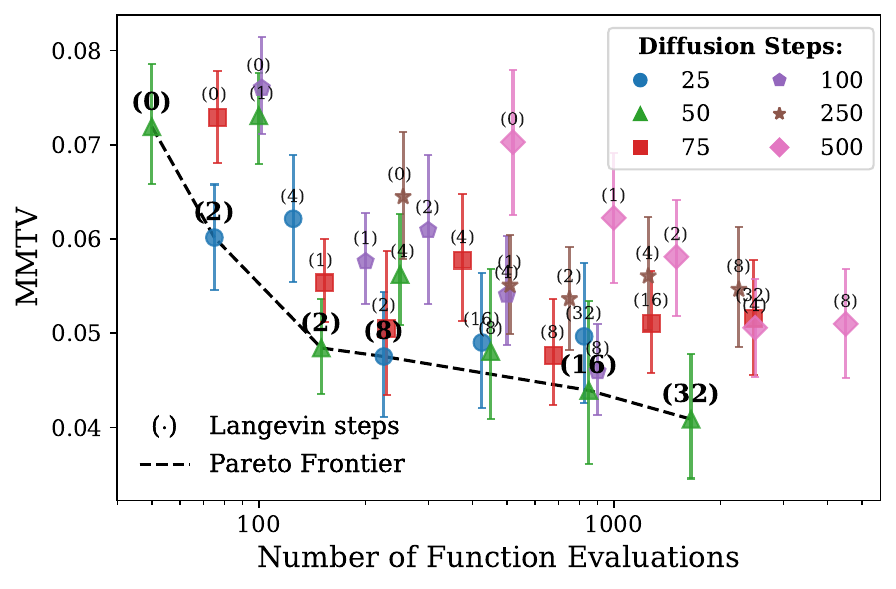}
        \caption{Posterior inference on OUP, strong priors}
        \label{fig:subfig1}
    \end{subfigure}
    \hfill
    \begin{subfigure}[b]{0.48\textwidth}
        \centering
        \includegraphics[width=\textwidth]{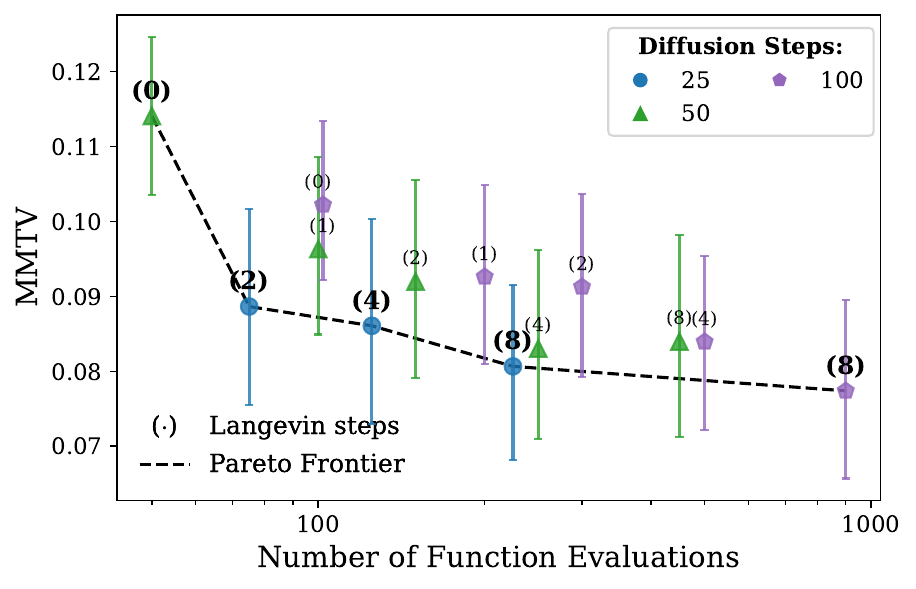}
        \caption{Posterior inference on Turin, strong priors}
        \label{fig:subfig2}
    \end{subfigure}
    \caption{Pareto frontiers with respect to number of function evaluations (NFEs) and MMTV on posterior inference for OUP and Turin, with varying number of diffusion and Langevin steps.\looseness-1 }%
    \label{fig:mainfig}
\vspace{-1.5em}
\end{figure}

\section{Discussion}\label{sec:discussion}

PriorGuide enables amortized diffusion-based SBI models to adapt to new prior distributions without retraining, an example of the \emph{test-time compute} paradigm in extending pre-trained model capabilities with dedicated computations at test time which repurpose diffusion guidance for Bayesian inference. 
In practice, PriorGuide is recommended for moderate-to-high dimensional problems ($4 < D \lesssim 20$) where simulators are mildly-to-very expensive, making retraining a simulator model burdensome; for settings requiring complex, non-factorized priors, where amortized methods restricted by pre-defined meta-priors (\eg, ACE) yield unsatisfactory performance; and for applications where prior adaptation needs to be fast but not strictly \emph{instant}---in terms of pure speed, fully amortized methods without test-time compute remain the best choice.

\paragraph{Limitations} 
PriorGuide's effectiveness relies on the new prior $q(\vtheta)$ having substantial overlap with the training prior $\pt(\vtheta)$; out-of-distribution (OOD) target priors can lead to inaccurate learned scores and unstable guidance. The method also employs approximations---a Gaussian for the reverse transition kernel $p(\vtheta_0 \mid \vtheta_t, \vx)$ and a Gaussian mixture model for the prior ratio function $\ratio(\vtheta)$---which can introduce inaccuracies, particularly for complex prior ratio shapes. Furthermore, current guidance calculations, involving matrix operations for the GMM components, may pose scalability challenges for high-dimensional parameter spaces $(\dim(\vtheta) \gg 20)$. PriorGuide sampling can be computationally intensive: although our method avoids retraining the base model---a key benefit with expensive simulators (\eg, Turin model)---the iterative guided diffusion, particularly with interleaved Langevin refinement steps ($N_L$), incurs a cost. The number of function evaluations (NFEs) increases with diffusion ($N$) and Langevin steps, creating an accuracy-speed trade-off. This may render PriorGuide less suited than fully amortized methods for applications requiring very rapid inference. Advanced covariance approaches mentioned in \cref{sec:guidance_approx} have the potential to speed up the method by requiring fewer NFEs, and represent an interesting future direction.

\paragraph{Conclusions} 
The ability of PriorGuide to decouple expensive simulator runs (for training the base model) from the specification of changing prior beliefs offers significant practical advantages. It allows for post-hoc prior sensitivity analyses and facilitates the direct incorporation of domain expert knowledge post-training, reducing the overall computational footprint in scientific workflows by avoiding the need for repeated model retraining when assumptions change.

\newpage

\subsubsection*{Acknowledgements}
This work was a part of Finland's Ministry of Education and Culture’s Doctoral Education Pilot under Decision No. VN/3137/2024-OKM-6 (The Finnish Doctoral Program Network in Artificial Intelligence, AI-DOC). 
The project was also supported by the Research Council of Finland (Flagship programme: Finnish Center for Artificial Intelligence, FCAI). 
NL was funded by Business Finland (project 3576/31/2023) and LUMI AI Factory (EU Horizon Europe Joint Undertaking and its members including top-up funding 
by Ministry of Education and Culture). 
LA was supported by Research Council of Finland grants 356498 and 358980. SR, MH, and AS acknowledge funding from the Research Council of Finland (grants 339730,
362408, 334600).  The authors also acknowledge the research environment provided by
ELLIS Institute Finland.

We acknowledge CSC – IT Center for Science, Finland, for computational resources provided by the LUMI supercomputer, owned by the EuroHPC Joint Undertaking and hosted by CSC and the LUMI consortium (LUMI projects 462000864 and 462000873). Access was provided through the Finnish LUMI-OKM allocation. We acknowledge the computational resources provided by the
Aalto Science-IT project.

Funded by the European Union. Views and opinions expressed are however those of the author(s) only and do not necessarily reflect those of the European Union or the granting authority. Neither the European Union nor the granting authority can be held responsible for them.

\begin{figure}[htbp]
  \centering
  \begin{minipage}{0.3\textwidth}
    \centering
    \includegraphics[width=\linewidth]{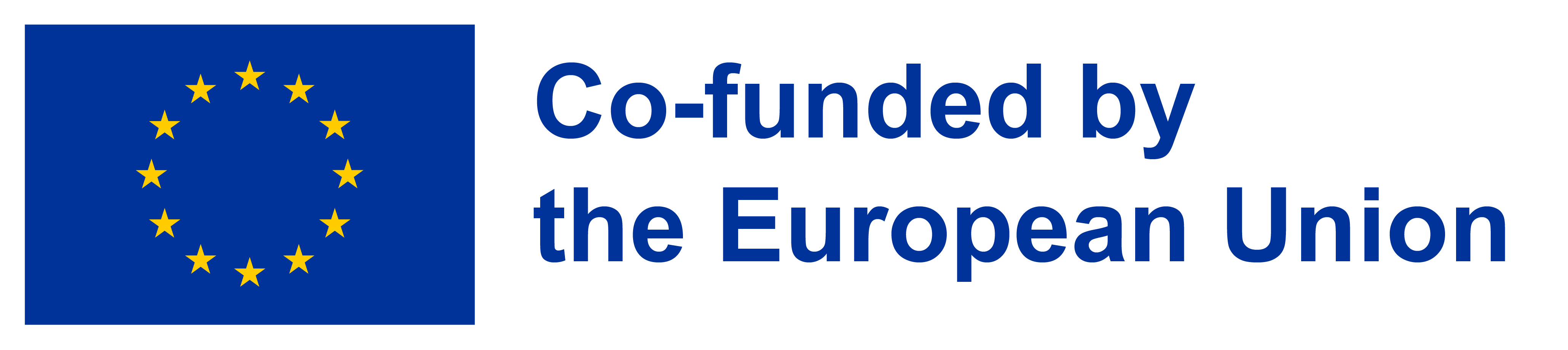}\\
  \end{minipage}\hspace{4em}
  \begin{minipage}{0.3\textwidth}
    \centering
    \includegraphics[width=\linewidth]{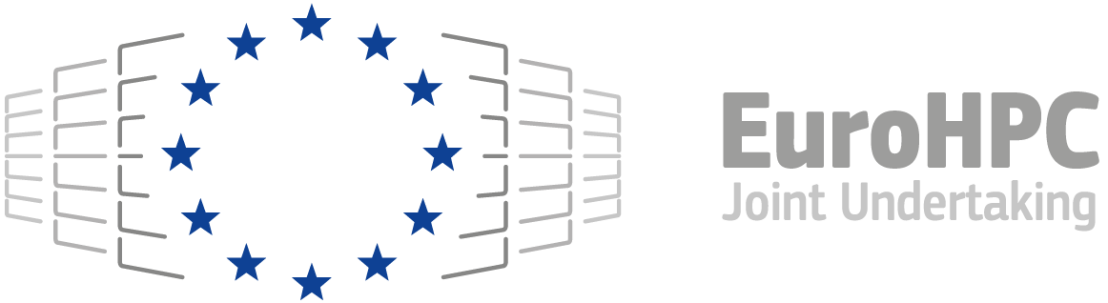}\\
  \end{minipage}
\end{figure}

\subsubsection*{Ethics statement}
This work uses only synthetic datasets, with no sensitive data involved. The methods are for research purposes and pose no foreseeable ethical risks. We have followed the ICLR Code of Ethics.

\subsubsection*{Reproducibility statement}
The code is available at \url{https://github.com/acerbilab/prior-guide}. All experiments use synthetic datasets. Algorithmic details are presented in \cref{app:methods}, and all experimental details are specified in \cref{app:experimental_details}.

\bibliography{references}

@article{vehtari2024pareto,
  title={Pareto smoothed importance sampling},
  author={Vehtari, Aki and Simpson, Daniel and Gelman, Andrew and Yao, Yuling and Gabry, Jonah},
  journal={Journal of Machine Learning Research},
  volume={25},
  number={72},
  pages={1--58},
  year={2024}
}

@inproceedings{baoanalytic,
  title={Analytic-DPM: an Analytic Estimate of the Optimal Reverse Variance in Diffusion Probabilistic Models},
  author={Bao, Fan and Li, Chongxuan and Zhu, Jun and Zhang, Bo},
  booktitle={International Conference on Learning Representations}
}

@inproceedings{manorposterior,
  title={On the Posterior Distribution in Denoising: Application to Uncertainty Quantification},
  author={Manor, Hila and Michaeli, Tomer},
  booktitle={The Twelfth International Conference on Learning Representations}
}

@article{rozet2024learning,
  title={Learning diffusion priors from observations by expectation maximization},
  author={Rozet, Fran{\c{c}}ois and Andry, G{\'e}r{\^o}me and Lanusse, Fran{\c{c}}ois and Louppe, Gilles},
  journal={Advances in Neural Information Processing Systems},
  volume={37},
  pages={87647--87682},
  year={2024}
}

@article{song2019generative,
  title={Generative modeling by estimating gradients of the data distribution},
  author={Song, Yang and Ermon, Stefano},
  journal={Advances in neural information processing systems},
  volume={32},
  year={2019}
}

@article{linhart2024diffusion,
  title={Diffusion posterior sampling for simulation-based inference in tall data settings},
  author={Linhart, Julia and Cardoso, Gabriel Victorino and Gramfort, Alexandre and Corff, Sylvain Le and Rodrigues, Pedro LC},
  journal={arXiv preprint arXiv:2404.07593},
  year={2024}
}

@inproceedings{geffner2023compositional,
  title={Compositional score modeling for simulation-based inference},
  author={Geffner, Tomas and Papamakarios, George and Mnih, Andriy},
  booktitle={International Conference on Machine Learning},
  pages={11098--11116},
  year={2023},
  organization={PMLR}
}

@inproceedings{du2023reduce,
  title={Reduce, reuse, recycle: Compositional generation with energy-based diffusion models and mcmc},
  author={Du, Yilun and Durkan, Conor and Strudel, Robin and Tenenbaum, Joshua B and Dieleman, Sander and Fergus, Rob and Sohl-Dickstein, Jascha and Doucet, Arnaud and Grathwohl, Will Sussman},
  booktitle={International conference on machine learning},
  pages={8489--8510},
  year={2023},
  organization={PMLR}
}

@inproceedings{skretafeynman,
  title={Feynman-Kac Correctors in Diffusion: Annealing, Guidance, and Product of Experts},
  author={Skreta, Marta and Akhound-Sadegh, Tara and Ohanesian, Viktor and Bondesan, Roberto and Aspuru-Guzik, Alan and Doucet, Arnaud and Brekelmans, Rob and Tong, Alexander and Neklyudov, Kirill},
  booktitle={Forty-second International Conference on Machine Learning}
}

@article{lee2025debiasing,
  title={Debiasing guidance for discrete diffusion with sequential monte carlo},
  author={Lee, Cheuk Kit and Jeha, Paul and Frellsen, Jes and Lio, Pietro and Albergo, Michael Samuel and Vargas, Francisco},
  journal={arXiv preprint arXiv:2502.06079},
  year={2025}
}

@inproceedings{thorntoncomposition,
  title={Composition and Control with Distilled Energy Diffusion Models and Sequential Monte Carlo},
  author={Thornton, James and B{\'e}thune, Louis and ZHANG, Ruixiang and Bradley, Arwen and Nakkiran, Preetum and Zhai, Shuangfei},
  booktitle={The 28th International Conference on Artificial Intelligence and Statistics}
}

@inproceedings{cardosomonte,
  title={Monte Carlo guided Denoising Diffusion models for Bayesian linear inverse problems.},
  author={Cardoso, Gabriel and Le Corff, Sylvain and Moulines, Eric and others},
  booktitle={The Twelfth International Conference on Learning Representations}
}

@inproceedings{dou2024diffusion,
  title={Diffusion posterior sampling for linear inverse problem solving: A filtering perspective},
  author={Dou, Zehao and Song, Yang},
  booktitle={The Twelfth International Conference on Learning Representations},
  year={2024}
}

@article{wu2023practical,
  title={Practical and asymptotically exact conditional sampling in diffusion models},
  author={Wu, Luhuan and Trippe, Brian and Naesseth, Christian and Blei, David and Cunningham, John P},
  journal={Advances in Neural Information Processing Systems},
  volume={36},
  pages={31372--31403},
  year={2023}
}

@inproceedings{gloeckler2024all,
  title={All-in-one simulation-based inference},
  author={Gloeckler, Manuel and Deistler, Michael and Weilbach, Christian and Wood, Frank and Macke, Jakob H},
  booktitle={Proceedings of the International Conference on Machine Learning (ICML)},
  pages={15735--15766},
  year={2024}
}

@inproceedings{schmitt_consistency_2024,
    title = {Consistency Models for Scalable and Fast Simulation-Based Inference},
    booktitle = {Advances in Neural Information Processing Systems (NeurIPS)},
    author = {Schmitt, Marvin and Pratz, Valentin and Köthe, Ullrich and Bürkner, Paul-Christian and Radev, Stefan T.},
    year = {2024},
    keywords = {Computer Science - Artificial Intelligence, Computer Science - Machine Learning, Statistics - Machine Learning},
}

@article{boystweedie,
  title={Tweedie Moment Projected Diffusions for Inverse Problems},
  author={Boys, Benjamin and Girolami, Mark and Pidstrigach, Jakiw and Reich, Sebastian and Mosca, Alan and Akyildiz, Omer Deniz},
  journal={Transactions on Machine Learning Research}
}

@article{mittal2025amortized,
  title={Amortized In-Context {B}ayesian Posterior Estimation},
  author={Mittal, Sarthak and Bracher, Niels Leif and Lajoie, Guillaume and Jaini, Priyank and Brubaker, Marcus},
  journal={arXiv preprint arXiv:2502.06601},
  year={2025}
}

@inproceedings{greenberg2019automatic,
  title={Automatic posterior transformation for likelihood-free inference},
  author={Greenberg, David and Nonnenmacher, Marcel and Macke, Jakob},
  booktitle={Proceedings of the International Conference on Machine Learning (ICML)},
  pages={2404--2414},
  year={2019},
  organization={PMLR}
}

@inproceedings{snell2025scaling,
  title={Scaling {LLM} test-time compute optimally can be more effective than scaling model parameters},
  author={Snell, Charlie and Lee, Jaehoon and Xu, Kelvin and Kumar, Aviral},
  booktitle={International Conference on Learning Representations (ICLR)},
  year={2025}
}

@article{vetter2025effortless,
  title={Effortless, Simulation-Efficient {B}ayesian Inference using Tabular Foundation Models},
  author={Vetter, Julius and Gloeckler, Manuel and Gedon, Daniel and Macke, Jakob H},
  journal={arXiv preprint arXiv:2504.17660},
  year={2025}
}

@inproceedings{papamakarios2019sequential,
  title={Sequential neural likelihood: Fast likelihood-free inference with autoregressive flows},
  author={Papamakarios, George and Sterratt, David and Murray, Iain},
  booktitle={Proceedings of the 22nd International Conference on Artificial Intelligence and Statistics (AISTATS)},
  pages={837--848},
  year={2019},
  series = {Proceedings of Machine Learning Research},
  organization={PMLR}
}

@article{uhlenbeck1930theory,
  title={On the theory of the Brownian motion},
  author={Uhlenbeck, George E and Ornstein, Leonard S},
  journal={Physical Review},
  volume={36},
  number={5},
  pages={823},
  year={1930},
  publisher={APS}
}

@article{turin1972statistical,
  title={A statistical model of urban multipath propagation},
  author={Turin, George L and Clapp, Fred D and Johnston, Tom L and Fine, Stephen B and Lavry, Dan},
  journal={IEEE Transactions on Vehicular Technology},
  volume={21},
  number={1},
  pages={1--9},
  year={1972},
  publisher={IEEE}
}

@inproceedings{chang2025amortized,
  title={Amortized Probabilistic Conditioning for Optimization, Simulation and Inference},
  author={Chang, Paul E and Loka, Nasrulloh and Huang, Daolang and Remes, Ulpu and Kaski, Samuel and Acerbi, Luigi},
  booktitle={Proceedings of the International Conference on Artificial Intelligence and Statistics (AISTATS)},
  year={2025},
  organization={PMLR}
}

@article{whittle2025distribution,
  title={Distribution Transformers: Fast Approximate {B}ayesian Inference With On-The-Fly Prior Adaptation},
  author={Whittle, George and Ziomek, Juliusz and Rawling, Jacob and Osborne, Michael A},
  journal={arXiv preprint arXiv:2502.02463},
  year={2025}
}

@article{cranmer2020frontier,
  title={The frontier of simulation-based inference},
  author={Cranmer, Kyle and Brehmer, Johann and Louppe, Gilles},
  journal={Proceedings of the National Academy of Sciences},
  volume={117},
  number={48},
  pages={30055--30062},
  year={2020},
  publisher={National Academy of Sciences}
}

@inproceedings{lueckmann2021benchmarking,
  title={Benchmarking simulation-based inference},
  author={Lueckmann, Jan-Matthis and Boelts, Jan and Greenberg, David and Goncalves, Pedro and Macke, Jakob},
  booktitle={Proceedings of the International Conference on Artificial Intelligence and Statistics (AISTATS)},
  pages={343--351},
  year={2021},
  series = {Proceedings of Machine Learning Research},
  organization={PMLR}
}

@article{silvestrin2025stacking,
  title={Stacking Variational {B}ayesian {M}onte {C}arlo},
  author={Silvestrin, Francesco and Li, Chengkun and Acerbi, Luigi},
  journal={arXiv preprint arXiv:2504.05004},
  year={2025}
}

@article{huggins2023pyvbmc,
  title={{PyVBMC}: Efficient {B}ayesian inference in {P}ython},
  author={Huggins, Bobby and Li, Chengkun and Tobaben, Marlon and Aarnos, Mikko J and Acerbi, Luigi},
  journal={Journal of Open Source Software},
  volume={8},
  number={86},
  pages={5428},
  year={2023}
}

@inproceedings{song2023pseudoinverse,
  title={Pseudoinverse-guided diffusion models for inverse problems},
  author={Song, Jiaming and Vahdat, Arash and Mardani, Morteza and Kautz, Jan},
  booktitle={International Conference on Learning Representations (ICLR)},
  year={2023}
}

@inproceedings{song2023loss,
  title={Loss-guided diffusion models for plug-and-play controllable generation},
  author={Song, Jiaming and Zhang, Qinsheng and Yin, Hongxu and Mardani, Morteza and Liu, Ming-Yu and Kautz, Jan and Chen, Yongxin and Vahdat, Arash},
  booktitle={Proceedings of the International Conference on Machine Learning (ICML)},
  pages={32483--32498},
  year={2023},
  organization={PMLR}
}

@book{robert2007bayesian,
  title={The {B}ayesian Choice: From Decision-theoretic Foundations to Computational Implementation},
  author={Robert, Christian P},
  volume={2nd edition},
  year={2007},
  publisher={Springer}
}

@article{gelman2020bayesian,
  title={Bayesian workflow},
  author={Gelman, Andrew and Vehtari, Aki and Simpson, Daniel and Margossian, Charles C and Carpenter, Bob and Yao, Yuling and Kennedy, Lauren and Gabry, Jonah and B{\"u}rkner, Paul-Christian and Modr{\'a}k, Martin},
  journal={arXiv preprint arXiv:2011.01808},
  year={2020}
}

@book{gelman2013bayesian,
  title={Bayesian data analysis},
  author={Gelman, Andrew and Carlin, John B and Stern, Hal S and Vehtari, Aki and Rubin, Donald B},
  volume={3nd edition},
  year={2013},
  publisher={Chapman and Hall/CRC}
}

@article{radev2020bayesflow,
  title={BayesFlow: {L}earning complex stochastic models with invertible neural networks},
  author={Radev, Stefan T and Mertens, Ulf K and Voss, Andreas and Ardizzone, Lynton and K{\"o}the, Ullrich},
  journal={IEEE Transactions on Neural Networks and Learning Systems},
  volume={33},
  number={4},
  pages={1452--1466},
  year={2020},
  publisher={IEEE}
}

@inproceedings{wildberger2024flow,
  title={Flow matching for scalable simulation-based inference},
  author={Wildberger, Jonas and Dax, Maximilian and Buchholz, Simon and Green, Stephen and Macke, Jakob H and Sch{\"o}lkopf, Bernhard},
  booktitle={Advances in Neural Information Processing Systems (NeurIPS)},
  volume={36},
  year={2024},
  publisher={Curran Associates, Inc.}
}

@inproceedings{sohl2015deep,
  title={Deep unsupervised learning using nonequilibrium thermodynamics},
  author={Sohl-Dickstein, Jascha and Weiss, Eric and Maheswaranathan, Niru and Ganguli, Surya},
  booktitle={Proceedings of the International Conference on Machine Learning (ICML)},
  pages={2256--2265},
  year={2015},
  organization={PMLR}
}

@inproceedings{schmitt2023detecting,
  title={Detecting model misspecification in amortized {B}ayesian inference with neural networks},
  author={Schmitt, Marvin and B{\"u}rkner, Paul-Christian and K{\"o}the, Ullrich and Radev, Stefan T},
  booktitle={DAGM German Conference on Pattern Recognition},
  pages={541--557},
  year={2023},
  organization={Springer}
}

@inproceedings{lee2018simple,
  title={A simple unified framework for detecting out-of-distribution samples and adversarial attacks},
  author={Lee, Kimin and Lee, Kibok and Lee, Honglak and Shin, Jinwoo},
  booktitle={Advances in Neural Information Processing Systems (NeurIPS)},
  volume={31},
  year={2018},
  publisher={Curran Associates, Inc.}
}

@inproceedings{nalisnick2019deep,
  title={Do deep generative models know what they don't know?},
  author={Nalisnick, Eric and Matsukawa, Akihiro and Teh, Yee Whye and Gorur, Dilan and Lakshminarayanan, Balaji},
  booktitle={International Conference on Learning Representations (ICLR)},
  year={2019}
}

@inproceedings{huang2024learning,
  title={Learning Robust Statistics for Simulation-based Inference under Model Misspecification},
  author={Huang, Daolang and Bharti, Ayush and Souza, Amauri and Acerbi, Luigi and Kaski, Samuel},
  booktitle={Advances in Neural Information Processing Systems (NeurIPS)},
  volume={36},
  year={2024},
  publisher={Curran Associates, Inc.}
}

@inproceedings{lueckmann2017flexible,
  title={Flexible statistical inference for mechanistic models of neural dynamics},
  author={Lueckmann, Jan-Matthis and Goncalves, Pedro J and Bassetto, Giacomo and {\"O}cal, Kaan and Nonnenmacher, Marcel and Macke, Jakob H},
  booktitle={Advances in Neural Information Processing Systems (NeurIPS)},
  volume={30},
  year={2017},
  publisher={Curran Associates, Inc.}
}

@inproceedings{
    muller2022transformers,
    title={Transformers Can Do {B}ayesian Inference},
    author={Samuel M{\"u}ller and Noah Hollmann and Sebastian Pineda Arango and Josif Grabocka and Frank Hutter},
    booktitle={International Conference on Learning Representations (ICLR)},
    year={2022}
}

@inproceedings{vaswani2017attention,
  title={Attention is all you need},
  author={Vaswani, Ashish and Shazeer, Noam and Parmar, Niki and Uszkoreit, Jakob and Jones, Llion and Gomez, Aidan N and Kaiser, {\L}ukasz and Polosukhin, Illia},
  booktitle={Advances in Neural Information Processing Systems (NeurIPS)},
  volume={30},
  year={2017},
  publisher={Curran Associates, Inc.}
}

@inproceedings{nguyen2022transformer,
    author    = {Nguyen, Tung and Grover, Aditya},
    title     = {{T}ransformer {N}eural {P}rocesses:
{U}ncertainty-Aware Meta Learning Via Sequence Modeling},
    booktitle = {Proceedings of the International Conference on Machine Learning (ICML)},
    year      = {2022},
    pages     = {123--134},
    publisher = {PMLR},
}

@inproceedings{garnelo2018conditional,
  title={Conditional Neural Processes},
  author={Garnelo, Marta and Rosenbaum, Dan and Maddison, Chris J and Ramalho, Tiago and Saxton, David and Shanahan, Murray and Teh, Yee Whye and Rezende, Danilo J and Eslami, SM Ali},
  booktitle={Proceedings of the International Conference on Machine Learning (ICML)},
  pages={1704--1713},
  year={2018},
}

@inproceedings{bruinsma2023autoregressive,
  title={Autoregressive Conditional Neural Processes},
  author={Bruinsma, Wessel P and Markou, Stratis and Requeima, James and Foong, Andrew YK and Andersson, Tom R and Vaughan, Anna and Buonomo, Anthony and Hosking, J Scott and Turner, Richard E},
  booktitle={International Conference on Learning Representations (ICLR)},
  year={2023}
}

@inproceedings{kingma15,
  author    = {Diederik P Kingma and
               Jimmy Ba},
  title     = {Adam: {A} Method for Stochastic Optimization},
  booktitle = {International Conference on Learning Representations (ICLR)},
  year      = {2015},
}

@inproceedings{ho2020denoising,
  title={Denoising Diffusion Probabilistic Models},
  author={Ho, Jonathan and Jain, Ajay and Abbeel, Pieter},
  booktitle={Advances in Neural Information Processing Systems (NeurIPS)},
  volume={33},
  pages={6840--6851},
  year={2020},
  publisher={Curran Associates, Inc.}
}

@inproceedings{song2021score,
  title={Score-Based Generative Modeling through Stochastic Differential Equations},
  author={Yang Song and Jascha Sohl-Dickstein and Diederik P. Kingma and Abhishek Kumar and Stefano Ermon and Ben Poole},
  booktitle={International Conference on Learning Representations (ICLR)},
  year={2021},
  month={May},
  publisher={ICLR},
}

@inproceedings{lipman2023flow,
  title={Flow Matching for Generative Modeling},
  author={Lipman, Yaron and Chen, Ricky T. Q. and Ben-Hamu, Heli and Nickel, Maximilian and Le, Matthew},
  booktitle={International Conference on Learning Representations (ICLR)},
  year={2023}
}

@book{sarkka2019applied,
  title={Applied Stochastic Differential Equations},
  author={S{\"a}rkk{\"a}, Simo and Solin, Arno},
  year={2019},
  publisher={Cambridge University Press}
}

@inproceedings{rissanen2024hunch,
      title={Free Hunch: Denoiser Covariance Estimation for Diffusion Models Without Extra Costs}, 
      author={Severi Rissanen and Markus Heinonen and Arno Solin},
      year={2025},
      booktitle={International Conference on Learning Representations (ICLR)}
}

@article{SORENSON,
title = {Recursive {B}ayesian estimation using {G}aussian sums},
journal = {Automatica},
volume = {7},
number = {4},
pages = {465-479},
year = {1971},
issn = {0005-1098},
author = {H.W. Sorenson and D.L. Alspach},
}

@inproceedings{dhariwal2021diffusion,
  title={Diffusion models beat {GAN}s on image synthesis},
  author={Dhariwal, Prafulla and Nichol, Alexander},
  booktitle={Advances in Neural Information Processing Systems (NeurIPS)},
  volume={34},
  pages={8780--8794},
  year={2021},
  publisher={Curran Associates, Inc.}
}

@article{ho2022classifier,
  title={Classifier-free diffusion guidance},
  author={Ho, Jonathan and Salimans, Tim},
  journal={NeurIPS 2021 Workshop on Deep Generative Models and Downstream Applications.},
  year={2022}
}

@inproceedings{chung2023diffusion,
  title={Diffusion Posterior Sampling for General Noisy Inverse Problems},
  author={Chung, Hyungjin and Kim, Jeongsol and Mccann, Michael T and Klasky, Marc L and Ye, Jong Chul},
  booktitle={International Conference on Learning Representations (ICLR)},
  year={2023}
}

@article{hyvarinen2005,
  title={Estimation of non-normalized statistical models by score matching.},
  author={Hyv{\"a}rinen, Aapo and Dayan, Peter},
  journal={Journal of Machine Learning Research},
  volume={6},
  number={4},
  year={2005}
}

@article{vincent2011connection,
    title={A connection between score matching and denoising autoencoders},
    author={Vincent, Pascal},
    journal={Neural Computation},
    volume={23},
    number={7},
    pages={1661--1674},
    year={2011},
    publisher={MIT Press}
}

@inproceedings{ho2022video,
    title={Video Diffusion Models},
    author={Ho, Jonathan and Salimans, Tim and Gritsenko, Alexey and Chan, William and Norouzi, Mohammad and Fleet, David J.},
    booktitle={Advances in Neural Information Processing Systems (NeurIPS)},
    volume={35},
    pages={18954--18967},
    year={2022},
    publisher={Curran Associates, Inc.}
}

@inproceedings{peng2024improving,
  title={Improving Diffusion Models for Inverse Problems Using Optimal Posterior Covariance},
booktitle={International Conference on Learning Representations (ICLR)},
  author={Peng, Xinyu and Zheng, Ziyang and Dai, Wenrui and Xiao, Nuoqian and Li, Chenglin and Zou, Junni and Xiong, Hongkai},
  year={2024}
}

@inproceedings{finzi2023user,
  title={User-defined event sampling and uncertainty quantification in diffusion models for physical dynamical systems},
  author={Finzi, Marc Anton and Boral, Anudhyan and Wilson, Andrew Gordon and Sha, Fei and Zepeda-N{\'u}{\~n}ez, Leonardo},
  booktitle={Proceedings of the International Conference on Machine Learning (ICML)},
  pages={10136--10152},
  year={2023},
  organization={PMLR}
}

@inproceedings{bao2022analytic,
    title={Analytic-{DPM}: an Analytic Estimate of the Optimal Reverse Variance in Diffusion Probabilistic Models},
    author={Bao, Fan and Li, Chongxuan and Zhu, Jun and Zhang, Bo},
    booktitle={International Conference on Learning Representations (ICLR)},
    year={2022}
}

@article{kording2007causal,
  title={Causal inference in multisensory perception},
  author={K{\"o}rding, Konrad P and Beierholm, Ulrik and Ma, Wei Ji and Quartz, Steven and Tenenbaum, Joshua B and Shams, Ladan},
  journal={PLoS One},
  volume={2},
  number={9},
  pages={e943},
  year={2007},
  publisher={Public Library of Science San Francisco, USA}
}

@article{elsemuller2024sensitivity,
    title={Sensitivity-Aware Amortized {B}ayesian Inference},
    author={Elsem{\"u}ller, Lasse and Olischl{\"a}ger, Hans and Schmitt, Marvin and B{\"u}rkner, Paul-Christian and K{\"o}the, Ullrich and Radev, Stefan T},
    journal={Transactions on Machine Learning Research (TMLR)},
    year={2024}
}

@article{hollmann2025accurate,
  title={Accurate predictions on small data with a tabular foundation model},
  author={Hollmann, Noah and M{\"u}ller, Samuel and Purucker, Lennart and Krishnakumar, Arjun and K{\"o}rfer, Max and Hoo, Shi Bin and Schirrmeister, Robin Tibor and Hutter, Frank},
  journal={Nature},
  volume={637},
  number={8045},
  pages={319--326},
  year={2025},
  publisher={Nature Publishing Group UK London}
}

@inproceedings{karras2022elucidating,
    title={Elucidating the Design Space of Diffusion-Based Generative Models},
    author={Karras, Tero and Aittala, Miika and Aila, Timo and Laine, Samuli},
    booktitle={Advances in Neural Information Processing Systems (NeurIPS)},
    volume={35},
    pages={26565--26577},
    year={2022},
  publisher={Curran Associates, Inc.}
}

@inproceedings{loconte2024subtractive,
  title={Subtractive Mixture Models via Squaring: Representation and Learning},
  author={Loconte, Lorenzo and Sladek, Aleksanteri M. and Mengel, Stefan and Trapp, Martin and Solin, Arno and Gillis, Nicolas and Vergari, Antonio},
  booktitle={International Conference on Learning Representations (ICLR)},
  year={2024}
}

@inproceedings{
lopez-paz2017revisiting,
title={Revisiting Classifier Two-Sample Tests},
author={David Lopez-Paz and Maxime Oquab},
booktitle={International Conference on Learning Representations (ICLR)},
year={2017},
}

@article{pedersen2019stochastic,
  title={Stochastic multipath model for the in-room radio channel based on room electromagnetics},
  author={Pedersen, Troels},
  journal={IEEE Transactions on Antennas and Propagation},
  volume={67},
  number={4},
  pages={2591--2603},
  year={2019},
  publisher={IEEE}
}

@article{talts2018validating,
  title={Validating {B}ayesian inference algorithms with simulation-based calibration},
  author={Talts, Sean and Betancourt, Michael and Simpson, Daniel and Vehtari, Aki and Gelman, Andrew},
  journal={arXiv preprint arXiv:1804.06788},
  year={2018}
}

@article{yuyan2025robust,
  title={Robust {B}ayesian methods using amortized simulation-based inference},
  author={Yuyan, Wang and Evans, Michael and Nott, David J},
  journal={arXiv preprint arXiv:2504.09475},
  year={2025}
}

@article{papamakarios2016fast,
  title={Fast $\varepsilon$-free inference of simulation models with {B}ayesian conditional density estimation},
  author={Papamakarios, George and Murray, Iain},
  journal={Advances in Neural Information Processing Systems (NeurIPS)},
  volume={29},
  year={2016}
}

@inproceedings{lueckmann2019likelihood,
  title={Likelihood-free inference with emulator networks},
  author={Lueckmann, Jan-Matthis and Bassetto, Giacomo and Karaletsos, Theofanis and Macke, Jakob H},
  booktitle={Symposium on Advances in Approximate Bayesian Inference},
  pages={32--53},
  year={2019},
  organization={PMLR}
}

@inproceedings{hermans2020likelihood,
  title={Likelihood-free mcmc with amortized approximate ratio estimators},
  author={Hermans, Joeri and Begy, Volodimir and Louppe, Gilles},
  booktitle={International conference on machine learning},
  pages={4239--4248},
  year={2020},
  organization={PMLR}
}

@article{thomas2022likelihood,
  title={Likelihood-free inference by ratio estimation},
  author={Thomas, Owen and Dutta, Ritabrata and Corander, Jukka and Kaski, Samuel and Gutmann, Michael U},
  journal={Bayesian Analysis},
  volume={17},
  number={1},
  pages={1--31},
  year={2022},
  publisher={International Society for Bayesian Analysis}
}

@inproceedings{sharrock2024sequential,
  title={Sequential Neural Score Estimation: Likelihood-Free Inference with Conditional Score Based Diffusion Models},
  author={Sharrock, Louis and Simons, Jack and Liu, Song and Beaumont, Mark},
  booktitle={International Conference on Machine Learning},
  pages={44565--44602},
  year={2024},
  organization={PMLR}
}

@article{wilcoxon1945individual,
  title={Individual Comparisons by Ranking Methods},
  author={Wilcoxon, Frank},
  journal={Biometrics Bulletin},
  volume={1},
  number={6},
  pages={80--83},
  year={1945},
  publisher={JSTOR}
}

@article{acerbi2018variational,
  title={Variational {B}ayesian {M}onte {C}arlo},
  author={Acerbi, Luigi},
  journal={Advances in Neural Information Processing Systems (NeurIPS)},
  volume={31},
  year={2018}
}

@article{acerbi2018bayesian,
  title={Bayesian comparison of explicit and implicit causal inference strategies in multisensory heading perception},
  author={Acerbi, Luigi and Dokka, Kalpana and Angelaki, Dora E and Ma, Wei Ji},
  journal={PLoS computational biology},
  volume={14},
  number={7},
  pages={e1006110},
  year={2018},
  publisher={Public Library of Science San Francisco, CA USA}
}

@article{corner,
      doi = {10.21105/joss.00024},
      url = {https://doi.org/10.21105/joss.00024},
      year  = {2016},
      month = {jun},
      publisher = {The Open Journal},
      volume = {1},
      number = {2},
      pages = {24},
      author = {Daniel Foreman-Mackey},
      title = {corner.py: Scatterplot matrices in Python},
      journal = {The Journal of Open Source Software}
}

@book{robert2004monte,
  title={Monte Carlo Statistical Methods},
  author={Robert, Christian P. and Casella, George},
  year={2004},
  edition={2nd},
  publisher={Springer},
  address={New York},
  series={Springer Texts in Statistics},
  isbn={0-387-21239-6}
}

@article{sherwood2020assessment,
  title={An assessment of Earth's climate sensitivity using multiple lines of evidence},
  author={Sherwood, Steven C and Webb, Mark J and Annan, James D and Armour, Kyle C and Forster, Piers M and Hargreaves, Julia C and Hegerl, Gabriele and Klein, Stephen A and Marvel, Kate D and Rohling, Eelco J and others},
  journal={Reviews of Geophysics},
  volume={58},
  number={4},
  pages={e2019RG000678},
  year={2020},
  publisher={Wiley Online Library}
}

@article{del2008forming,
  title={Forming priors for {DSGE} models (and how it affects the assessment of nominal rigidities)},
  author={Del Negro, Marco and Schorfheide, Frank},
  journal={Journal of Monetary Economics},
  volume={55},
  number={7},
  pages={1191--1208},
  year={2008},
  publisher={Elsevier}
}

@article{flaxman2020estimating,
  title={Estimating the effects of non-pharmaceutical interventions on COVID-19 in Europe},
  author={Flaxman, Seth and Mishra, Swapnil and Gandy, Axel and Unwin, H Juliette T and Mellan, Thomas A and Coupland, Helen and Whittaker, Charles and Zhu, Harrison and Berah, Tresnia and Eaton, Jeffrey W and others},
  journal={Nature},
  volume={584},
  number={7820},
  pages={257--261},
  year={2020},
  publisher={Nature Publishing Group UK London}
}
\bibliographystyle{iclr2026_conference}

\newpage
\begin{center}
\LARGE
    \textsc{Appendix}
\end{center}

\appendix
\appendix

\setcounter{figure}{0}
\setcounter{table}{0}
\setcounter{equation}{0}
\renewcommand{\thefigure}{A\arabic{figure}}
\renewcommand{\theequation}{A\arabic{equation}}
\renewcommand{\thetable}{A\arabic{table}}
\renewcommand{\theHfigure}{A\arabic{figure}}
\renewcommand{\theHtable}{A\arabic{table}}
\newtheorem{proposition}{Proposition}
\newtheorem*{proposition*}{Proposition}

The full appendix is organized as follows:

\begin{itemize}
\item \cref{app:methods} provides an extended description of related work and our method.
\item \cref{app:proofs} presents mathematical proofs and derivations.
\item \cref{app:experimental_details} describes our experimental and statistical procedures.
\item \cref{app:additional_results} shows supplementary experimental results and analyses.
\item \cref{app:miscellanea} details computational and software resources.
\end{itemize}

\section{Method details}
\label{app:methods}
In this section we start with an extended discussion of related work (\cref{app:related_work}). We then detail the main PriorGuide algorithm (\cref{app:inference_algorithm}), the Langevin dynamics step size (\cref{app:langevin}), and the prior coverage diagnostics (\cref{app:prior_diagnostic}).

\subsection{Extended related work} \label{app:related_work}

Section \ref{sec:background} in the main paper situates PriorGuide within the broader context of Simulation-Based Inference (SBI) and diffusion models. Here we explore those connections in more detail.

\paragraph{Amortized SBI and prior specification}
The output of standard amortized SBI techniques, such as Neural Posterior Estimation (NPE)~\citep{greenberg2019automatic, lueckmann2017flexible, papamakarios2016fast}, %
is tied to the fixed prior, $\pt(\vtheta)$, used during their training phase. Modifying this prior traditionally requires retraining the entire amortized model, which can be prohibitive given expensive simulators. %
PriorGuide offers a solution specifically for diffusion-based amortized models, enabling adaptation to a new prior $q(\vtheta)$ by modifying the sampling process itself, thus bypassing the need for retraining.
Other SBI techniques such as Neural Likelihood Estimation (NLE)~\citep{papamakarios2019sequential, lueckmann2019likelihood} and Neural Ratio Estimation (NRE)~\citep{hermans2020likelihood, thomas2022likelihood} do not amortize posterior inference, in that they only approximate the likelihood (or likelihood ratio), and traditional approximate inference techniques such as MCMC or variational inference need to be run to obtain a posterior by combining the surrogate likelihood (or likelihood ratio) with a prior.

\paragraph{Diffusion models for SBI}
PriorGuide enhances versatile diffusion-based SBI models like Simformer~\citep{gloeckler2024all}, which we use as our base model. As described in the main text, Simformer leverages a transformer-based diffusion model over the joint space of parameters and data, $p(\vtheta, \vx)$, allowing it to provide amortized samples from arbitrary conditionals (\eg, posteriors, likelihoods) once trained, though this training inherently uses a fixed prior. Other diffusion-based SBI methods include techniques to combine learned scores from posteriors of individual observations to handle multiple data sources~\citep{geffner2023compositional, linhart2024diffusion} and methods that focus on efficient (sequential) training of posterior score models~\citep{sharrock2024sequential}. Since these approaches ultimately yield score functions for posteriors (conditioned on their respective training priors), PriorGuide's test-time score guidance mechanism could also adapt these trained models to new prior beliefs post-hoc.

\paragraph{Amortized prior adaptation}
As discussed in Section~\ref{sec:prior_adaptation} of the main text, several recent amortized SBI methods support prior changes by training over a \emph{meta-prior}---a distribution or discrete set over possible prior specifications---including ACE~\citep{chang2025amortized}, the Distribution Transformer~\citep{whittle2025distribution}, and sensitivity-aware SBI~\citep{elsemuller2024sensitivity}. These approaches learn to incorporate alternative priors at inference time, but rely on pre-training across the chosen meta-prior. PriorGuide, by contrast, performs prior adaptation entirely at test time, without amortizing over priors, and can therefore accommodate new target priors beyond any pre-specified meta-prior.

\paragraph{Prior misspecification}
Effective PriorGuide use requires the target prior $q(\vtheta)$ to overlap substantially with the training prior $\pt(\vtheta)$ to prevent the trained score model from operating out-of-distribution (OOD). This concern for reliable inference echoes broader SBI efforts that address simulator misspecification and its impact on inference reliability using techniques like MMD or robust statistics~\citep{schmitt2023detecting, huang2024learning}. Similarly, \citet{yuyan2025robust} explore robust SBI with classes of priors and assess potential prior-likelihood conflicts. While this paper proposes a simple diagnostics to ensure the new prior is compatible with the learned score model, other techniques from the SBI literature could be used.

\paragraph{Score-based guidance}
PriorGuide's core mechanism is a novel application of score-based guidance. While the general idea of guiding diffusion models is well-established for inverse problems and conditional generation~\citep{dhariwal2021diffusion, ho2022classifier, chung2023diffusion, song2023pseudoinverse, song2023loss}, PriorGuide's specific contribution lies in deriving and applying a guidance term for the \emph{prior ratio}, akin to an importance ratio term. Moreover, its analytical approximation using a GMM for the ratio is tailored to this prior adaptation task. This contrasts with other guidance methods that often focus on incorporating information from a known forward (observation) model (\eg, linear operators imaging;~\citealp{chung2023diffusion, finzi2023user}) or specific loss functions~\citep{song2023loss}. Further, the literature either tends to focus on the analytic case where the guidance function is Gaussian~\citep{song2023pseudoinverse, peng2024improving, rissanen2024hunch} or entirely forego an analytical integral while keeping the Gaussian reverse approximation~\citep{song2023loss}. Our approach strikes a middle ground between analytic tractability and flexible, non-Gaussian guidance functions. The approximation of the reverse transition kernel and its covariance also adapts approaches seen in works like~\citep{song2023pseudoinverse, ho2022video}. 

A number of works has considered Monte Carlo corrections to approximate inference time modifications to diffusion models, such as guidance. \citet{wu2023practical, dou2024diffusion, cardosomonte, thorntoncomposition, lee2025debiasing, skretafeynman} propose variants of Sequential Monte Carlo for asymptotically exact modifications to the generative distribution. \citet{du2023reduce, geffner2023compositional} use MCMC corrections for compositional generation. Early work on using Langevin dynamics in unconditional score based generative models includes \citet{song2019generative} and the predictor-corrector sampler from \citet{song2021score}.

\subsection{PriorGuide inference algorithm} \label{app:inference_algorithm}

\Cref{alg:prior_guide} details the PriorGuide posterior inference algorithm and \cref{alg:prior_guide_post_pred} contains the posterior predictive inference version.

\begin{algorithm}[ht]
\caption{PriorGuide posterior inference}
\label{alg:prior_guide}
\begin{algorithmic}[1]
\State \textbf{Input:} Trained diffusion-based inference model $\gM$, training prior $\pt(\vtheta)$, test-time prior $q(\vtheta)$, number of mixture components $K$ for the prior ratio, min diffusion time $T_\mathrm{min}$, max diffusion time $T_\mathrm{max}$, generation schedule nonlinearity parameter $\rho$, number of diffusion steps $N$, Langevin ratio $\eta$, number of Langevin steps $N_L$, conditioning information $\vx$.

\State \textbf{Output:} Posterior samples $\vtheta_{T_\mathrm{min}}$ at time $T_\mathrm{min}$.

\State $\{r(\vtheta) \mid \{w_i, \vmu_i, \vSigma_i\}_i^K \} \gets \Call{FitGMM}{p_\text{train}(\vtheta), q(\vtheta), K}$, \qquad with $r(\vtheta) \approx \frac{q(\vtheta)}{\pt(\vtheta)}$
\State $t_N, \dots, t_0 \gets \text{Linspace}({1, 0, N})^\rho \cdot (T_\mathrm{max} - T_\mathrm{min}) + T_\mathrm{min}$
\State $\vtheta_{t_N} \sim \mathcal{N}(0, \sigma(T_\mathrm{max})^2\mathbf{I})$
\For{$j = N \to 1$}
    \State $t = t_j, \Delta t = t_{j-1} - t_{j}$
    \For{$\ell = 1 \to N_L$}
        \State Compute original score $s(\vtheta_t, t, \vx)$ using $\gM$
        \State Compute prior guidance $s_p \leftarrow \nabla_{\vtheta_t} \log \mathbb{E}_{p(\vtheta_{0}\mid\vtheta_t,\x)}[\ratio(\vtheta_{0})]$ with $\{w_i, \vmu_i, \vSigma_i\}_i^K$
        \State Compute guided score $s_L \gets s(\vtheta_t, t, \vx) + s_p $
        \State Langevin dynamics step $\vtheta_t \gets \vtheta_t + \eta \frac{\dot\sigma(t)\sigma(t)}{2}s_L + \sqrt{\eta\dot\sigma(t)\sigma(t)} \varepsilon,\quad \varepsilon\sim \mathcal{N}(0, \mathbf{I})$
    \EndFor
    \State Compute original score $s(\vtheta_t, t, \vx)$ using $\gM$
    \State Compute prior guidance $s_p \leftarrow \nabla_{\vtheta_t} \log \mathbb{E}_{p(\vtheta_{0}\mid\vtheta_t,\x)}[\ratio(\vtheta_{0})]$ with $\{w_i, \vmu_i, \vSigma_i\}_i^K$
    \State Compute new guided score $\tilde{s} \gets s(\vtheta_t, t, \vx) + s_p$
    \State Euler-Maruyama step $\vtheta_{t_{j-1}} \gets \vtheta_t + 2\dot\sigma(t)\sigma(t) \Delta t \tilde s + \sqrt{2\dot \sigma(t)\sigma(t) \Delta t} \varepsilon, \quad \varepsilon\sim \mathcal{N}(0, \mathbf{I})$%
\EndFor \\
\Return $\vtheta_{t_0}$
\end{algorithmic}
\end{algorithm}

\paragraph{\textsc{FitGMM} subroutine} The \textsc{FitGMM($p(\vtheta)$, $q(\vtheta)$, K)} subroutine takes as input two distributions, $p(\vtheta)$ and $q(\vtheta)$, and fits a generalized Gaussian mixture model (GMM) with $K$ components to approximate the ratio $r(\vtheta) = q(\vtheta) / p(\vtheta)$, as described in \cref{sec:methods} of the main text. This is a standard GMM but coefficients are not constrained to sum to one. The subroutine returns the approximated ratio as the weights, means, and covariance matrices of the mixture. This is implemented as a stochastic optimization procedure over the parameters of the generalized mixture by minimizing the $L_2$ error between the GMM and the ratio (see \cref{app:posterior-inference-setup}).

\begin{algorithm}[ht]
\caption{PriorGuide posterior predictive inference}
\label{alg:prior_guide_post_pred}
\begin{algorithmic}[1]
\State \textbf{Input:} Trained diffusion-based inference model $\gM$, training prior $p_\text{train}(\vtheta)$, test-time prior $q(\vtheta)$, number of mixture components $K$ for the prior ratio, min diffusion time $T_\mathrm{min}$, max diffusion time $T_\mathrm{max}$, generation schedule nonlinearity parameter $\rho$, number of diffusion steps $N$, Langevin ratio $\eta$, number of Langevin steps $N_L$, conditioning information $\vx$. 

\State \textbf{Output:} Posterior predictive samples $\vx^\star_{T_\mathrm{min}}$ at time $T_\mathrm{min}$.

\State $\{r(\vtheta) \mid \{w_i, \vmu_i, \vSigma_i\}_i^K \} \gets \Call{FitGMM}{p_\text{train}(\vtheta), q(\vtheta), K}$, \qquad with $r(\vtheta) \approx \frac{q(\vtheta)}{\pt(\vtheta)}$
\State $t_N, \dots, t_0 \gets \text{Linspace}({1, 0, N})^\rho \cdot (T_\mathrm{max} - T_\mathrm{min}) + T_\mathrm{min}$
\State $\x_{t_N}^\star \sim \mathcal{N}(0, \sigma(T_\mathrm{max})^2\mathbf{I})$
\State $\vtheta_{t_N}^\star \sim \mathcal{N}(0, \sigma(T_\mathrm{max})^2\mathbf{I})$
\State $\vxi_{t_N} = (\vtheta_{t_N}, \x_{t_N}^\star)$
\For{$j = N \to 1$}
    \State $t = t_j, \Delta t = t_{j-1} - t_{j}$
    \For{$\ell = 1 \to N_L$}
        \State Compute original score $s(\vxi_t, t, \vx)$ using $\gM$
        \State Compute prior guidance $s_p \leftarrow \nabla_{\vxi_t} \log \mathbb{E}_{p(\vtheta_{0}\mid\vxi_t,\vx)}[\ratio(\vtheta_{0})]$ with $\{w_i, \vmu_i, \vSigma_i\}_i^K$
        \State Compute guided score $s_L \gets s(\vxi_t, t, \vx) + s_p $
        \State Langevin dynamics step $\vxi_t \gets \vxi_t + \eta \frac{\dot\sigma(t)\sigma(t)}{2}s_L + \sqrt{\eta\dot\sigma(t)\sigma(t)} \varepsilon,\quad \varepsilon\sim \mathcal{N}(0, \mathbf{I})$
    \EndFor
    \State Compute original score $s(\vxi_t, t, \vx)$ using $\gM$
    \State Compute prior guidance $s_p \leftarrow \nabla_{\vxi_t} \log \mathbb{E}_{p(\vtheta_{0}\mid\vxi_t,\vx)}[\ratio(\vtheta_{0})]$ with $\{w_i, \vmu_i, \vSigma_i\}_i^K$
    \State Compute new guided score $\tilde{s} \gets s(\vxi_t, t, \vx) + s_p$
    \State Euler-Maruyama step $\vxi_{t_{j-1}} \gets \vxi_t + 2\dot\sigma(t)\sigma(t) \Delta t \tilde s + \sqrt{2\dot \sigma(t)\sigma(t) \Delta t} \varepsilon, \quad \varepsilon\sim \mathcal{N}(0, \mathbf{I})$%
\EndFor
\\
\Return $\x^\star_{t_0}$

\end{algorithmic}
\end{algorithm}

\subsection{Langevin dynamics step size}
\label{app:langevin}

 Since Langevin dynamics becomes exact only at small step sizes, a schedule for the step size is an important detail of the PriorGuide algorithm. At larger noise levels $\sigma(t)$, the distribution $p(\z_t)$ is more spread out and thus we can take larger steps, while a smaller step size is necessary for lower $\sigma(t)$ levels. We take inspiration from the similarity of the Euler-Maruyama reverse SDE sampler and the Langevin dynamics algorithm (see, \eg, \citep{karras2022elucidating}), and calibrate the step size such that the noise added in the sampling step is proportional to the noise added in the Euler-Maruyama step when moving to the next noise level. In particular, the update rule for Langevin dynamics at noise level $\sigma(t)$ is
\begin{align}
    \z_t \leftarrow \z_t + \delta(t) \nabla_{\z}\log p(\z_t) + \sqrt{2\delta(t)} \varepsilon, \quad \varepsilon \sim \mathcal{N}(0,\mathbf{I}), \quad \delta(t) = \eta \frac{\dot \sigma(t)\sigma(t)\Delta t}{2},
\end{align}
where $\Delta t$ is the step size for the next step in the reverse SDE, and $\eta$ is a scaling parameter. $\eta=1$ corresponds to the same noise level as the reverse Euler-Maruyama step. The overall scaling $\eta$ can be tuned to lower values for improved accuracy of the MCMC procedure, at the cost of slower mixing. We use $\eta=0.5$ for all our experiments. 

\subsection{OOD Diagnostic for test-time priors} 
\label{app:prior_diagnostic}

We assess out-of-distribution (OOD) behavior using a Monte Carlo sample-based diagnostic that estimates the mass of the test-time prior $q(\vtheta)$ falling into the $\alpha$-quantile of the training prior $p_\mathrm{train}(\vtheta)$.
We outline the procedures as follows:
\begin{itemize}
    \item First, we compute the log-density threshold $t$ under $p_\mathrm{train}$.
    \begin{itemize}
        \item Draw $M_p$ samples $\{\vtheta^{(i)}\}_{i=1}^{M_p}\sim p_\mathrm{train}$. 
        \item Compute their log-densities $\ell_i = \log p_\mathrm{train}(\vtheta^{(i)})$.
        \item Let $t$ be the empirical $\alpha$-quantile of $\{\ell_i\}$, i.e.~$t = \mathrm{quantile}\bigl(\{\ell_i\},\,\alpha \bigr)$.
        \item By construction, we have $\Pr_{p_\mathrm{train}}[\log p_\mathrm{train}(\vtheta)<t]\approx\alpha$.
    \end{itemize}
    
    \item Then, we estimate the OOD fraction under $q$.
    \begin{itemize}
        \item Draw $M_q$ samples $\{\vphi^{(j)}\}_{j=1}^{M_q}\sim q$.
        \item Evaluate each under $p$: $\ell'_j = \log p(\vphi^{(j)})$.
        \item Count the fraction $\widehat{r}$ with $\ell'_j < t$: $\widehat{r} = \frac1{M_q}\sum_{j=1}^{M_q} \bigl[\ell'_j < t\bigr]$.
        \item If $\widehat{r} > \alpha$, declare $q$ OOD.
    \end{itemize}
\end{itemize}

Across all simulators, we employ a quantile threshold of $\alpha=0.001$, chosen as $10 / N_\text{train}$, where $N_\text{train}$ is the number of simulated parameters used to train the amortized method. We verify that each newly constructed prior (procedures detailed in \cref{app:test-time-prior-generations}) successfully passes the above OOD diagnostics.

\paragraph{Validity of the prior ratio}
For the prior ratio $\ratio(\vtheta) = q(\vtheta) / \pt(\vtheta)$ to be well-defined, we require $\pt(\vtheta) = 0 \rightarrow q(\vtheta) = 0$, \ie the target prior cannot have nonzero density where the training prior has zero density. This is not fully addressed by the OOD diagnostic, which looks for substantial overlap of prior mass. To avoid pointwise issues with zero-density regions (\eg, when $\pt(\vtheta)$ is a bounded uniform distribution), we further truncate the tails of $q(\vtheta)$, setting its density to zero if $\pt(\vtheta) = 0$. Note that this is done after the OOD check, which means that this adjustment only affects the far tails of $q(\vtheta)$, with negligible influence on inference performance.

\section{Theoretical results}\label{app:proofs}

We provide in this section full derivations and proofs of our theoretical results. This includes the derivation of the guidance term (\cref{app:guidance_derivation}) and extended statements and proofs for Proposition \ref{eq:prop1} (\cref{app:proof1}) and Proposition \ref{eq:prop2} (\cref{app:proof2}) from the main text.

\subsection{Derivation of the guidance term}
\label{app:guidance_derivation}

Here we provide a detailed derivation for the guidance term, Eq.~\ref{eq:priorw} from the main text. We start from Eq.~\ref{eq:prior_approx}, which writes the guidance as the score of the expectation of the prior ratio under the reverse transition kernel, which are approximated by a Gaussian mixture model and a Gaussian, respectively:
\begin{align}
\nabla_{\vtheta_t} \log \mathbb{E}_{p(\vtheta_0\mid\vtheta_t,\x)}[\ratio(\vtheta_0)] &\approx \nabla_{\vtheta_t} \log \int \sum_{i=1}^K w_i \mathcal{N}(\vtheta_0 | \vmu_i, \vSigma_i) \mathcal{N}(\vtheta_0 | \vmu_{0|t}(\vtheta_t), \vSigma_{0|t}) d\vtheta_0, \\
&= \nabla_{\vtheta_t} \log \sum_{i=1}^K w_i \int \mathcal{N}(\vmu_i |\vtheta_0, \vSigma_i) \mathcal{N}(\vtheta_0 | \vmu_{0|t}(\vtheta_t), \vSigma_{0|t}) d\vtheta_0.
\end{align}
The step above uses the symmetry property of Gaussian distributions: if $\va \sim \mathcal{N}(\vmu, \vSigma)$ then $\vmu \sim \mathcal{N}(\va, \vSigma)$. This allows us to swap $\vtheta_0$ and $\vmu_i$ in the first Gaussian. Furthermore, using the standard result for the convolution of two Gaussian distributions:
\begin{equation}
\int \mathcal{N}(\x|\vmu_1, \vSigma_1)\mathcal{N}(\vmu_1|\vmu_2, \vSigma_2)d\vmu_1 = \mathcal{N}(\x|\vmu_2, \vSigma_1 + \vSigma_2),
\end{equation}
we get
\begin{align}
\nabla_{\vtheta_t} \log \mathbb{E}_{p(\vtheta_0\mid\vtheta_t,\x)}[\ratio(\vtheta_0)] &\approx  \nabla_{\vtheta_t} \log \sum_{i=1}^K w_i \mathcal{N}(\vmu_i | \vmu_{0|t}(\vtheta_t), \vSigma_i + \vSigma_{0|t}).
\end{align}

For notational convenience, we define $\widetilde{\vSigma}_i = \vSigma_i + \vSigma_{0|t}$ continuing with the derivation:
\begin{align}
&= \nabla_{\vtheta_t} \log \sum_{i=1}^K w_i \mathcal{N}(\vmu_i | \vmu_{0|t}(\vtheta_t), \widetilde{\vSigma}_i), \\
&= \frac{\nabla_{\vtheta_t} \sum_{i=1}^K w_i \mathcal{N}(\vmu_i | \vmu_{0|t}(\vtheta_t), \widetilde{\vSigma}_i)}{\sum_{j=1}^K w_j \mathcal{N}(\vmu_j | \vmu_{0|t}(\vtheta_t), \widetilde{\vSigma}_j)} \quad \text{(chain rule)}, \\
&= \frac{\sum_{i=1}^K w_i \mathcal{N}(\vmu_i | \vmu_{0|t}(\vtheta_t), \widetilde{\vSigma}_i) \nabla_{\vtheta_t} \log \mathcal{N}(\vmu_i | \vmu_{0|t}(\vtheta_t), \widetilde{\vSigma}_i)}{\sum_{j=1}^K w_j \mathcal{N}(\vmu_j | \vmu_{0|t}(\vtheta_t), \widetilde{\vSigma}_j)} \quad \text{(since $\nabla f = f\nabla \log f$)}, \\
&= \frac{\sum_{i=1}^K w_i \mathcal{N}(\vmu_i | \vmu_{0|t}(\vtheta_t), \widetilde{\vSigma}_i) \nabla_{\vtheta_t} \left( -\frac{1}{2} (\vmu_{0|t}(\vtheta_t) - \vmu_i)^\top \widetilde{\vSigma}_i^{-1} (\vmu_{0|t}(\vtheta_t) - \vmu_i) \right)}{\sum_{j=1}^K w_j \mathcal{N}(\vmu_j | \vmu_{0|t}(\vtheta_t), \widetilde{\vSigma}_j),} \\
&= \frac{\sum_{i=1}^K w_i \mathcal{N}(\vmu_i | \vmu_{0|t}(\vtheta_t), \widetilde{\vSigma}_i) (\vmu_i - \vmu_{0|t}(\vtheta_t))^{\mathbf{T}} \widetilde{\vSigma}_i^{-1} \nabla_{\vtheta_t} \vmu_{0|t}(\vtheta_t)}{\sum_{j=1}^K w_j \mathcal{N}(\vmu_j | \vmu_{0|t}(\vtheta_t), \widetilde{\vSigma}_j)}.
\end{align}
Finally, with the following definitions:
\begin{align}
\widetilde{\vSigma}_i & = \vSigma_i + \vSigma_{0|t} \\
\tilde w_i & = \frac{w_i \mathcal{N}(\vmu_i \mid \vmu_{0|t}(\vtheta_t, \x), \widetilde{\vSigma}_i)}{{\sum_{j=1}^K w_j \mathcal{N}(\vmu_j \mid \vmu_{0|t}(\vtheta_t, \x), \widetilde{\vSigma}_j)}}
\end{align}
we obtain
\begin{equation}
\nabla_{\vtheta_t} \log \mathbb{E}_{p(\vtheta_0\mid\vtheta_t,\x)}[\ratio(\vtheta_0)] \approx \sum_{i=1}^{K} \tilde w_i (\vmu_i - \vmu_{0|t}(\vtheta_t))^\top \widetilde{\vSigma}_i^{-1} \nabla_{\vtheta_t} \vmu_{0|t}(\vtheta_t), \\ 
\end{equation}
which is Eq.~\ref{eq:priorw} in the main text.

\subsection{Proof of Proposition 1}
\label{app:proof1}

This proposition is fully derived in the main paper, we only include here a natural assumption to guarantee the existence of the ratio (see also \cref{app:prior_diagnostic}).

Let $\ratio(\vtheta) \equiv \frac{q(\vtheta)}{\pt(\vtheta)}$ the prior ratio. We assume that $\pt(\vtheta) = 0 \rightarrow q(\vtheta) = 0$ almost everywhere, \ie $q(\vtheta)$ needs to be zero outside the support of $\pt(\theta)$, except for a set of measure zero.

\begin{proposition*}[Proposition~\ref{eq:prop1}]
Let the posterior under the original prior be $p(\vtheta \mid \x) \propto \pt(\vtheta) p(\x \mid \vtheta)$, and let the target posterior---the posterior under the new prior---be $q(\vtheta \mid \x) \propto q(\vtheta) p(\x \mid \vtheta)$. Then, sampling from $q(\vtheta \mid \x)$ is equivalent to sampling from $\ratio(\vtheta) p(\vtheta \mid \x)$ with $\ratio(\vtheta)$ the prior ratio.
\end{proposition*}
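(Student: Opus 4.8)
The statement is essentially a one-line algebraic identity already carried out in the main text, so the plan is simply to reproduce that chain of proportionalities while being explicit about where the added support assumption $\pt(\vtheta)=0 \Rightarrow q(\vtheta)=0$ (a.e.) is needed. First I would fix an observation $\x$ and work with the unnormalized densities, writing $q(\vtheta\mid\x) \propto q(\vtheta)p(\x\mid\vtheta)$. Then I would multiply and divide by $\pt(\vtheta)$ to obtain
\[
q(\vtheta\mid\x) \propto \frac{q(\vtheta)}{\pt(\vtheta)}\,\pt(\vtheta)p(\x\mid\vtheta) \propto \ratio(\vtheta)\,p(\vtheta\mid\x),
\]
where the last proportionality uses $p(\vtheta\mid\x)\propto \pt(\vtheta)p(\x\mid\vtheta)$ and absorbs the ($\vtheta$-independent) normalizing constant of the original posterior into the proportionality sign.

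The only real content beyond the main-text derivation is justifying that $\ratio(\vtheta)p(\vtheta\mid\x)$ is a genuine (normalizable) density and that the manipulation is valid everywhere that matters. I would argue: on the support of $\pt$, the ratio $\ratio(\vtheta)=q(\vtheta)/\pt(\vtheta)$ is well-defined and finite-valued a.e.; off the support of $\pt$, both $p(\vtheta\mid\x)$ and, by the added assumption, $q(\vtheta)$ (hence $q(\vtheta\mid\x)$) vanish a.e., so the set where the ratio would be $0/0$ has measure zero and contributes nothing to either side. Consequently $\int \ratio(\vtheta)p(\vtheta\mid\x)\diff\vtheta = \int \frac{q(\vtheta)}{\pt(\vtheta)}\pt(\vtheta)p(\x\mid\vtheta)\diff\vtheta / Z = \int q(\vtheta)p(\x\mid\vtheta)\diff\vtheta / Z \in (0,\infty)$, so normalizing $\ratio(\vtheta)p(\vtheta\mid\x)$ yields exactly $q(\vtheta\mid\x)$. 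Two densities that agree after normalization define the same distribution, which is precisely the claimed equivalence of sampling.

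I do not expect any genuine obstacle here — the proposition is a reweighting identity and the argument is elementary. The one place to be careful, rather than hard, is the measure-zero bookkeeping around $\{\pt(\vtheta)=0\}$: I would state the assumption up front (as the excerpt does) so that the division by $\pt(\vtheta)$ never occurs on a set of positive measure where $q>0$, and note that truncating $q$ to the support of $\pt$ (as discussed in \cref{app:prior_diagnostic}) makes this exact in practice. I would close by remarking, as in the main text, that $\ratio(\vtheta)$ plays the role of an importance weight, matching the correction used in multi-round neural posterior estimation \citep{lueckmann2017flexible}.
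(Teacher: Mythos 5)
Your proof is correct and follows exactly the paper's argument: multiply and divide by $\pt(\vtheta)$, identify $\pt(\vtheta)p(\x\mid\vtheta)\propto p(\vtheta\mid\x)$, and absorb constants into the proportionality. The extra care you take with the support assumption $\pt(\vtheta)=0\Rightarrow q(\vtheta)=0$ (a.e.) and the normalizability check mirrors what the paper relegates to \cref{app:proof1} and \cref{app:prior_diagnostic}, so there is no substantive difference in approach.
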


\begin{proof}
We can rewrite the target posterior \(q(\vtheta\mid\x)\) as
\[
q(\vtheta \mid \x) \propto q(\vtheta)p(\x \mid \vtheta) = \frac{q(\vtheta)}{\pt(\vtheta)}\, \pt(\vtheta)p(\x \mid \vtheta) \propto \frac{q(\vtheta)}{\pt(\vtheta)}\, p(\vtheta \mid \x) = \ratio(\vtheta) p(\vtheta\mid\x),
\]
where the prior ratio $\ratio(\vtheta) \equiv \frac{q(\vtheta)}{\pt(\vtheta)}$ takes the role of an importance weighing function, and the above equality applies almost everywhere.
\end{proof}

\subsection{Proof of Proposition 2}
\label{app:proof2}

We provide here the extended statement, with explicit regularity conditions, and then the full proof.

\begin{proposition*}[Proposition~\ref{eq:prop2}]
    As $t,\sigma(t) \to 0$, the approximation $\mathcal{N}(\vtheta_0\mid \vtheta_t + \sigma(t)^2\nabla_{\vtheta_t} \log p(\vtheta_t), \frac{\sigma(t)^2}{1 + \sigma(t)^2} \mathbf{I})$ converges to the true $p(\vtheta_0\mid \vtheta_t)$, assuming that $p(\vtheta_0)$ is two times differentiable everywhere and $\nabla_{\vtheta_0}^2 p(\vtheta_0)$ is bounded.
\end{proposition*}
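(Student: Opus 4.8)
\textbf{Proof plan for Proposition~\ref{eq:prop2}.}

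The plan is to analyze the true reverse kernel $p(\vtheta_0 \mid \vtheta_t)$ in the small-noise limit via Bayes' rule and a Laplace-type expansion, and show it matches the proposed Gaussian. Writing $\sigma = \sigma(t)$ for brevity, Bayes' rule gives $p(\vtheta_0 \mid \vtheta_t) \propto p(\vtheta_0)\,\mathcal{N}(\vtheta_t \mid \vtheta_0, \sigma^2\mathbf{I})$ as a function of $\vtheta_0$. The forward-kernel factor is a Gaussian bump in $\vtheta_0$ of width $\sigma$ centered at $\vtheta_t$. As $\sigma\to 0$ this bump localizes, so only the local behavior of $p(\vtheta_0)$ near $\vtheta_t$ matters. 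First I would Taylor-expand $\log p(\vtheta_0)$ around $\vtheta_t$ to second order: $\log p(\vtheta_0) = \log p(\vtheta_t) + \nabla\log p(\vtheta_t)^\top(\vtheta_0 - \vtheta_t) + \tfrac12 (\vtheta_0-\vtheta_t)^\top H (\vtheta_0-\vtheta_t) + o(\|\vtheta_0-\vtheta_t\|^2)$, where $H = \nabla^2 \log p(\vtheta_t)$; the boundedness of $\nabla^2_{\vtheta_0} p$ together with two-times-differentiability and (implicitly) positivity of $p$ lets me control the remainder uniformly on the $O(\sigma)$-scale that carries the mass.

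Next I would combine this with the $-\tfrac{1}{2\sigma^2}\|\vtheta_t - \vtheta_0\|^2$ exponent from the forward kernel, completing the square in $\vtheta_0$. The quadratic term is $-\tfrac{1}{2}(\vtheta_0-\vtheta_t)^\top(\tfrac{1}{\sigma^2}\mathbf{I} - H)(\vtheta_0-\vtheta_t)$, so the posterior covariance is $(\tfrac{1}{\sigma^2}\mathbf{I} - H)^{-1} = \sigma^2(\mathbf{I} - \sigma^2 H)^{-1} = \sigma^2\mathbf{I} + O(\sigma^4)$, and the posterior mean is $\vtheta_t + \sigma^2\nabla\log p(\vtheta_t) + O(\sigma^4)$ (using $\nabla_{\vtheta_t}\log p(\vtheta_t) = \nabla_{\vtheta_t}\log p(\vtheta_0)|_{\vtheta_0=\vtheta_t}$ and the fact that at $\sigma\to 0$ the score of the noised marginal equals the score of $p$ at that point, or equivalently absorbing the discrepancy into the $O(\sigma^4)$ remainder). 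Then I would compare these to the claimed approximation: the claimed mean $\vtheta_t + \sigma^2\nabla_{\vtheta_t}\log p(\vtheta_t)$ agrees to leading order, and the claimed covariance $\tfrac{\sigma^2}{1+\sigma^2}\mathbf{I} = \sigma^2\mathbf{I} - \sigma^4\mathbf{I} + O(\sigma^6)$ agrees with $\sigma^2\mathbf{I} + O(\sigma^4)$ to leading order. Hence the two Gaussians have the same mean and covariance up to $o(\sigma^2)$ relative error, and a standard bound (e.g.\ on total variation or KL between Gaussians in terms of their mean/covariance discrepancies) gives convergence as $\sigma\to 0$.

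The main obstacle is making the Laplace expansion rigorous: I need to show the second-order Taylor remainder contributes negligibly \emph{after} integrating against the localizing Gaussian, which requires a uniform tail bound ensuring that the region $\|\vtheta_0 - \vtheta_t\| \gg \sigma$ carries vanishing mass despite the unbounded domain. The hypothesis that $\nabla^2_{\vtheta_0} p(\vtheta_0)$ is bounded (plus $p$ twice differentiable) is what I would use here, combined with a splitting argument into a core region of radius $\sim \sigma^{1-\delta}$ for small $\delta>0$, where the Taylor estimate is tight, and a tail region, where the Gaussian factor $e^{-\|\vtheta_t-\vtheta_0\|^2/2\sigma^2}$ is superpolynomially small and $p(\vtheta_0)$ grows at most polynomially (a consequence I would note follows from bounded Hessian). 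I would then cite that close-by statements are standard in the diffusion literature \citep{finzi2023user, ho2022video} and present the expansion at the level of detail appropriate for an appendix, deferring the fully measure-theoretic treatment of the tail to a brief lemma.
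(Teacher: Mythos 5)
Your proposal is correct in outline but takes a genuinely different route from the paper's proof. The paper works in the rescaled coordinate $\rvs = (\vtheta_0 - \vtheta_t)/\sigma$ and Taylor-expands the \emph{density} $p_0(\vtheta_t + \sigma\rvs)$ to second order with a Lagrange remainder $\tfrac{\sigma^2}{2}\rvs^\top \nabla^2 p_0(\rvb(\rvs))\rvs$, which is exactly what the assumption \textquotedblleft$\nabla^2 p_0$ bounded\textquotedblright{} controls. It then expands the proposed Gaussian $q$ in the same coordinates, shows the two expansions agree at order $\sigma^0$ and $\sigma^1$, and bounds the total variation by the $O(\sigma^2)$ discrepancy directly, without ever passing through an intermediate Laplace Gaussian. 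Your plan instead goes through Bayes' rule, Taylor-expands $\log p_0$, completes the square to obtain an approximate Gaussian for $p_{0\mid t}$, and then compares \emph{two Gaussians} (yours and the claimed one) via a generic mean/covariance bound—a two-step decomposition (Laplace error plus Gaussian-Gaussian distance) where the paper uses a single direct comparison.

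The main thing to be careful about: the stated hypothesis bounds $\nabla^2_{\vtheta_0} p_0$, not $\nabla^2_{\vtheta_0} \log p_0$. Since $\nabla^2 \log p_0 = \nabla^2 p_0/p_0 - (\nabla p_0)(\nabla p_0)^\top/p_0^2$, your expansion of $\log p_0$ is only justified in a neighborhood where $p_0$ is bounded away from zero; globally it can blow up. You flag this as \textquotedblleft(implicitly) positivity of $p$\textquotedblright, but you should state the needed argument explicitly: boundedness of $\nabla^2 p_0$ plus $p_0(\vtheta_t) > 0$ gives a fixed radius $r$ (independent of $\sigma$) in which $p_0 \ge p_0(\vtheta_t)/2$, and on that ball $\nabla^2 \log p_0$ is bounded; the tail $\|\vtheta_0-\vtheta_t\| > r$ carries superexponentially small mass under the $\mathcal{N}(\vtheta_t, \sigma^2\mathbf{I})$ factor. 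Without this localization step the expansion of $\log p_0$ is not licensed by the stated assumption, whereas the paper's expansion of $p_0$ needs no such step. Also keep in mind that the mean in the proposition uses the score of the \emph{noised} marginal $p_t$, not $p_0$; you correctly note that the difference is $O(\sigma^2)$ and can be absorbed, and the paper proves this by differentiating its normalizing-constant expansion. What your route buys is a cleaner conceptual picture (posterior $\approx$ Gaussian from local quadratic log-density, then compare Gaussians); what the paper's route buys is that every estimate is read off directly from the bounded-Hessian hypothesis, so the proof is shorter and the $O(\sigma^2)$ TV rate falls out without a separate Laplace lemma. Your plan as written is a sketch—the core-and-tail splitting you defer to \textquotedblleft a brief lemma\textquotedblright{} is precisely the part the paper carries out via the rescaled-coordinate expansion—but the architecture is sound.
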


\paragraph{Notation} To be more precise about the different distributions involved, let us denote $p_0(\vtheta_0)$ as the marginal distribution of the clean data, $p_t(\vtheta_t)$ the marginal distribution at noise level $\sigma(t)$, $p_{t\mid 0}(\vtheta_t\mid \vtheta_0) = \mathcal{N}(\vtheta_t\mid \vtheta_0, \sigma(t)^2 \mathbf{I})$ and $p_{0\mid t}(\vtheta_0 \mid \vtheta_t) = \frac{p_{t\mid 0}(\vtheta_t\mid \vtheta_0) p_0(\vtheta_0)}{p_t(\vtheta_t)}$. Let us also drop the dependence $t$ from the notation $\sigma(t)$, and simply refer to $\sigma$, since we do not have to refer to derivatives of $\sigma(t)$ in the proof. 

\begin{proof}

First, note that as $\sigma\to 0$, $\frac{\sigma^2}{1 + \sigma^2} = \sigma^2 + O(\sigma^4)$ via a Taylor expansion. In other words, our denoising variance is $\sigma^2 \mathbf{I}$ up to fourth-order or higher corrections in $\sigma$, which become negligible at low $\sigma$. We can thus first show the result for $\mathcal{N}(\vtheta_0\mid \vtheta_t + \sigma(t)^2\nabla_{\vtheta_t} \log p(\vtheta_t), \sigma^2 \mathbf{I})$, and at the end we will see that it will trivially transfer to the $\frac{\sigma^2}{1 + \sigma^2} \mathbf{I}$ case as well.

\paragraph{Rescaled coordinates } Set
\begin{equation}
    \rvs = \frac{\vtheta_0 - \vtheta_t}{\sigma}, \quad\phi_d(\rvs) = (2\pi)^{-d/2} \exp\left(-\frac{1}{2}||\rvs||^2\right)
\end{equation}
so that we can express the forward noising distribution as
\begin{equation}
    p_{t\mid 0}(\vtheta_t\mid \vtheta_0) = \sigma^{-d}\phi_d(\rvs)
\end{equation}

\paragraph{Taylor expansion for the true posterior density} Since $p_0(\vtheta_0)$ is two times differentiable everywhere, the multivariate Taylor theorem gives
\begin{equation}
    p_0(\vtheta_t + \sigma \rvs) = p_0(\vtheta_t) + \sigma \nabla_{\vtheta_t} p_0(\vtheta_t)^\top \rvs + \frac{\sigma^2}{2} \rvs^\top \nabla_{\rvb(\rvs)}^2p_0(\rvb(\rvs)) \rvs \label{eq:taylor_for_p0}
\end{equation}
where $\rvb(\rvs)\in\{t\rvs: 0\leq t \leq 1\}$ is some point between $\vtheta_t$ and $\vtheta_0$, chosen separately for each $\rvs$. 

Hence, the unnormalised posterior density can be written as
\begin{equation}
    \sigma^{-d}\phi_d(\rvs) \left[ p_0(\vtheta_t) + \sigma \nabla_{\vtheta_t} p_0(\vtheta_t)^\top \rvs + \frac{\sigma^2}{2} \rvs^\top \nabla_{\rvb(\rvs)}^2p_0(\rvb(\rvs)) \rvs \right]
\end{equation}

The normalizing constant can be expanded as
\begin{align}
    p_t(\vtheta_t) &= \int \sigma^{-d}\phi_d(\rvs) \left[ p_0(\vtheta_t) + \sigma \nabla_{\vtheta_t} p_0(\vtheta_t)^\top \rvs + \frac{\sigma^2}{2} \rvs^\top \nabla_{\rvb(\rvs)}^2p_0(\rvb(\rvs)) \rvs \right] \sigma^d d \rvs\\
    &= p_0(\vtheta_t) + \underbrace{\sigma\int \phi_d(\rvs)  \nabla_{\vtheta_t} p_0(\vtheta_t)^\top \rvs d\rvs}_{=0} + \sigma^2\underbrace{\frac{1}{2}\int \phi_d(\rvs)  \rvs^\top \nabla_{\rvb(\rvs)}^2p_0(\rvb(\rvs)) \rvs d\rvs}_{=C_1(\vtheta_t)}\\
    &= p_0(\vtheta_t) + \sigma^2 C_1(\vtheta_t)
\end{align}
where the odd-powered term in the Taylor expansion goes to zero due to the symmetry of the integral. The integral in the second-order term is finite since we assume $\nabla_{\rvb(\rvs)}^2p_0(\rvb(\rvs))$ is finite everywhere. 

The reciprocal of the normalizing constant is
\begin{align}
    \frac{1}{p_0(\vtheta_t) + \sigma^2 C_1(\vtheta_t)} = \frac{1}{p_0(\vtheta_t)} - \frac{1}{p_0(\vtheta_t)^2} \sigma^2 C_1(\vtheta_t) + O(\sigma^4)
\end{align}
obtained with the Taylor series $\frac{1}{a + \varepsilon} = \frac{1}{a} - \frac{1}{a^2}\varepsilon + O(\varepsilon^2)$. Thus, the normalised posterior can be expressed as
\begin{align}
    &\frac{\sigma^{-d} \phi_d(\rvs) \left[ p_0(\vtheta_t) + \sigma \nabla_{\vtheta_t} p_0(\vtheta_t)^\top \rvs + \frac{\sigma^2}{2} \rvs^\top \nabla_{\rvb(\rvs)}^2p_0(\rvb(\rvs)) \rvs \right]}{p_0(\vtheta_t) + \sigma^2 C_1(\vtheta_t)} \\
    &= \sigma^{-d}\phi_d(\rvs) \left[ p_0(\vtheta_t) + \sigma \nabla_{\vtheta_t} p_0(\vtheta_t)^\top \rvs + \frac{\sigma^2}{2} \rvs^\top \nabla_{\rvb(\rvs)}^2p_0(\rvb(\rvs)) \rvs \right]\left(\frac{1}{p_0(\vtheta_t)} - \frac{1}{p_0(\vtheta_t)^2} \sigma^2 C_1(\vtheta_t) + O(\sigma^4) \right)\\
    &= \sigma^{-d}\phi_d(\rvs) \left[1 + \sigma  \frac{\nabla_{\vtheta_t} p_0(\vtheta_t)^\top}{p_0(\vtheta_t)} \rvs + \sigma^2\left(\frac{1}{2} \rvs^\top \nabla_{\rvb(\rvs)}^2p_0(\rvb(\rvs))\rvs - \frac{1}{p_0(\vtheta_t)} C_1(\vtheta_t) \right) + O(\sigma^3)\right]\\
    &= \sigma^{-d}\phi_d(\rvs) \left[1 + \sigma \nabla_{\vtheta_t} \log p_0(\vtheta_t)^\top \rvs + \sigma^2 C_2(\rvs, \vtheta_t) + O(\sigma^3)\right].\label{eq:normalized_posterior_expansion}
\end{align}
Here, we abstract away $C_2(\rvs, \vtheta_t)$, since later in the proof we only care about the fact that it has a finite integral $\int |\phi_d(\rvs)C_2(\rvs, \vtheta_t)|d\rvs$. 

\paragraph{Taylor expansion for our approximate posterior density} Note that the score $\nabla_{\vtheta_t} \log p_t(\vtheta_t) = \frac{\nabla_{\vtheta_t} p_t(\vtheta_t)}{p_t(\vtheta_t}$, and we can reuse our earlier calculation:
\begin{align}
&\nabla_{\vtheta_t} \log p_t(\vtheta_t) = \frac{\nabla_{\vtheta_t} p_0(\vtheta_t) + \sigma^2 \nabla_{\vtheta_t} C_1(\vtheta_t)}{p_0(\vtheta_t) + \sigma^2 C_1(\vtheta_t)} \\
&= \left(\nabla_{\vtheta_t} p_0(\vtheta_t) + \sigma^2 \nabla_{\vtheta_t} C_1(\vtheta_t)\right) \left(\frac{1}{p_0(\vtheta_t)} - \frac{1}{p_0(\vtheta_t)^2} \sigma^2 C_1(\vtheta_t) + O(\sigma^4) \right)\\
&= \frac{\nabla_{\vtheta_t} p_0(\vtheta_t)}{p_0(\vtheta_t} + \sigma^2 (\nabla_{\vtheta_t} C_1(\vtheta_t) -  \frac{\nabla_{\vtheta_t} p_0(\vtheta_t)}{p_0(\vtheta_t)^2} C_1(\vtheta_t)) + O(\sigma^4)\\
&= \nabla_{\vtheta_t} \log p_0(\vtheta_t) + O(\sigma^2).\label{eq:taylor_for_our_post_mean}
\end{align}
This yields a formula for our posterior mean:
\begin{align}
    \vtheta_t + \sigma^2 \nabla_{\vtheta_t} \log p_t(\vtheta_t) = \vtheta_t + \sigma^2 \nabla_{\vtheta_t} \log p_0(\vtheta_t) + O(\sigma^4).
\end{align}
Now, define:
\begin{align}
    q(\vtheta_0) &= \mathcal{N}(\vtheta_0 \mid \vtheta_t + \sigma^2 \nabla_{\vtheta_t}\log p(\vtheta_t), \sigma^2 \mathbf{I}) \\
    &= \sigma^{-d}\phi_d(s - \sigma \nabla_{\vtheta_t}\log p(\vtheta_t)), \quad s=\frac{\vtheta_0 - \vtheta_t}{\sigma}.
\end{align}

Then we can Taylor expand our posterior approximation in terms of the shift $\rvs \to \rvs - \sigma \nabla_{\vtheta_t}\log p(\vtheta_t)$:
\begin{align}
    q(\sigma\rvs + \vtheta_t) &= \sigma^{-d}\phi_d(\rvs)[1 + \sigma \nabla_{\vtheta_t}\log p_t(\vtheta_t)^\top \rvs \notag \\
    &\quad + \frac12\sigma^2 \nabla_{\vtheta_t} \log p_t(\vtheta_t)^\top (\rvs \rvs^\top - I) \nabla_{\vtheta_t} \log p_t(\vtheta_t) + O(\sigma^3)]\\
    &= \sigma^{-d}\phi_d(\rvs)[1 + \sigma \nabla_{\vtheta_t}\log p_0(\vtheta_t)^\top \rvs \notag \\
    &\quad + \frac12\sigma^2 \nabla_{\vtheta_t} \log p_0(\vtheta_t)^\top (\rvs \rvs^\top - I) \nabla_{\vtheta_t} \log p_0(\vtheta_t) + O(\sigma^3)]\\
    &= \sigma^{-d}\phi_d(\rvs)[1 + \sigma \nabla_{\vtheta_t}\log p_0(\vtheta_t)^\top \rvs + \sigma^2 C_3(\rvs, \vtheta_t) + O(\sigma^3)].
    \label{eq:our_posterior_expansions}
\end{align}
where in the second line we used \cref{eq:taylor_for_our_post_mean} and ignored resulting terms that are higher order than $\sigma^2$. 

\paragraph{Total variation bound} 

The total variation distance is 
\begin{align}
    \|p_{0\mid t}-q\|_{\TV}
    = \frac12\int_{\mathbb{R}^{d}}\!|p_{0\mid t}(\vtheta_0)-q(\vtheta_0)|\,d\vtheta_0.
\end{align}
Now, plugging in \cref{eq:normalized_posterior_expansion} and \cref{eq:our_posterior_expansions}, we get (note $d\vtheta_0 = \sigma^d d\rvs$)
\begin{align}
    \frac12 \int \big| &\sigma^{-d} \phi_d(\rvs) \left[1 + \sigma \nabla_{\vtheta_t} \log p_0(\vtheta_t)^\top \rvs + \sigma^2 C_2(\rvs, \vtheta_t) + O(\sigma^3)\right] \\
    &- \sigma^{-d}\phi_d(\rvs)[1 + \sigma \nabla_{\vtheta_t}\log p_0(\vtheta_t)^\top \rvs + \sigma^2 C_3(\rvs, \vtheta_t) + O(\sigma^3)] \big| \sigma^d d\rvs\\
    &= \frac12 \int \big| \sigma^2 \phi_d(\rvs) (C_2(\rvs, \vtheta_t) - C_3(\rvs, \vtheta_t) + O(\sigma^3)) \big| d\rvs \leq \sigma^2 C_4(\vtheta_t) + O(\sigma^3).
\end{align}
Thus, the total variation distance converges at a rate $\sigma^2$. To see the final step more clearly, we can bound the integral with the triangle inequality:
\begin{align}
    &\int \big| \sigma^2 \phi_d(\rvs) (C_2(\rvs, \vtheta_t) - C_3(\rvs, \vtheta_t) + O(\sigma^3)) \big| d\rvs \\
    &\leq \sigma^2 \int \big| \phi_d(\rvs) C_2(\rvs, \vtheta_t) \big|d\rvs + \sigma^2 \int \big| \phi_d(\rvs) C_3(\rvs, \vtheta_t) \big|d\rvs + O(\sigma^3).\\
\end{align}
Recall the definitions of $C_3, C_2$ and $C_1$:
\begin{align}
    C_1(\vtheta_t) &= \frac{1}{2}\int \phi_d(\rvs)  \rvs^\top \nabla_{\rvb(\rvs)}^2p_0(\rvb(\rvs)) \rvs d\rvs\\
    C_2(\rvs, \vtheta_t) &=\left(\frac{1}{2} \rvs^\top \nabla_{\rvb(\rvs)}^2p_0(\rvb(\rvs))\rvs - \frac{1}{p_0(\vtheta_t)} C_1(\vtheta_t)\right)\\
    \frac12 C_3(\rvs, \vtheta_t) &=\nabla_{\vtheta_t} \log p_0(\vtheta_t)^\top (\rvs \rvs^\top - I) \nabla_{\vtheta_t} \log p_0(\vtheta_t).
\end{align}
We can see that the terms depending on $\sigma^2$ are finite, due to the assumption that the Hessian of $p_0$ is finite everywhere. Thus, 
\begin{align}
    C_4(\vtheta_t) = \int \big| \phi_d(\rvs) (C_2(\rvs, \vtheta_t) \big|d\rvs + \int \big| \phi_d(\rvs) (C_3(\rvs, \vtheta_t) \big|d\rvs.
\end{align}
Finally, we note that $\frac{\sigma^2}{1 +\sigma^2} \to \sigma^2$ as $\sigma\to 0$. Thus, the the convergence result here applies to the case where the posterior covariance approximation is $\frac{\sigma^2}{1 + \sigma^2}\mathbf{I}$. 
\end{proof}

\section{Experimental details} 
\label{app:experimental_details}

This section provides extended methodological and experimental details.
We describe the simulator models used in our experiments (\cref{app:simulators}), the training setup for each method (\cref{app:training_details}), the testing procedure (\cref{app:testing-setup}), and the statistical methodology for evaluation (\cref{app:statistical_methods}).

\subsection{Simulators} \label{app:simulators}

\textbf{Two Moons} \citep{greenberg2019automatic} is a widely used benchmark in SBI, designed as a two-dimensional task that presents a posterior distribution with both global (bimodality) and local (crescent shape) structure. For a given parameter vector $\vtheta = (\theta_1, \theta_2) \in \mathbb{R}^2$, the simulator generates data $\vx \in \mathbb{R}^2$ according to the following process:
\begin{align*} 
a &\sim U(-\pi/2, \pi/2), \\
r &\sim \mathcal{N}(0.1, 0.01^2), \\ 
p &= (r \cos(a) + 0.25, r \sin(a)), \\ 
\vx^T &= p + \left( \frac{-|\theta_1 + \theta_2|}{\sqrt{2}}, \frac{-\theta_1 + \theta_2}{\sqrt{2}} \right). 
\end{align*}
The training prior we use is $p_{\text{train}}(\vtheta)=U([-1, 1]^2)$.
To obtain ground-truth posterior samples, we perform rejection sampling using the new prior as a proposal distribution.
Rejection sampling relies on finding a constant $M$ such that $f(\vtheta) \leq Mq(\vtheta)$, where $f(\vtheta) = p(\vx \mid \vtheta) \cdot q(\vtheta)$ is the target density and $q(\vtheta)$ is the proposal density.
In this Two Moons model, we set $M$ to be the upper bound of the likelihood $p(\vx \mid \vtheta)$ which is achieved at $r=0.1$.

\textbf{Ornstein-Uhlenbeck Process (OUP)} \citep{uhlenbeck1930theory} is a well-established stochastic process frequently applied in financial mathematics and evolutionary biology for modeling mean-reverting dynamics \citep{uhlenbeck1930theory}. The model is defined as:
\begin{equation*} y_{t+1} = y_t + \Delta y_t, \quad \Delta y_t = \theta_1 \left[\exp(\theta_2) - y_t \right] \Delta t + 0.5w, \quad \text{ for } t = 1, \ldots, T, 
\end{equation*}
where we set $ T = 25 $, $\Delta t = 0.2 $, and initialize $ x_0 = 10 $. The noise term follows a Gaussian distribution, $w \sim \mathcal{N}(0, \Delta t)$. The original prior is a uniform distribution, $ U([0, 2] \times [-2, 2]) $, over the latent parameters $ \vtheta = (\theta_1, \theta_2) $. For numerical convenience, we reparameterize the parameter space by mapping the original parameters to $[-1, 1]^2$, yielding $p_{\text{train}}(\vtheta)=U([-1, 1]^2)$. We perform inference in this normalized parameter space and later rescale the sampled parameters to the original space during simulation.
The simulated data are also normalized using standardization.
We use normalized parameter-data pairs $(\tilde{\vtheta}, \tilde{\vx})$ to train all amortized inference models.

Since the OUP likelihood is implicit, obtaining ground-truth posterior samples is intractable---the kind of problem that requires simulation-based inference. 
As a practical surrogate, we adopt a neural posterior estimation (NPE) model \citep{greenberg2019automatic}---trained on \emph{one million} simulations (compared to the 10,000 simulations used for our diffusion-based inference model)---to serve as ground truth. 
We then applied simulation-based calibration (SBC) \citep{talts2018validating} to verify that the NPE model remained well-calibrated across virtually all observation seeds for each new prior.

\textbf{Turin} \citep{turin1972statistical} is a widely used time-series model for simulating radio wave propagation \citep{turin1972statistical, pedersen2019stochastic}. This model generates high-dimensional, complex-valued time-series data and is governed by four key parameters: $G_0$ determines the reverberation gain, $T$ controls the reverberation time, $\lambda_0$ defines the arrival rate of the point process, and $\sigma^2_N$ represents the noise variance.

The model assumes a frequency bandwidth of $B=0.5$ GHz and simulates the transfer function $H_k$ at $N_s = 101$ evenly spaced frequency points. The observed transfer function at the $k$-th frequency point, $Y_k$, is defined as: \begin{equation*} 
Y_k = H_k + W_k, \quad k = 0, 1, \ldots, N_s - 1, 
\end{equation*} 
where $W_k$ represents additive zero-mean complex Gaussian noise with circular symmetry and variance $\sigma^2_W$. The transfer function $H_k$ is expressed as: 
\begin{equation*} 
H_k = \sum_{l=1}^{N_{\text{points}}} \alpha_l \exp(-j 2 \pi \Delta f k \tau_l), 
\end{equation*} 
where the time delays $\tau_l$ are sampled from a homogeneous Poisson point process with rate $\lambda_0$, and the complex gains $\alpha_l$ are modeled as independent zero-mean complex Gaussian random variables. The conditional variance of the gains is given by: 
\begin{equation*} 
\mathbb{E}[\vert \alpha_l \vert^2 | \tau_l] = \frac{G_0 \exp(-\tau_l / T)}{\lambda_0}. 
\end{equation*}

To obtain the time-domain signal $\tilde{y}(t)$, an inverse Fourier transform is applied: 
\begin{equation*} 
\tilde{y}(t) = \frac{1}{N_s} \sum_{k=0}^{N_s - 1} Y_k \exp(j 2 \pi k \Delta f t), 
\end{equation*} 
where $\Delta f = B / (N_s - 1) $ represents the frequency spacing. Finally, the real-valued output is computed by taking the absolute square of the complex signal and applying a logarithmic transformation: 
\begin{equation*} 
y(t) = 10 \log_{10}(|\tilde{y}(t)|^2). 
\end{equation*}

In Turin, the true parameter bounds are: $
G_0 \in [10^{-9}, 10^{-8}], \quad
T \in [10^{-9}, 10^{-8}], \quad
\lambda_0 \in [10^{7}, 5 \times 10^{9}], \quad
\sigma^2_N \in [10^{-10}, 10^{-9}]
$.
We follow a similar normalization setup as in OUP.
First, we define the training prior as $p_\mathrm{train}(\vtheta) = U([0, 1]^4)$ and rescale the sampled parameters $\tilde{\vtheta}$ to the original space using the true parameter bounds. 
Then, we normalize the simulator outputs using standardization and use normalized $(\tilde{\vtheta}, \tilde{\vx})$ pairs to train all inference models.

The Turin likelihood is also implicit. 
Therefore, we use a similar setup to the OUP case by training an NPE model with one million simulations and validating its reliability using SBC.

\textbf{Gaussian Linear} \citep{lueckmann2021benchmarking} is a standard SBI benchmark task used to infer the mean of a multivariate Gaussian distribution when the covariance is fixed. In this model, both the parameters $\vtheta$ and the data $\vx$ are 10-dimensional vectors. The simulator is defined as:
\begin{align*}
    \vx | \vtheta \sim \mathcal{N}(\vtheta, \Sigma_s),
\end{align*}
where $\Sigma_s=0.1 \cdot \mathbf{I}_{10}$ with $\mathbf{I}_{10}$ is the 10-dimensional identity matrix. The training prior for the parameters $\vtheta$ is a 10-dimensional Gaussian distribution $p_{\text{train}}(\vtheta)= \mathcal{N}(\mathbf{0}, 0.1 \cdot \mathbf{I}_{10})$.

We also consider a 20-dimensional variant of the Gaussian Linear model, which follows the same fundamental setup as its 10-dimensional counterpart. In this version, both the parameters $\vtheta$ and the data $\vx$ are 20-dimensional vectors.

In all experiments, the test-time priors are constructed as Gaussian or mixture distributions with Gaussian components.
Consequently, each new prior has a closed-form posterior, which we use as the ground-truth posterior.

\textbf{Bayesian Causal Inference model (BCI)} is a common model in computational cognitive neuroscience to represent how the brain determines whether multiple sensory stimuli originate from a common source~\citep{kording2007causal}. This BCI model simulates an observer's responses in an audiovisual localization task. The observer is presented with auditory ($S_A$) and visual ($S_V$) spatial cues---expressed as horizontal location in degrees of visual angle---and must report the perceived location of one of them. The model assumes the observer performs Bayesian causal inference to determine whether the cues originate from a common source or independent sources, and then makes a model-averaged estimate of the target stimulus location~\citep{kording2007causal,acerbi2018bayesian}.

In this BCI model we consider five underlying physical parameters: standard deviation of visual sensory noise ($\sigma_V$), standard deviation of auditory sensory noise ($\sigma_A$), standard deviation of the Gaussian spatial prior over source locations ($\sigma_s$), prior probability that auditory and visual cues share a common cause
($p_{\text{same}}$), and standard deviation of motor noise in the response ($\sigma_m$).
We assume the mean of the Gaussian spatial prior $\mu_p$ is set to 0 (central tendency), and a small lapse rate $\lambda = 0.02$, the probability of making a random response. Additionally, a fixed auditory rescaling factor $\rho_A = 4/3$ is applied to auditory stimulus locations to account for audiovisual adaptation in this experiment.

The simulation process for each trial $i$ (out of 98 fixed trials) proceeds as follows:
\begin{enumerate}
    \item Given true stimulus locations $S_{V,i}$ and $S_{A,i}$, and the modality to be reported (visual or auditory).
    \item Sensory measurements $x_{V,i}$ and $x_{A,i}$ are drawn:
    \begin{align*}
    x_{V,i} &\sim \mathcal{N}(S_{V,i}, \sigma_V^2) \\
    x_{A,i} &\sim \mathcal{N}(\rho_A S_{A,i}, \sigma_A^2)
    \end{align*}
    \item The observer combines these measurements with a spatial prior $p(s) = \mathcal{N}(s; \mu_p, \sigma_s^2)$.
    \item Causal inference is performed to compute the posterior probability of a common source, $P(C=1 | x_{V,i}, x_{A,i})$, using the prior $p_{\text{same}}$ and the likelihoods of the measurements under common-cause ($C=1$) and independent-causes ($C=2$) hypotheses.
    \item A model-averaged estimate of the relevant source location, $\hat{s}_i$, is computed:
    \begin{equation*}
    \begin{split}
    \hat{s}_i &= P(C=1|x_{V,i}, x_{A,i}) \cdot \mu_{C=1}(x_{V,i}, x_{A,i}) \\
    &\quad + \left(1-P(C=1|x_{V,i}, x_{A,i})\right) \cdot \mu_{C=2}(x_{V,i}, x_{A,i}, \text{report\_modality}_i),
    \end{split}
    \end{equation*}
    where $\mu_{C=1}$ and $\mu_{C=2}$ are the posterior mean estimates under the respective causal hypotheses.
    \item A noisy motor response $R'_i$ is generated: $R'_i \sim \mathcal{N}(\hat{s}_i, \sigma_m^2)$.
    \item With probability $\lambda$, the response $R'_i$ is replaced by a lapse response drawn uniformly from $U(-45^\circ, 45^\circ)$. The final response is $R_i$.
\end{enumerate}
The output data $\vx$ is a 98-dimensional vector $(R_1, \ldots, R_{98})$. These responses correspond to a fixed experimental design: a $7 \times 7$ Cartesian grid of stimulus locations $S_V, S_A \in \{-15, -10, -5, 0, 5, 10, 15\}^\circ$. For each of the 49 unique $(S_V, S_A)$ pairs, one visual-report trial and one auditory-report trial are included, totaling 98 trials.

The model's five parameters are internally represented in an unconstrained space: $\sigma_V, \sigma_A, \sigma_s, \sigma_m$ are parameterized by their logarithms, and $p_{\text{same}}$ by its logit,
denoted by the vector $\vtheta$:
\begin{equation}
\vtheta = (\theta_1, \theta_2, \theta_3, \theta_4, \theta_5) = (\log \sigma_V, \log \sigma_A, \log \sigma_s, \log \sigma_m, \text{logit } p_{\text{same}}).
\end{equation}
The original prior over these 5 unconstrained parameters follows a broad multivariate Gaussian distribution informed by the literature, with independent components:
\begin{equation}
    \theta_1, \theta_2 \sim \mathcal{N}(\log 2, 0.35^2), \;
    \theta_3 \sim \mathcal{N}(\log 5, 0.5^2), \;
    \theta_4 \sim \mathcal{N}(\log 0.3, 0.35^2), \;
    \theta_5 \sim \mathcal{N}(0, 1^2).
\end{equation}

For numerical convenience, we reparameterize the parameter space by defining the training prior as a 5-dimensional Gaussian distribution $p_\mathrm{train}(\vtheta) = \mathcal{N}(\mathbf{0}, \mathbf{I}_5)$. 
Similar to the setup in OUP and Turin, we rescale the sampled parameters to the original space during simulation and normalize the simulated data using standardization. 
We use normalized parameter-data pairs $(\tilde{\vtheta}, \tilde{\vx})$ to train all amortized inference models.

The likelihood function for this model involves integrating over latent sensory measurements and is computationally intensive. To obtain ground-truth posterior samples for BCI, we run the Variational Bayesian Monte Carlo (VBMC) algorithm \citep{acerbi2018variational,huggins2023pyvbmc}.
Using VBMC's internal diagnostics, we retain ten reliable variational posteriors and merge them via posterior stacking \citep{silvestrin2025stacking} to obtain the final ground-truth posterior.

\cref{tab:benchmarks_overview_posterior} summarizes the key properties of the simulator models in our experiments.

\begin{table}
    \centering
    \caption{Characteristics of the simulator models.}
    \label{tab:benchmarks_overview_posterior}
    \resizebox{0.5\textwidth}{!}{%
    \begin{tabular}{lcccc}
        \toprule
        Model & $\dim(\vtheta)$ & $\dim(\vx)$ & $\pt(\vtheta)$ \\
        \midrule
        Two-Moons & 2 & 2 & Uniform \\
        OUP   & 2 & 25  & Uniform  \\
        Turin & 4 & 101 & Uniform  \\
        Gaussian Linear 10D & 10 & 10 & Gaussian \\
        Gaussian Linear 20D & 20 & 20 & Gaussian \\
        BCI   & 5 & 98  & Gaussian \\
        \bottomrule
    \end{tabular}%
    }
\end{table}

\subsection{Training setup} \label{app:training_details}

For all methods that require training, we use 10,000 simulated datasets from the simulator to train the model. Note that PriorGuide is a test-time technique that does not require separate training. For PriorGuide, we use the same base diffusion model as Simformer. Details on the model configurations and dataset setups are provided below.

\paragraph{Simformer} We adopt a similar setup as the Simformer paper \cite{gloeckler2024all}, using the Variance Exploding Stochastic Differential Equation (VE-SDE) technique \cite{song2021score}. It is defined by
\begin{equation}
f_{\mathrm{VE-SDE}}(x, t) \;=\; 0, \qquad 
g_{\mathrm{VE-SDE}}(t) \;=\; \sigma_{\min} \Bigl(\tfrac{\sigma_{\max}}{\sigma_{\min}}\Bigr)^{t} \sqrt{2 \log \!\tfrac{\sigma_{\max}}{\sigma_{\min}}} 
\tag{4}.
\end{equation}

Throughout all experiments, we set $\sigma_{\max} = 15, \quad \sigma_{\min} = 10^{-4},$ and run the process over the time interval \(t\in[10^{-5},\,1]\). We use a transformer configuration similar to Simformer \cite{gloeckler2024all}, with 6 layers, 4 heads (size 10), a token dimension 40, and a 128-dimensional Gaussian Fourier embedding for diffusion time. MLP blocks use a hidden dimension of 150.
In all experiments, the condition mask was sampled per batch by uniformly selecting one of the following: joint, posterior, likelihood, or two random masks. Random masks were drawn from Bernoulli distributions with $p = 0.3$ and $p = 0.7$, respectively. We use the same setup for all of the simulators. 

We train all the Simformer models using a batch size of 1,000 and an initial learning rate of 0.001. A linear learning rate schedule is used to decay the learning rate to $1 \times 10^{-6}$, starting at half of the total number of training steps and completing by the final step. The optimizer combines adaptive gradient clipping with a maximum norm of 10.0 and the Adam optimizer \cite{kingma15}. Early stopping is applied based on validation loss, with the number of training steps constrained to a minimum of 5,000 and a maximum of 100,000 steps.

\paragraph{Amortized Conditioning Engine (ACE)} 
ACE~\citep{chang2025amortized} is a type of Neural Process (NP)~\citep{garnelo2018conditional, nguyen2022transformer,muller2022transformers}, a family of models that learn to perform amortized inference by conditioning on a \textit{context set} of input-output pairs to predict outputs for a \textit{target set} of inputs. Differently from other neural processes which focus on pure data prediction, ACE is trained to condition on, and predict, both data and latent variables (\eg, model parameters in the case of SBI). During training, ACE was provided with simulator parameters that were randomly assigned to either the context or target set, so the model learns to generalize across varying observational conditions. 
For each experiment, a random number of data points $N_d$ were sampled for the context set, with the remaining used as targets. Specifically, $N_d \sim U(1, 2)$ for Two-Moons, $N_d \sim U(7, 25)$ for OUP, $N_d \sim U(30, 101)$ for Turin, $N_d \sim U(3, 10)$ for Gaussian Linear, and $N_d \sim U(29, 98)$ for BCI.

Furthermore, ACE can be trained with a meta-prior (a distribution of priors) over latent variables, where each prior is expressed as a factorized histogram, affording amortized test-time prior adaptation.
For the ACE baseline, we use the same prior generation process as the original ACE paper~\citep{chang2025amortized}, which constructs diverse priors through a hierarchical method over a bounded range. For each latent variable $\theta_l$, the prior is sampled either from a mixture of Gaussians (with 80\% probability) or a uniform distribution. When using a Gaussian mixture, the number of components $K$ is drawn from a geometric distribution with parameter $q = 0.5$, and one of three configurations is chosen at random: shared means with varying standard deviations, varying means with shared standard deviations, or both means and standard deviations varying. Means and standard deviations are sampled from predefined uniform distributions based on the latent variable's range, and mixture weights are drawn from a Dirichlet distribution with $\alpha_0 = 1$. The resulting mixture is discretized into a histogram over $N_\text{bins} = 100$ uniform bins by evaluating CDF differences at bin edges and normalizing. If a uniform prior is selected, equal probability is assigned to all bins. 

For the network setup, we use a configuration similar to that of the ACE paper~\cite{chang2025amortized}. The ACE model has a 64-dimensional embedding, 6 transformer layers, 4 attention heads, and MLP blocks with hidden dimension 128. The output head includes $20$ MLP components with hidden dimension 128. Training was carried out in $5 \times 10^4$ steps with batch size 32 and learning rate $5 \times 10^{-4}$ using the Adam optimizer with the cosine annealing scheduler.

\subsection{Testing procedure}  
\label{app:testing-setup}

\subsubsection{Test-time prior generations}
\label{app:test-time-prior-generations}
We provide additional details about generating test-time priors $q$ in different scenarios.
All the generated test-time priors have passed the OOD diagnostic detailed in \cref{app:prior_diagnostic} with a quantile threshold $\alpha=0.001$.

\paragraph{Training prior definition} For \emph{uniform} training priors, let $\pt$ be defined as a uniform distribution over a $D$-dimensional hypercube $\prod_{i=1}^D [a_i, b_i]$. We define $s_i = (b_i - a_i)/\sqrt{12}$ for each dimension (the standard deviation of a uniform distribution over the range). For \emph{Gaussian} training priors, $\pt$ is a multivariate normal distribution with diagonal covariance, with mean $m_i$ and standard deviation $s_i$ along each dimension.

\paragraph{Target priors}
We first consider the cases where the test-time prior $q$ is a multivariate Gaussian distribution
$q(\vtheta) = \gN(\vtheta \mid \vmu, \vsigma^2\rmI)$.
The procedure to generate a test prior is as follows, separately for each dimension $1 \leq i \leq D$:

\begin{itemize}
    \item We first set the standard deviation $\sigma^\text{mild}_i = 0.5 \cdot s_i$ and $\sigma^\text{strong}_i = 0.2 \cdot s_i$ for mildly informative and strongly informative priors, respectively.

    \item  We define the sampling lower ($L_i$) and upper ($U_i$) bounds for the mean of the target prior based on the class of training prior:
\begin{equation}
L_i = \left\{ \begin{array}{cl} a_i + 3\sigma_i & \text{(Uniform)} \\ m_i - 3 s_i & \text{(Gaussian)} \end{array}, \right. \quad 
U_i = \left\{ \begin{array}{cl} b_i - 3\sigma_i & \text{(Uniform)} \\ m_i + 3 s_i & \text{(Gaussian)} \end{array} \right. .
\end{equation}
    
\item With the bounds chosen, we sample the target prior mean for each coordinate, $\mu_i$, from 
\begin{equation}
\mu_i \sim U(L_i, U_i).
\end{equation}
This procedure ensures that the target prior is well-overlapping with the training prior.
\end{itemize}

Finally, we define
\begin{equation}
    q_\text{mild}(\vtheta) = \gN(\vtheta \mid \vmu, \vSigma^\text{mild}), \quad q_\text{strong}(\vtheta) = \gN(\vtheta \mid \vmu, \vSigma^\text{strong})
\end{equation}
where $\vSigma^\text{mild} = \text{diag}\left[{\sigma_1^\text{mild}}^2, \ldots, {\sigma_D^\text{mild}}^2\right]$ and $\vSigma^\text{strong} = \text{diag}\left[{\sigma_1^\text{strong}}^2, \ldots, {\sigma_D^\text{strong}}^2\right]$.

We then consider $q$ as a mixture Gaussian distribution with two components
\begin{equation}
    q_{\text{mixture}}(\vtheta) = \pi \gN (\vtheta \mid \vmu_1, \vSigma^\text{strong}) + (1 - \pi) \gN (\vtheta \mid \vmu_2, \vSigma^\text{strong}).
\end{equation}
We sample the mean vector for each component using the above procedures with \emph{strongly informative} standard deviations. The mixture weights are sampled from $\pi \sim U(0.2, 0.8)$.

\subsubsection{Posterior inference} 
\label{app:posterior-inference-setup}

For each prior type---\emph{mild}, \emph{strong} and \emph{mixture}---we construct 10 priors $q_i$ following the procedure detailed in \cref{app:test-time-prior-generations}. 
For each target prior $q_i$, we draw 10 parameter vectors $\vtheta_{ij}$. For each parameter vector $\vtheta_{ij}$, we simulate one observed dataset, yielding a pair $(\vtheta_{ij}, \x_{ij})$.
For a given tested method, we perform posterior inference by drawing 1,000 posterior samples $\vtheta$ using the method conditioned on each $\vx_{ij}$, repeating this evaluation across 5 independent training runs.
Consequently, each amortized inference model produces $5 \times 10 \times 10 = 500$ collections of 1,000 $\vtheta$ posterior samples for each prior type.

\paragraph{Remark on prior specification} 
By sampling the ground-truth parameters $\vtheta_{ij}$ from the target prior $q_i$, we are ensuring that these priors are \emph{informative} or well-specified, \ie, the prior used for inference matches the true data generation process~\citep{gelman2013bayesian}. This is \emph{not} a requirement of our method---PriorGuide can steer sampling towards any user-specified prior at test time (under the assumptions mentioned in the paper), steering the inference process towards the true Bayesian posterior under the given prior. Notably, a well-specified prior (or any reasonably specified prior) will \emph{on average} improve inference accuracy, while a badly specified prior might hinder performance---this is a general property of Bayesian inference, which is reflected in PriorGuide. To avoid potential confusion, in this paper we focused on well-specified priors.

\paragraph{Simformer}
With a trained Simformer model $\gM$, we perform the same generation procedure as \citep{gloeckler2024all}, with a number of diffusion steps $N=500$.

\paragraph{PriorGuide}  
Following \cref{alg:prior_guide}, we use the same set of core hyperparameters across all simulators—Two-Moons, OUP, Turin, Gaussian Linear, and BCI—unless otherwise noted.  These include a trained diffusion-based inference model $\gM$ (same as Simformer above), a training prior distribution $p_\text{train}(\vtheta)$, and a test-time prior distribution $q(\vtheta)$. The diffusion process is controlled by a minimum time $T_\mathrm{min} = 1 \times 10^{-10}$, a maximum time $T_\mathrm{max} = 1.0$, a generation schedule nonlinearity parameter $\rho=2$ and a Langevin ratio $\eta = 0.5$. 
For Two Moons, OUP, Turin and Gaussian Linear, we set the number of diffusion steps as $N = 25$ and the number of Langevin steps as $N_L = 8$.
For BCI, we set $N = 250$ and $N_L = 0$.
In our experiments, we found that PriorGuide works better on BCI with a relatively large number of diffusion steps.
The test-time compute across all simulators resides in the regime of $10^2$ NFEs (number of function evaluations, \ie calls to the score model).

As detailed in Section \ref{sec:methods} of the main text, we approximate the prior ratio $r(\vtheta)$ by fitting a Gaussian mixture model (cf.~\textsc{FitGMM} in \cref{alg:prior_guide}).
For Two Moons, OUP and Turin, we do not need to fit a GMM since the training prior $p_\mathrm{train}$ is a uniform distribution, thus $r(\vtheta) = q(\vtheta)$.
For this reason, we can directly use the target prior $q(\vtheta)$ as the GMM parameterization for $r(\vtheta)$.
For Gaussian Linear and BCI, which have a Gaussian training prior, we fit a GMM with $K=20$ components to the ratio $r(\vtheta)$ by optimizing the L2 loss with gradient descent.
For numerical convenience, we parameterize the GMM components with diagonal covariance matrices.\footnote{Note that PriorGuide also supports full covariance matrix representations for the GMM components of $r(\vtheta)$.}
The optimization is carried out in $1 \times 10^6$ steps with batch size 1,000 and learning rate $0.01$ using the Adam optimizer \citep{kingma15}. We verified both via the L2 loss and via visualizations that the GMM fits achieved a satisfactory ratio approximation.

\paragraph{Amortized conditioning engine}
In ACE, first, we convert the test-time prior $q(\vtheta)$ generated in \cref{app:test-time-prior-generations} into a binned histogram distribution using the same steps discussed in \cref{app:training_details}. Using the offline true data we have already sampled, we then condition on the full data ($x$) in the context set and predict the parameters $\theta$ independently to obtain the posterior. Using this posterior distribution, which is a Gaussian Mixture Model (GMM), we then sample $\theta$ according to the number of posterior samples specified earlier. Note that ACE only supports factorized priors, so the \emph{mixture} prior $q$ cannot be specified correctly within ACE (\ie, ACE will represent the prior as a product of marginals).

\subsubsection{Posterior predictive inference} 
In the data prediction task (posterior predictive), with equal probability we condition on the first 30\% or the last 30\% of the values in $\x$, and then predict the remaining portion of $\x$. Note that in this task, we do not condition on $\vtheta$. We draw 100 samples from the data predictions for evaluation. 

\paragraph{Simformer and PriorGuide}
We use the same hyperparameter settings and a similar procedure as in the posterior inference setup (\cref{app:posterior-inference-setup}). In this case, the diffusion inference model will condition on the selected $30\%$ values of $\x$ to generate the remaining $70\%$. The steps for performing posterior predictive inference with PriorGuide are outlined in \cref{alg:prior_guide_post_pred}.

\paragraph{Amortized conditioning engine}
For ACE, similar to the posterior inference setup, we first convert the prior into a binned histogram distribution. We then place the portion of $\x$ to be conditioned on into the context set. Data prediction is performed autoregressively by first randomly permuting the order of the target data points. This procedure is adapted from \cite{bruinsma2023autoregressive}, as ACE is a neural process-based method. Consistent with the findings in \cite{bruinsma2023autoregressive}, we observed that using random ordering for the autoregressive procedure yields the most robust performance.

\subsubsection{Pareto frontiers of test-time compute assignments}
PriorGuide supports improving the sampling quality by adding Langevin dynamic steps to the diffusion process, at the cost of additional test-time compute.
With the same testing setup described above (\cref{app:posterior-inference-setup}), we applied PriorGuide to posterior inference in the OUP and Turin simulators to investigate the influence of additional Langevin dynamic steps.
We measure the test-time compute (including diffusion steps $N$ and Langevin steps $N_L$) in the number of function evaluations (NFEs), \ie calls to the score model, recalling the basic relation $\text{NFE} = N \times (N_L + 1)$.
We tested the posterior inference performance in the regime up to $10^3$ NFEs and presented Pareto-efficient assignments of test-time compute.

\subsection{Statistical methodology}  \label{app:statistical_methods}

\subsubsection{Evaluation metrics} \label{app:metrics}

\textbf{Root Mean Squared Error} (RMSE) measures the average magnitude of the errors between the predicted posterior samples and the ground-truth. A lower RMSE indicates a more concentrated prediction around the true parameters or data points. The RMSE is defined as
\begin{equation}
\mathrm{RMSE} = 
    \sqrt{
      \frac{1}{L\,N_{\mathrm{post}}}
      \sum_{l=1}^{L}
      \sum_{j=1}^{N_{\mathrm{post}}}
        \bigl(y_{l} - \widehat{y}_{l,j}\bigr)^2
    } \, ,
\end{equation}

where $L$ is the feature dimension, $N_\mathrm{post}$ is the number of posterior samples, $y_{i,l}$ is the ground-truth and $\widehat{y}_{l,j}$ is the prediction for the $l$-th feature (data point or parameter) and $j$-th posterior sample.
In posterior inference experiments, we compute the RMSE between $N_\mathrm{post} = 1,000$ samples and the ground-truth parameters.
Conversely, in data prediction experiments, we compute the RMSE between $N_\mathrm{post}=100$ predicted data points and the ground-truth.

\textbf{Classifier Two-Sample Test} (C2ST;~\citep{lopez-paz2017revisiting}) is a method to assess whether two sets of samples originate from the same distribution. In our context, it is used to compare the estimated posterior samples against samples from a reference posterior distribution. In our experiments, a random forest classifier is trained to distinguish between samples from the two distributions. An accuracy close to 0.5 suggests that the two distributions are indistinguishable.

\textbf{Mean Marginal Total Variation Distance} (MMTV) quantifies the dissimilarity between two multivariate distributions by considering their marginal distributions, defined as
\begin{equation}
    \mathrm{MMTV}(p,q)
      = \sum_{d=1}^{D}
        \int_{-\infty}^{\infty}
          \frac{\bigl|p_{d}^{\mathrm{M}}(x_{d}) - q_{d}^{\mathrm{M}}(x_{d})\bigr|}{2\,D}
        \,\mathrm{d}x_{d},
\end{equation}

where $p_{d}^{\mathrm{M}}$ and $q_{d}^{\mathrm{M}}$ denote the marginal densities of $p$ and $q$ along the $d$-th dimension. An MMTV metric of $0.2$ indicates that, on average across dimensions, the posterior marginals have an $80\%$ overlap, often indicated as a desirable threshold for an approximate posterior~\citep{acerbi2018variational}.

\textbf{Maximum Mean Discrepancy} (MMD) measures the distance between the mean embeddings of the distributions in a reproducing kernel Hilbert space. For our evaluations, we employ the MMD with with an exponentiated quadratic
kernel with a lengthscale of 1.

\subsubsection{Significance testing} \label{app:significance_testing}

To assess the statistical significance of the differences in performance between methods, we employ the Wilcoxon signed-rank test \citep{wilcoxon1945individual}.  In our experimental comparisons, we use this test to determine if the observed differences in metric scores between PriorGuide and baseline methods are statistically significant across multiple benchmark problems or datasets. The test evaluates the null hypothesis that the median difference between paired observations is zero, providing a p-value to indicate the significance of any observed deviation from this null hypothesis. We consider a result to be significantly different if the p-value is below 0.05.

\section{Additional experimental results}
\label{app:additional_results}

We present here additional experimental results. First, in \cref{app:illustration_two_moons}, we provide a visual example of test-time adaptation to supplement the discussion in \cref{sec:2dexamples}. In \cref{app:sec:seq_inf}, we demonstrate PriorGuide's applicability to structurally complex priors through a sequential Bayesian inference experiment on Two Moons. In \cref{app:additional_baselines}, we introduce additional baselines, including rejection sampling, sampling-importance-resampling and neural likelihood estimation (NLE; \citealp{papamakarios2019sequential}) with MCMC, to provide a broader comparison. We also present a study on the model's sensitivity to the distance between training and test-time priors (\cref{app:sensitity_prior_distance}) and conduct an ablation study on the GMM prior ratio approximation (\cref{app:ablation_gmm_components}), which is further validated by a scalability analysis of the ratio fit in high-dimensional space (\cref{app:ratio_fit}). Furthermore, we include example posterior visualizations for the BCI model (\cref{app:bci_viz}) and an analysis of training and test wall-clock time costs for Simformer and PriorGuide (\cref{app:inference_time}).

\subsection{Illustration of test-time prior adaptation on Two Moons}
\label{app:illustration_two_moons}
This section provides the figure referenced in \cref{sec:2dexamples}, illustrating the core capability of PriorGuide. We use the Two Moons model to provide an intuitive visualization of test-time prior adaptation. The base diffusion model is trained under a uniform prior $\pt(\vtheta)$ over $[-1,1]^2$. At test time, we introduce a more localized Gaussian prior and employ PriorGuide to guide the diffusion sampling towards the correct posterior distribution (\cref{fig:exp1}).

\begin{figure}[ht]
  \centering
  \includegraphics[width=0.9\textwidth]{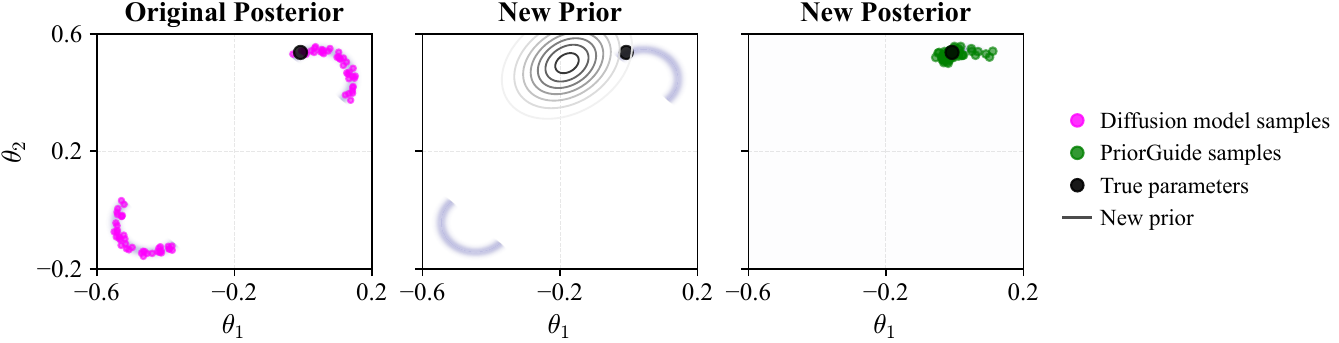}
  \caption{\emph{Left:} Original Two Moons posterior and posterior samples from the base diffusion model trained on a uniform prior. \emph{Middle:} New prior information about the parameters becomes available. \emph{Right:} PriorGuide steers the diffusion process to match the Bayesian posterior under the new prior.}
  \label{fig:exp1}
\end{figure}

\subsection{Sequential Bayesian updating with independent observations}
\label{app:sec:seq_inf}

To futher demonstrate PriorGuide’s applicability to complicated and non-factorized priors, we examine a sequential Bayesian inference scenario. We utilize the Two Moons model under a uniform prior $\pt(\vtheta)$ over $[-1,1]^2$, and consider a two-step Bayesian update scenario:
\begin{enumerate}
    \item We simulate two observations $\vx_1, \vx_2$ given a parameter vector $\vtheta^\star$.
    \item Initial update: We calculate the intermediate posterior $p(\vtheta \mid \vx_1)$ given the first observation $\vx_1$. In the Two Moons problem, this posterior is by construction a complex bimodal distribution (see~\cref{fig:exp1} and~\cref{fig:rebuttal1}). This becomes our ground-truth \emph{target prior} $q(\vtheta)$. 
    \begin{itemize}
    \item For PriorGuide, we perform prior ratio fitting by fitting a Gaussian mixture to this intermediate posterior (the ratio is $\propto q(\vtheta)$ due to the uniform prior).
    \end{itemize}
    \item Sequential update: We employ the inference method of choice (PriorGuide or another baseline) to sample from the final posterior $p(\vtheta \mid \vx_1, \vx_2)$, using $q(\vtheta)$ as the prior and conditioning on the second observation $\vx_2$. Notably, the fact that $q(\vtheta) = p(\vtheta | \vx_1)$ is itself a posterior does not require any special treatment: ``yesterday's posterior is today's prior''.
\end{enumerate}

Using this setup, we evaluate PriorGuide and baselines across 10 true parameter points randomly sampled from $p_\text{train}$, simulating corresponding observation pairs for each. To establish ground truths for both the intermediate posteriors (and target priors) $p(\vtheta \mid \vx_1)$ as well as the joint posteriors $p(\vtheta \mid \vx_1, \vx_2)$, we numerically calculate these posteriors on a dense discrete grid ($1000 \times 1000$ points) over the domain $[-1,1]^2$.

On posterior quality metrics (C2ST, MMTV), PriorGuide strongly outperforms the baselines (\cref{tab:app:seq_inf_two_moons}).
Here ACE underperforms, likely due to the fact that this complex prior differs significantly from the distribution of priors---the \emph{meta-prior}---it has been pretrained on (see~\cref{app:training_details}), showing the importance of test-time prior adaptation.
No difference is observed across methods for RMSE, but we recall that RMSE measures the error between the posterior and ground-truth parameter $\boldsymbol{\theta}^\star$. RMSE is not meaningful when the posterior is multimodal like in this case: the error is dominated by the regions of the posterior in the wrong mode, even though it is correct Bayesian behavior for the posterior to be multimodal. An illustration of such bimodality and of the update procedure for PriorGuide is visualized in \cref{fig:rebuttal1} for one example pair $(\vx_1, \vx_2)$.
Overall, these results confirm PriorGuide's capability to effectively handle complex, data-derived priors.

\begin{table}[htbp]
\caption{Posterior inference ($p(\vtheta \mid \vx_1, \vx_2)$). Mean $\pm$ standard dev. over 5 independent training runs (10 randomly generated true parameters and observation sets). Significantly best results (Wilcoxon signed-rank test) in bold.}
\label{tab:app:seq_inf_two_moons}
\centering
\setlength{\tabcolsep}{10pt}
\scalebox{1.0}{\begin{tabular}{llccc}
\toprule
\textbf{Dataset} & \textbf{Method} & \textbf{RMSE} & \textbf{C2ST} & \textbf{MMTV} \\
\midrule
\multirow{3}{*}{Two Moons}
  & Simformer         & $0.63 \pm 0.34$ & $0.89 \pm 0.02$ & $0.66 \pm 0.07$ \\
  & ACE               & $0.63 \pm 0.37$ & $0.94 \pm 0.03$ & $0.71 \pm 0.07$ \\
  & PriorGuide        & $0.63 \pm 0.34$ & $\mathbf{0.60}\pm 0.07$ & $\textbf{0.22} \pm 0.13$ \\
\bottomrule
\end{tabular}}
\end{table}

\begin{figure}[ht]
  \centering
  \includegraphics[width=0.9\textwidth]{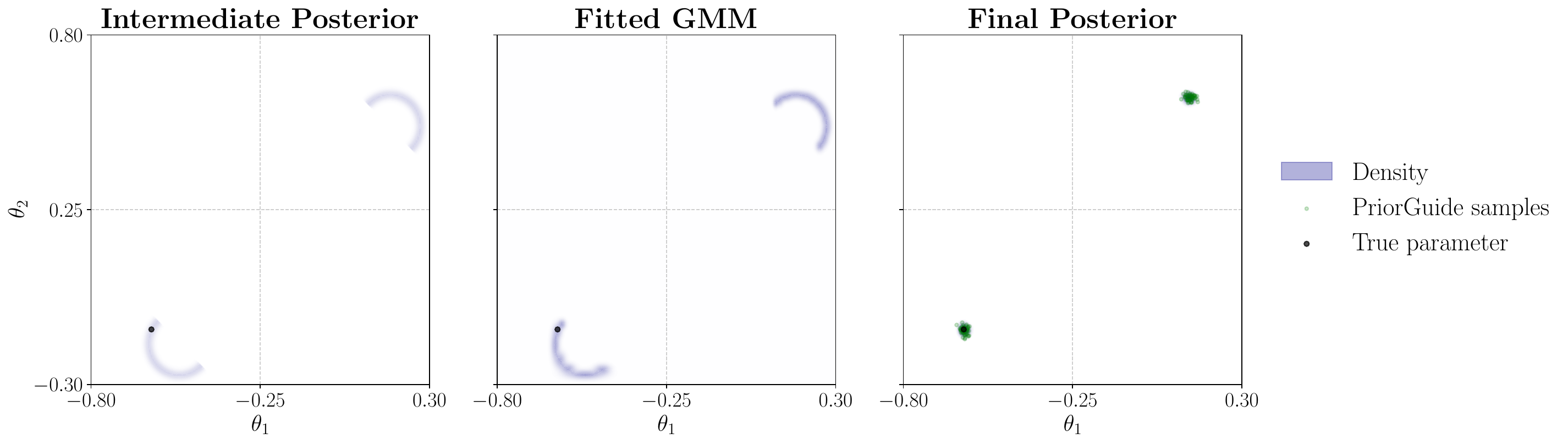}
\caption{\emph{Left:} The complex, bimodal posterior $p(\vtheta \mid x_1)$ derived from the first observation $\vx_1$. \emph{Middle:} This posterior is fitted with a GMM to serve as the new target prior $q(\vtheta)$ for the next step. \emph{Right:} PriorGuide samples from the final posterior $p(\vtheta|\vx_1, \vx_2)$ by guiding the diffusion model with the new prior $q(\vtheta)$ and conditioning on the second observation $\vx_2$. PriorGuide's samples match well the ground-truth posterior density.}
  \label{fig:rebuttal1}
\end{figure}

\subsection{Additional baselines} 
\label{app:additional_baselines}

\subsubsection{Rejection Sampling, and Sampling Important Resampling}
\label{app:baseline_rs_sir}

In addition to the main results, we compare PriorGuide against two straightforward sampling-based baselines: Sampling-Importance-Resampling (SIR) and Rejection Sampling (RS). Both methods are widely used and theoretically consistent. However, their efficiency deteriorates rapidly when the proposal distribution---in our case, the posterior from the base model under the training prior---is a poor match for the target posterior under the new test-time prior, particularly under high-dimensional or complex posterior geometries. Our experiments quantify this inefficiency and provide a direct performance comparison under matched computational budgets, highlighting the practical advantages of PriorGuide.

\paragraph{Setup.} For SIR, we drew 20,000 samples from the base diffusion model, applied importance weights, and evaluated performance alongside two diagnostics: effective sample size (ESS) and the $\hat{k}$ instability measure~\citep{vehtari2024pareto}, where $\hat{k}>0.7$ indicates instability. For RS, we continued sampling until either 1,000 accepted samples (matching PriorGuide) were obtained or a two-hour time limit on an Nvidia A100 GPU was reached. All three methods (PriorGuide, RS, and SIR) rely on samples from the same Simformer model, with PriorGuide requiring additional per-sample computation. This allows us to align costs directly. As shown in \cref{tab:time}, generating a single PriorGuide sample is roughly as expensive as producing 20 base Simformer samples. For our comparison we use 1,000 PriorGuide samples, 20,000 SIR samples, and cap RS at two hours to equalize compute budgets. This cap is generous as RS acceptance rates are very low, while PriorGuide sampling itself takes about $\sim$150 seconds (see \cref{tab:time}). Results are summarized below.

\begin{table}[htbp]
\caption{Posterior inference ($\vtheta$) results on Turin with RS and SIR, under \textit{strong} (S) and \textit{mixture} (M) test-time priors.}
\label{tab:app:rs_sir_turin}
\centering
\setlength{\tabcolsep}{10pt}
\scalebox{0.7}{\begin{tabular}{llcccccc}
\toprule
\textbf{Dataset} & \textbf{Method} & \textbf{RMSE} & \textbf{C2ST} & \textbf{MMTV} & \textbf{Acc Rate (\%)} & \textbf{ESS (\%)} & $\hat{k}$ \\
\midrule
\multirow{4}{*}{Turin (S)}
  & Simformer         & $0.23\pm0.04$ & $0.95\pm0.02$ & $0.55\pm0.07$ & \textemdash & \textemdash & \textemdash \\
  & ACE               & $0.18\pm0.02$ & $0.92\pm0.03$ & $0.47\pm0.08$ & \textemdash & \textemdash & \textemdash \\
  & PriorGuide        & $0.06\pm0.01$ & $0.55\pm0.03$ & $0.08\pm0.02$ & \textemdash & \textemdash & \textemdash \\
  & Diffusion + SIR   & $0.06\pm0.01$ & $0.81\pm0.07$ & $0.18\pm0.24$ &  & $0.02\pm0.01$ & $0.35\pm0.42$\\
  & Diffusion + RS    & $0.06\pm0.01$ & $0.56\pm0.04$ & $0.08\pm0.02$ & $0.63\pm0.33$& \textemdash & \textemdash \\
\midrule
\multirow{4}{*}{Turin (M)}
  & Simformer         & $0.24\pm0.04$ & $0.95\pm0.02$ & $0.52\pm0.07$ & \textemdash & \textemdash & \textemdash \\
  & ACE               & $0.20\pm0.03$ & $0.91\pm0.03$ & $0.45\pm0.07$ & \textemdash & \textemdash & \textemdash \\
  & PriorGuide        & $0.14\pm0.05$ & $0.64\pm0.09$ & $0.21\pm0.12$ & \textemdash & \textemdash & \textemdash \\
  & Diffusion + SIR   & $0.09\pm0.03$ & $0.80\pm0.07$ & $0.19\pm0.22$ &  & $0.02\pm0.01$ & $0.39\pm0.38$ \\
  & Diffusion + RS    & $0.09\pm0.02$ & $0.55\pm0.03$ & $0.08\pm0.04$ & $0.49\pm0.35$& \textemdash & \textemdash \\
\bottomrule
\end{tabular}}
\end{table}

\begin{table}[htbp]
\caption{Posterior inference ($\vtheta$) results on BCI with RS and SIR, under \textit{strong} (S) and \textit{mixture} (M) test-time priors.}
\label{tab:app:rs_sir_bci}
\centering

\setlength{\tabcolsep}{10pt}
\scalebox{0.7}{\begin{tabular}{llcccccc}
\toprule
\textbf{Dataset} & \textbf{Method} & \textbf{RMSE} & \textbf{C2ST} & \textbf{MMTV} & \textbf{Acc Rate (\%)} & \textbf{ESS (\%)} & $\hat{k}$ \\
\midrule
\multirow{4}{*}{BCI (S)}
  & Simformer         & $0.89\pm0.09$ & $0.97\pm0.02$ & $0.66\pm0.09$ & \textemdash & \textemdash & \textemdash \\
  & ACE               & $0.34\pm0.15$ & $0.91\pm0.05$ & $0.40\pm0.17$ & \textemdash & \textemdash & \textemdash \\
  & PriorGuide        & $0.24\pm0.04$ & $0.81\pm0.11$ & $0.33\pm0.19$ &  \textemdash & \textemdash & \textemdash \\
  & Diffusion + SIR   & $0.22\pm0.03$ & $0.95\pm0.04$ & $0.63\pm0.32$ &  & $0.01\pm0.01$ & $1.28\pm0.98$\\
  & Diffusion + RS    & \textit{Fail} & \textit{Fail} & \textit{Fail} & $\sim 0$    & \textemdash & \textemdash \\
\midrule
\multirow{4}{*}{BCI (M)}
  & Simformer         & $1.07\pm0.18$ & $0.99\pm0.01$ & $0.63\pm0.10$ &  \textemdash & \textemdash & \textemdash \\
  & ACE               & $1.00\pm0.31$ & $0.97\pm0.02$ & $0.58\pm0.10$ & \textemdash & \textemdash & \textemdash \\
  & PriorGuide        & $0.79\pm0.70$ & $0.78\pm0.11$ & $0.35\pm0.24$ & \textemdash & \textemdash & \textemdash \\
  & Diffusion + SIR   & $0.25\pm0.03$ & $0.95\pm0.04$ & $0.79\pm0.31$ &  & $0.00\pm0.00$ & $2.61\pm1.78$ \\
  & Diffusion + RS    & \textit{Fail} & \textit{Fail} & \textit{Fail} & $\sim 0$    & \textemdash & \textemdash \\
\bottomrule
\end{tabular}}
\end{table}

\begin{table}[htbp]
\caption{Posterior inference ($\vtheta$) results on Gaussian Linear 20D with RS and SIR, under \textit{strong} (S) and \textit{mixture} (M) test-time priors.}
\label{tab:app:rs_sir_gl20d}
\centering
\setlength{\tabcolsep}{10pt}
\scalebox{0.65}{\begin{tabular}{llcccccc}
\toprule
\textbf{Dataset} & \textbf{Method} & \textbf{RMSE} & \textbf{C2ST} & \textbf{MMTV} & \textbf{Acc Rate (\%)} & \textbf{ESS (\%)} & $\hat{k}$ \\
\midrule
\multirow{5}{*}{\shortstack{Gaussian\\Linear 20D (S)}}
  & Simformer         & $0.30\pm0.02$ & $1.00\pm0.00$ & $0.64\pm0.02$ &  \textemdash & \textemdash & \textemdash \\
  & ACE               &$0.10\pm0.01$ & $0.80\pm0.04$ & $0.15\pm0.02$ & \textemdash & \textemdash & \textemdash \\
  & PriorGuide        & $0.21\pm0.08$ & $0.58\pm0.03$ & $0.10\pm0.02$ & \textemdash & \textemdash & \textemdash \\
  & Diffusion + SIR   & $0.15\pm0.02$ & $\sim 1$ & $\sim 1$ &  & $0.00\pm0.00$ & $10.85\pm1.32$ \\
  & Diffusion + RS    & \textit{Fail} & \textit{Fail} & \textit{Fail} & $\sim 0$    & \textemdash & \textemdash \\
\midrule
\multirow{5}{*}{\shortstack{Gaussian\\Linear 20D (M)}}
  & Simformer         & $0.30\pm0.01$ & $1.00\pm0.00$ & $0.64\pm0.02$ & \textemdash & \textemdash & \textemdash \\
  & ACE               & $0.24\pm0.03$ & $1.00\pm0.00$ & $0.52\pm0.07$ & \textemdash & \textemdash & \textemdash \\
  & PriorGuide        & $0.24\pm0.08$ & $0.59\pm0.06$ & $0.14\pm0.09$ & \textemdash & \textemdash & \textemdash \\
  & Diffusion + SIR   & $0.15\pm0.01$ & $ \sim 1$ & $ \sim 1$ &  & $0.00\pm0.00$ & $10.61\pm1.12$\\
  & Diffusion + RS    & \textit{Fail} & \textit{Fail} & \textit{Fail} & $\sim 0$    & \textemdash & \textemdash \\
\bottomrule
\end{tabular}}
\end{table}

\paragraph{Results.} Under equal computational budgets, the results exhibit a dimensionality-driven behavior consistent with performance expectations of Monte Carlo methods in moderate dimensions~\citep{robert2004monte}. For the 4-dimensional Turin example (\cref{tab:app:rs_sir_turin}), the simple baselines remain viable, with performance roughly comparable to PriorGuide (sometimes slightly better, sometimes worse). RS achieves acceptable metrics but with highly variable acceptance rates, yielding only 98–126 accepted samples out of 20k. 

Conversely, in the 5-dimensional BCI problem (\cref{tab:app:rs_sir_bci}), these baselines catastrophically fail. Rejection sampling produces zero accepted samples on most runs, while SIR's $\hat{k} > 0.7$ indicates statistical instability that makes its estimates unreliable. This transition due to dimensionality (4D vs. 5D) and the increased posterior complexity reveal a threshold where simple methods go from mildly unreliable to completely unusable, while PriorGuide maintains consistent performance across both problems.

Extending the analysis further to the 20D Gaussian Linear problem confirms this trend (\cref{tab:app:rs_sir_gl20d}). Despite being given a generous budget of 20k proposals (whereas each PriorGuide sample costs only a fraction of a Simformer sample), SIR and RS fail completely, while PriorGuide continues to deliver stable performance.

\paragraph{Larger computational budget.}
To examine behavior with substantially more compute, we ran rejection sampling until reaching 1,000 accepted samples or an 8-hour limit on a single Nvidia A100 GPU (up to $\sim$3.1M proposals) on Turin and BCI examples. While this setting is not typical for practitioners, it allows us to probe performance under very large budgets. In this regime, rejection sampling could sometimes match or slightly outperform PriorGuide. On Turin with a strong or mixture prior, both methods achieved similar accuracy (RMSE $\approx $0.06--0.09, C2ST $\approx$0.54--0.62, MMTV $\approx $0.07--0.19) with acceptance rates of 0.38--0.57\%. On BCI, rejection sampling occasionally improved RMSE and MMTV (e.g., 0.22 vs.\ 0.63 RMSE on BCI-M), but acceptance was extremely low (0.06--0.16\%), many runs hit the timeout, and some produced only a handful of accepted samples.  

These results illustrate a trade-off: simple methods can approach strong performance if one is willing to spend hours of compute, but they remain unreliable and inefficient. PriorGuide achieves comparable quality in minutes ($\sim$160 s for 1k samples), making it suitable for interactive use, repeated posterior evaluations, and sensitivity analysis, rather than only in very large-budget scenarios.

\subsubsection{Neural likelihood estimation with MCMC}
\label{app:baseline_nle_mcmc}

As an additional natural baseline, we evaluate a method based on neural likelihood estimation (NLE; \citealp{papamakarios2019sequential}) followed by MCMC sampling. 
This two-stage method is a representative of several likelihood-based approaches in SBI \citep{lueckmann2021benchmarking}, which trains a neural density estimator (e.g.~a normalizing flow) to form a surrogate of the likelihood function $p(\vx \mid \vtheta)$ \citep{papamakarios2019sequential}. 
Once the surrogate is trained, it is then used with a standard MCMC algorithm to draw samples from the posterior distribution $p(\vtheta \mid \vx_o)$ for a given observation $\vx_o$. 

While decoupling likelihood learning from posterior sampling is conceptually natural in SBI, we find this approach may lack robustness even in low-dimensional cases, particularly for problems with multi-modal posterior geometries.
For instance, when this approach is applied in the bimodal Two Moons simulator \cref{app:simulators}, the MCMC sampler guided by the learned likelihood often becomes trapped in a single posterior mode and fails to discover the other (\cref{fig:nle_mcmc_failure}). 
This failure is a known MCMC limitation that the sampler can easily get stuck in modes. 

\begin{figure}[ht]
\centering
\includegraphics[width=0.7\linewidth]{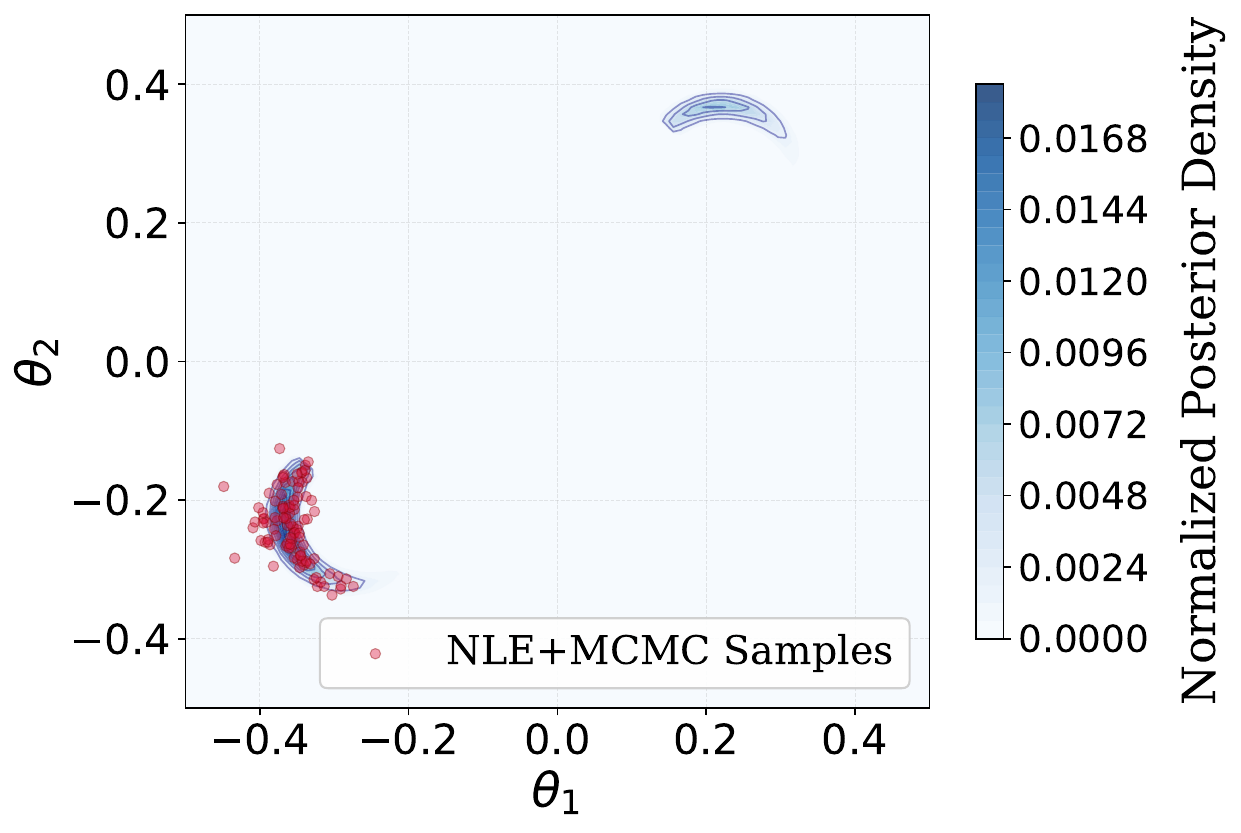}
\caption{
Failure case of the NLE with MCMC baseline on the Two Moons problem. The posterior is approximated by running an MCMC sampler on a learned neural likelihood.
}
\label{fig:nle_mcmc_failure}
\end{figure}

\subsection{Sensitivity to the distance between training and test-time priors}
\label{app:sensitity_prior_distance}

To investigate the sensitivity of PriorGuide to the distance between the training and test-time priors, we conduct an experiment using the Gaussian Linear 10D model. We use the same training prior as in \cref{sec:experiments}: a 10-dimensional Gaussian distribution $p_{\text{train}}(\vtheta)= \mathcal{N}(\mathbf{0}, 0.1 \cdot \mathbf{I}_{10})$. For the test-time priors, we use strongly informative standard deviations ($\sigma_i^\text{strong} = 0.2 \cdot s_i$) and systematically shift their means ($\vmu^q$) away from the center of the training prior ($\vmu^p$). We quantify this shift using several metrics, including the Mahalanobis distance ($d'$), the 2-Wasserstein distance, and our proposed out-of-distribution (OOD) diagnostic (\cref{app:prior_diagnostic}).

The results, presented in \cref{tab:sensitivity_prior_distance}, reveal how PriorGuide's performance responds to this increasing prior shift. The key take-home message is that PriorGuide is robust and degrades gracefully rather than failing catastrophically. For small to moderate shifts (e.g., $d' \leq 2$), performance remains nearly identical to the no-shift scenario, with all metrics (C2ST, MMTV, RMSE) showing negligible changes. As the prior is shifted into a substantially different region, we observe a smooth and predictable decline in accuracy. For instance, at an extreme shift where $d' = 4.4$, PriorGuide achieves an MMTV of $0.16$, indicating that the generated posterior marginals have an $84\%$ overlap with the ground truth on average across all dimensions, which corresponds to a reasonable posterior approximation \citep{acerbi2018variational}. This analysis confirms that PriorGuide provides provides meaningful approximations even when the test-time prior is far from what was seen during training.

\begin{table}[ht]
\centering
\caption{Sensitivity analysis of posterior inference to prior shift. The degree of shift is quantified using both $d'$ (Mahalanobis distance) and 2-Wasserstein distance. The results indicate that PriorGuide maintains competitive performance even under substantial prior shifts.} 
\label{tab:sensitivity_prior_distance}
\begin{tabular}{ccccccc}
\toprule
$\lVert \boldsymbol{\mu}^q - \boldsymbol{\mu}^p\rVert$ & $d'$ & 2-Wasserstein & OOD Frac. & C2ST & MMTV & RMSE \\
\midrule
0.00 & 0.00 & 0.84 & 0.00 & $0.52 \pm 0.05$ & $0.05 \pm 0.01$ & $0.21 \pm 0.10$ \\
0.32 & 0.44 & 0.89 & 0.00 & $0.53 \pm 0.06$ & $0.06 \pm 0.02$ & $0.20 \pm 0.08$ \\
0.63 & 0.88 & 1.05 & 0.00 & $0.54 \pm 0.07$ & $0.06 \pm 0.02$ & $0.23 \pm 0.09$ \\
0.95 & 1.32 & 1.27 & 0.00 & $0.56 \pm 0.08$ & $0.07 \pm 0.03$ & $0.32 \pm 0.13$ \\
1.26 & 1.76 & 1.52 & 0.00 & $0.59 \pm 0.09$ & $0.08 \pm 0.04$ & $0.41 \pm 0.15$ \\
1.58 & 2.20 & 1.79 & 0.02 & $0.61 \pm 0.09$ & $0.10 \pm 0.04$ & $0.58 \pm 0.14$ \\
1.90 & 2.64 & 2.07 & 1.00 & $0.62 \pm 0.09$ & $0.10 \pm 0.04$ & $0.73 \pm 0.16$ \\
2.21 & 3.08 & 2.37 & 1.00 & $0.64 \pm 0.08$ & $0.11 \pm 0.04$ & $0.84 \pm 0.18$ \\
2.53 & 3.52 & 2.66 & 1.00 & $0.65 \pm 0.08$ & $0.12 \pm 0.05$ & $1.04 \pm 0.19$ \\
2.85 & 3.96 & 2.97 & 1.00 & $0.67 \pm 0.08$ & $0.14 \pm 0.05$ & $1.25 \pm 0.19$ \\
3.16 & 4.40 & 3.27 & 1.00 & $0.70 \pm 0.08$ & $0.16 \pm 0.06$ & $1.42 \pm 0.24$ \\
\bottomrule
\end{tabular}
\end{table}

\subsection{Impact of GMM component count on prior ratio approximation}
\label{app:ablation_gmm_components}
We conduct an additional ablation study on the influence of the number of GMM components used to approximate the prior ratio. To isolate this effect, this experiment uses a controlled setup that differs from the main experiments: we evaluate performance using one target prior and one trained diffusion model, with metrics averaged across ten observations generated from that prior. As shown in \cref{tab:ablation_gmm_components}, sampling speed remains efficient even with very large GMMs (e.g. 200 components), indicating that it is not a bottleneck at this stage. In contrast, the performance of PriorGuide is sensitive to the quality of the prior ratio approximation when using very few components. We therefore recommend that users fit a flexible and highly expressive GMM tailored to their specific application to ensure accurate and stable guidance. In practice, we see that ~20 components are enough in our experiments.

\subsection{Scalability and robustness of the prior ratio approximation} \label{app:ratio_fit}
To assess the scalability of our GMM fitting procedure for the prior ratio $r(\vtheta)$, we evaluate the quality of the approximation in a 10-dimensional setting. We use the Gaussian Linear 10$D$ setup and set $q(\vtheta)$ as a Gaussian mixture.

We execute the fitting procedure for 100 randomly sampled mixture target priors (using the procedures detailed in \cref{app:test-time-prior-generations}). For each fit, we estimate the weighted $L_2$ distance in log space, that is the $L_2$ distance between the GMM approximation log predictive density (including the fitted log normalization constant $\ln Z$) and the true analytic log ratio using 10,000 samples from the fitted GMM. The method achieves a median $L_2$ error of 0.023 (2.5th-25th-75th-97.5th percentiles: 0.002-0.016-0.032-0.186), confirming that standard gradient-based optimization consistently converges to an accurate approximation of the ratio function in this 10$D$ scenario.\footnote{Note that 0.023 error in log space corresponds to $\sim 2\%$ error on the ratio, since $e^x \approx 1+x$ for small $x$.} In practice, to address the small percentage of fits that converge to suboptimal solutions, we recommend running multiple optimizations with different initializations---which is the standard approach when solving a global nonconvex optimization problem.

To give an example of such fits, \cref{fig:rebuttal2} displays a fitted GMM ratio against the true analytic ratio in log-space. We evaluate both the true analytic ratio and the fitted GMM passing through a fixed parameter vector $\theta^*$, illustrating the function's behavior for each pair of dimensions while holding the remaining dimensions constant. The figure demonstrates that the fitted GMM (orange lines) faithfully recovers the geometry of the true ratio (dashed black lines) across these conditional slices.

\begin{figure}[ht]
    \centering
    \includegraphics[width=\linewidth]{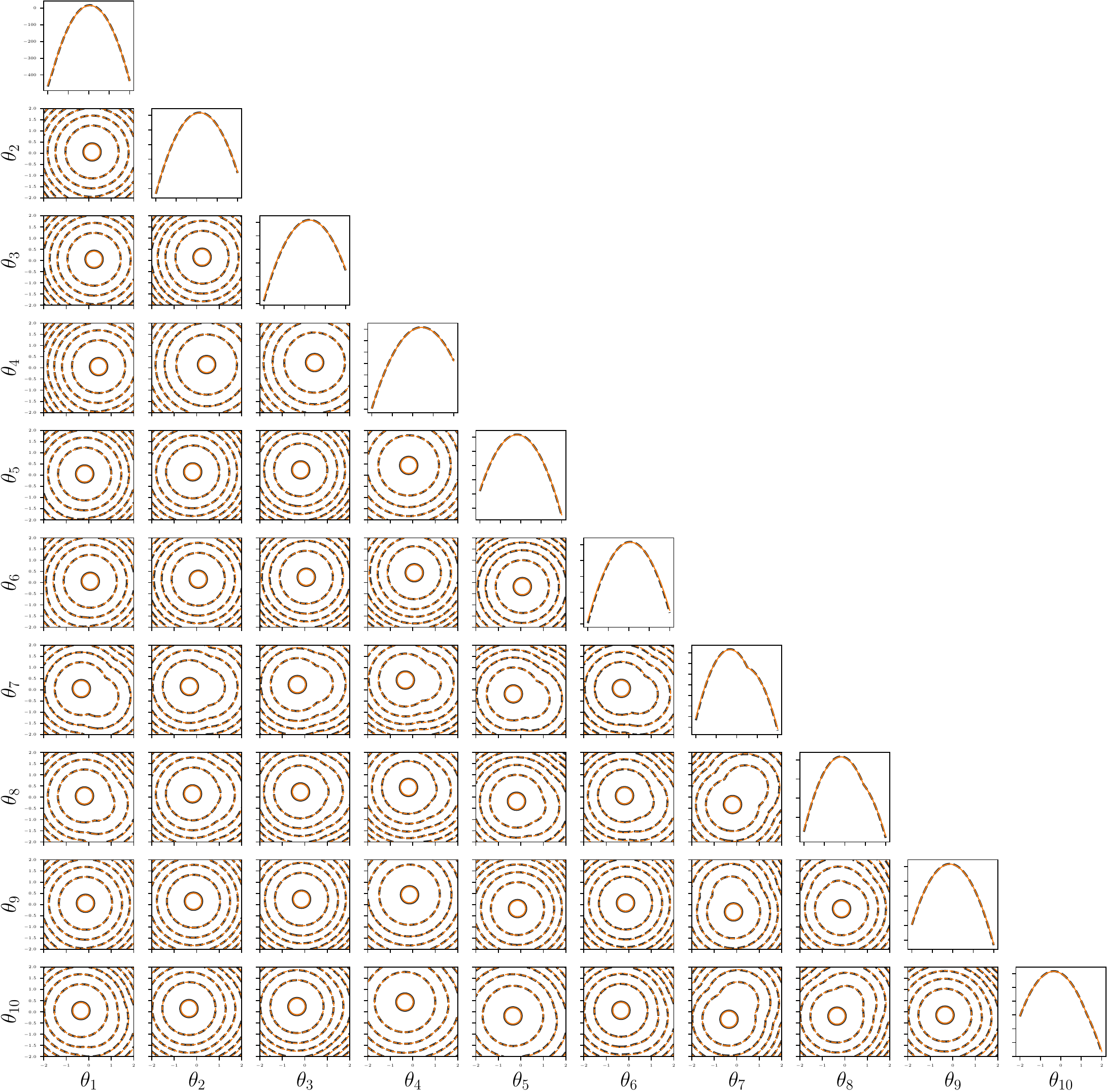}
    \caption{Example visualization of the prior ratio approximation in 10D (Gaussian Linear problem). Conditional 2D cross-sections of the true prior ratio $r(\vtheta)$ (dashed \textcolor{black}{\textbf{black}} lines) versus the fitted GMM approximation (\textcolor{orange}{\textbf{orange}} lines). Note that the target ratio is not Gaussian, as visible in some panels. The GMM approximation achieves a near-perfect fit.}
\label{fig:rebuttal2}
\end{figure}

\begin{table}[ht]
\centering
\caption{Ablation study illustrating the influence of the number of GMM components used to approximate the prior ratio.}
\label{tab:ablation_gmm_components}
\begin{tabular}{cccccc}
\toprule
Dataset & \makecell{\#GMM \\ components} & \makecell{Time cost (s) \\ per sample} & C2ST & MMTV & RMSE \\
\midrule
\multirow{3}{*}{\shortstack{Gaussian\\Linear 10D}} & 2 & $0.02 \pm 0.00$ & $0.97 \pm 0.01$ & $0.31 \pm 0.05$ & $0.20 \pm 0.03$ \\
 & 20 & $0.02 \pm 0.00$ & $0.55 \pm 0.04$ & $0.09 \pm 0.04$ & $0.23 \pm 0.07$ \\
 & 200 & $0.02 \pm 0.00$ & $0.56 \pm 0.05$ & $0.11 \pm 0.04$ & $0.28 \pm 0.11$ \\
\midrule
\multirow{3}{*}{BCI} & 2 & $0.18 \pm 0.01$ & $1.00 \pm 0.00$ & $0.72 \pm 0.04$ & $1.71 \pm 0.21$ \\
 & 20 & $0.18 \pm 0.01$ & $0.88 \pm 0.10$ & $0.57 \pm 0.24$ & $1.36 \pm 0.66$ \\
 & 200 & $0.18 \pm 0.01$ & $0.88 \pm 0.10$ & $0.57 \pm 0.24$ & $1.36 \pm 0.65$ \\
\bottomrule
\end{tabular}
\end{table}

\subsection{Bayesian causal inference (BCI) model posterior samples visualization}
\label{app:bci_viz}

In \cref{fig:bav-posterior-corner}, we visualize the ground-truth posterior distribution alongside samples generated by Simformer (no prior adaptation) and with PriorGuide applied to the same base Simformer model. We use the \texttt{corner} package~\citep{corner}, which displays pairwise joint distributions and marginal histograms of the parameters.

\definecolor{darkgreen}{rgb}{0.0, 0.5, 0.0}
\begin{figure}[ht]
    \centering
  \begin{subfigure}[b]{0.49\linewidth}
    \includegraphics[width=\linewidth]{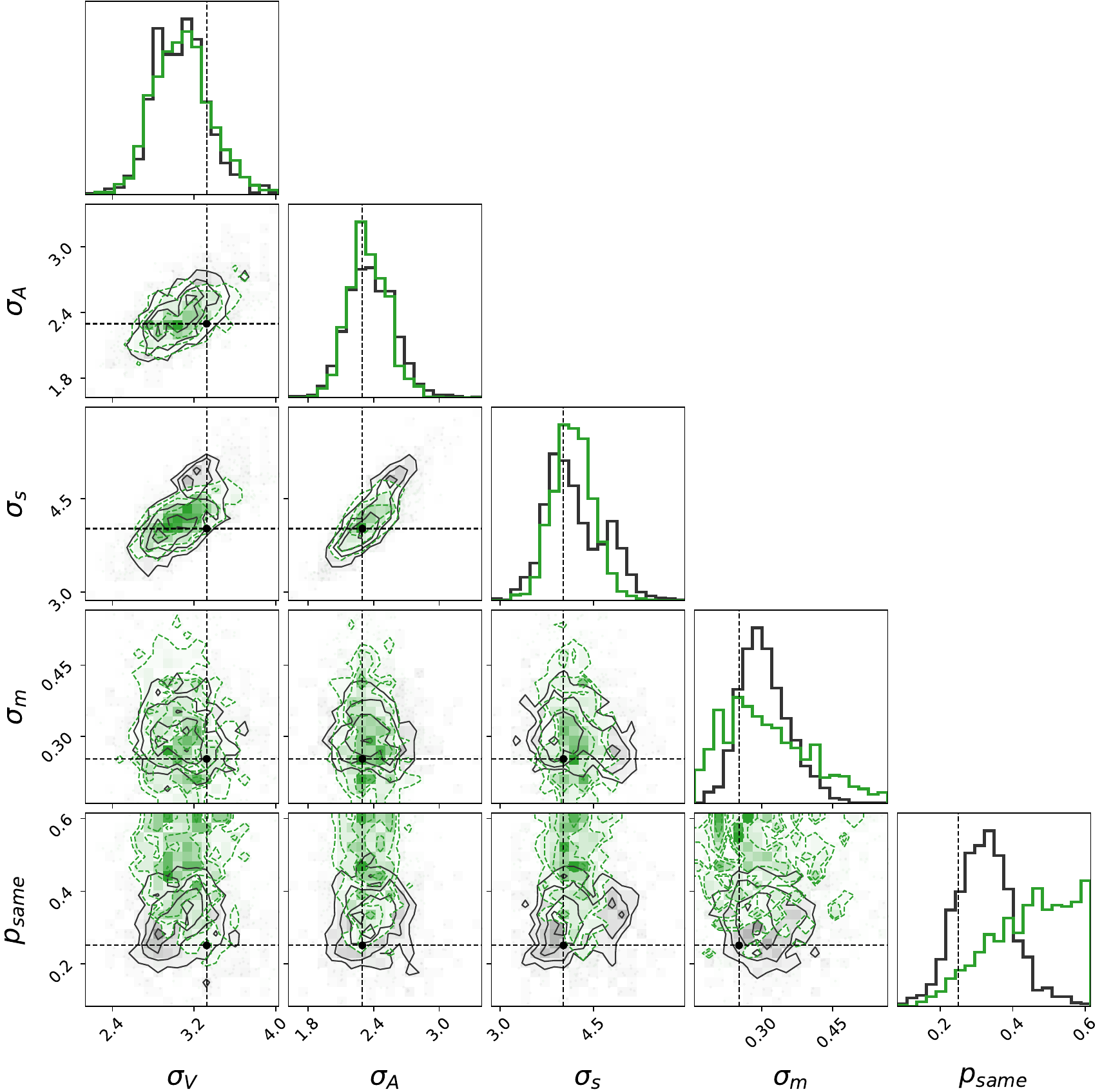}
    \caption{} %
    \label{fig:bav_corner_priorguide}
  \end{subfigure}
  \hfill
  \begin{subfigure}[b]{0.49\linewidth}
    \includegraphics[width=\linewidth]{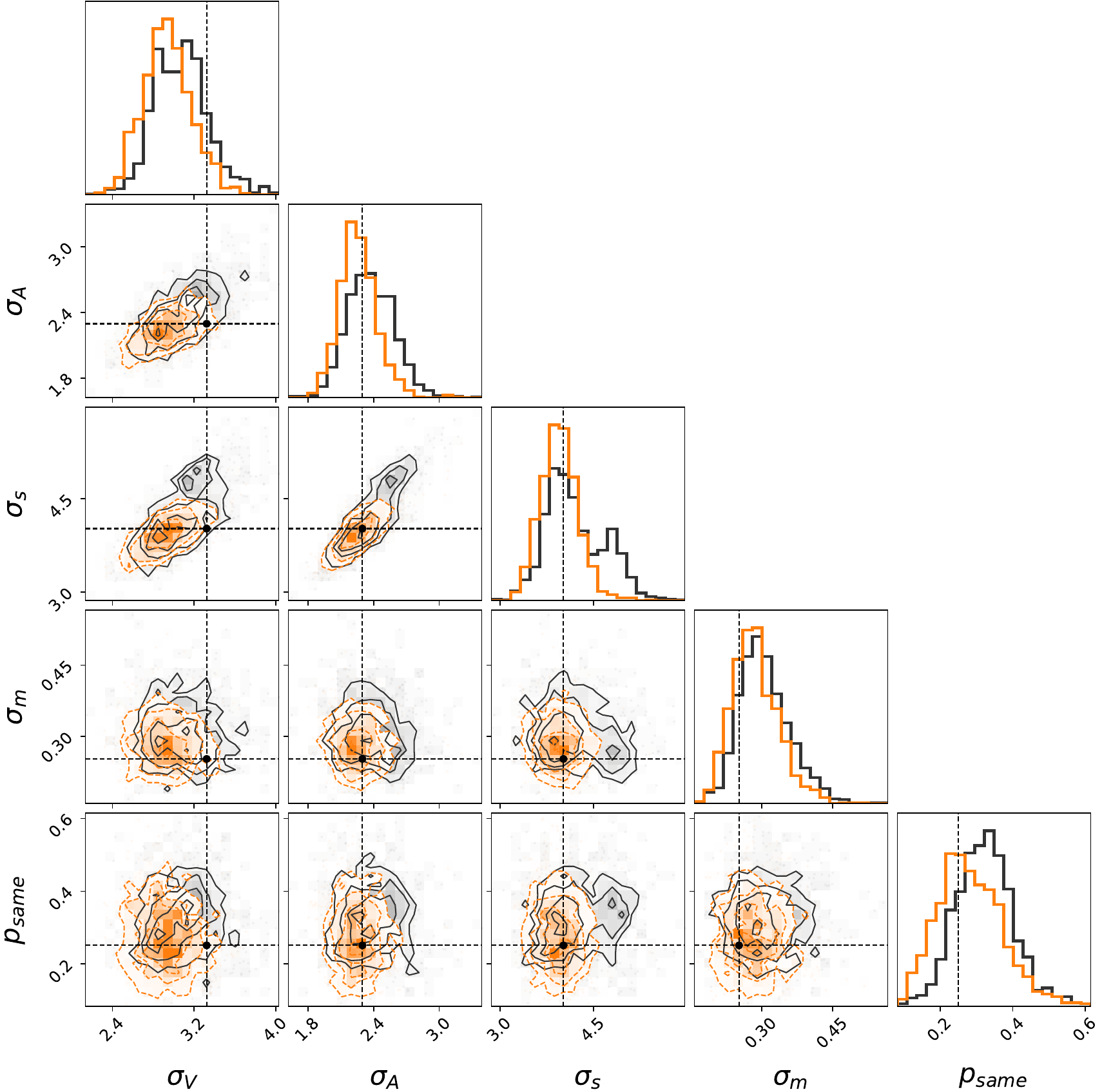}
    \caption{} %
    \label{fig:bav_corner_simformer}
  \end{subfigure}
\caption{Posterior samples for the Bayesian causal inference (BCI) model, for a randomly chosen dataset. Ground-truth samples are shown in \textbf{black}, and the dashed black lines and black dot indicate the true parameter values. \textbf{(a)} Original Simformer samples (\textcolor{darkgreen}{\textbf{green}}) without prior adaptation, and \textbf{(b)} PriorGuide samples (\textcolor{orange}{\textbf{orange}}) under a \emph{mild} prior. Incorporating prior information with PriorGuide steers the inference toward more plausible parameter regions, especially for parameters such as $\sigma_m$ and $p_\text{same}$ which are less constrained by the data alone, resulting in posterior samples that more closely match the ground-truth posterior under the same prior, as shown in (b).
}
\label{fig:bav-posterior-corner}
\end{figure}

\subsection{Training vs. test-time cost analysis} \label{app:inference_time}

We summarize in Table \ref{tab:time} the estimated wall‐clock time required by PriorGuide to produce 1,000 posterior samples under each simulator, measured on a system equipped with an Nvidia Ampere A100 GPU.
For comparison, we also report the cost of retraining a diffusion inference model (Simformer, \citep{gloeckler2024all}) with 10,000 simulations, which includes both the neural network training and the simulator call required to create its training set. 
As Table \ref{tab:time} demonstrates, PriorGuide achieves more than a tenfold speedup across most simulators, with the greatest gains observed in scenarios (\eg,~Turin) where the simulator calls are most computationally expensive. %

\begin{table}[hbt]
  \caption{A comparison of time costs (seconds) between performing PriorGuide posterior inference and retraining Simformer, including neural network training and generating new training data. }
  \label{tab:time}
  \centering
  \begin{tabular}{c c c c c c}
     \toprule
      \multirow{2}{*}{Simulator} & \multirow{2}{*}{PriorGuide testing} & \multicolumn{4}{c}{Simformer retraining and testing} \\
      \cmidrule(lr){3-6}
       &                            & Total & Simulation & Training & Sampling   \\
      \midrule
      Two Moons & 9.7 & 314.4 & 13.5 & 295.2 & 5.7 \\
      OUP & 25.1 & 305.7 & 9.2 & 289.4 & 7.1 \\
      Turin & 159.9 & 9346.6 & 8300.4 & 1037.4 & 8.8 \\
      Gaussian Linear & 16.8 & 266.1 & 3.8 & 255.2 & 7.1 \\
      BCI & 158.8 & 1063.1 & 49.1 & 1005.7 & 8.3 \\
      \bottomrule
  \end{tabular}%
\end{table}

\section{Computational resources and software}
\label{app:miscellanea}

\paragraph{Computational resources}
Most experiments presented in this work are performed on a cluster equipped with AMD MI250X GPUs, while some additional experiments in the appendix are performed on a cluster equipped with Nvidia Ampere A100 GPUs.
The total computational resources consumed for this research, including all development stages and experimental runs, are estimated to be approximately 30,000 GPU hours.

\paragraph{Software}
The core code base is built upon the \texttt{Simformer} repository (\url{https://github.com/mackelab/simformer}, License: MIT), using \texttt{JAX} (\url{https://docs.jax.dev/en/latest/}, License: Apache-2.0) and \texttt{PyTorch} (\url{https://pytorch.org/}, License: modified BSD license). 
The implementations of the Two Moons and Gaussian Linear simulators are based on \texttt{sbibm} (\url{https://github.com/sbi-benchmark/sbibm}, License: MIT).
For the ground-truth posterior generation, we utilize \texttt{sbi} (\url{https://sbi-dev.github.io/sbi/latest/}, License: Apache-2.0) and \texttt{PyVBMC} (\url{https://acerbilab.github.io/pyvbmc/}, License: BSD 3-Clause).
Our implementation of the ACE baseline uses the repository provided by \citep{chang2025amortized} (\url{https://github.com/acerbilab/amortized-conditioning-engine}, License: Apache-2.0).

\section{Use of Large Language Models}
We acknowledge the use of Large Language Models (LLMs) to support various stages of the research. In the initial phase, LLMs were utilized for inspirations, helping to explore methodological approaches and find existing works. In the development phase, LLMs served as programming assistants to aid tasks such as implementing algorithms and debugging. LLMs also supported the writing process, providing assistance with polishing the manuscript for clarity, conciseness, and grammatical correctness.

\end{document}